\pdfoutput=1
\documentclass[twoside,11pt]{article}

\usepackage{jmlr2e}
\usepackage{cs}
\usepackage{graphicx}
\usepackage{amsmath,amssymb} 
\usepackage{color}
\usepackage{eucal}
\usepackage[tight,rm]{subfigure}

\usepackage{mathsymb}
\usepackage{verbatim}
\usepackage{subfigure}
\usepackage{url}
\graphicspath{{./}{./Fig/}{./Figs/}{./CVPR2011/figs/}{./CVPR2011/caltech/}{./CVPR2012/Figs/}}

\usepackage{algorithm2e}
\let\savedalgorithm\algorithm
\let\savedendalgorithm\endalgorithm
\newenvironment{algorithmic}{%

\savedalgorithm
}{%
\savedendalgorithm
}

\usepackage{algorithm}

\newtheorem{thm}{Theorem}      

\usepackage{amsmath}
\interdisplaylinepenalty=2500

\usepackage{url}
\usepackage{multirow}
\usepackage{pifont}

\def\x{{\boldsymbol x}}
\def\w{{\boldsymbol w}}
\def\yi{{  y_i  }}
\newcommand{\mbh}{ {Multi\-Boost}-\-hinge\xspace}
\newcommand{\mbe}{ {Multi\-Boost}-\-exp\xspace}
\renewcommand{\fnorm}[2][2]{\ensuremath{ \left\| #2 \right\|_{ \mathrm{#1} } } }

\def\multiboost{{\rm MultiBoost$^{\,\ell_1}$}\xspace}
\def\multistruct{{\rm MultiBoost$^{\,\rm group}$}\xspace}
\def\multistructonevsall{{\rm MultiBoost$^{\rm group}_{\text{\sc fast} }$}}

\def\ova{{\sc fast}\xspace}

\def\ADMM{{ADMM}\xspace}

\def\yi{{y_i}}

\def\bxi{{\bx_i}}

\def\brho{{\boldsymbol \rho}}

\newcommand{\CScomment}[1]{}

\jmlrheading{09}{2012}{}{}{}
{Chunhua Shen, 
Sakrapee Paisitkriangkrai,
and Anton van den Hengel}

\ShortHeadings{A direct approach to multi-class boosting}{Shen, Paisitkriangkrai, and van~den~Hengel}
\firstpageno{1}

\begin{document}

\title{A Direct Approach to Multi-class Boosting and Extensions}

\author{\name Chunhua Shen \email chunhua.shen@adelaide.edu.au
       \\
       \addr
       The University of Adelaide\\
       Adelaide, SA 5005, Australia
       \AND
       \name Sakrapee Paisitkriangkrai \email paul.paisitkriangkrai@adelaide.edu.au
       \\
       \addr
       The University of Adelaide\\
       Adelaide, SA 5005, Australia
       \AND
       \name Anton van den Hengel \email  anton.vandenhengel@adelaide.edu.au
       \\
       \addr
       The University of Adelaide\\
       Adelaide, SA 5005, Australia}

\editor{Oct. 2012}

\maketitle

\begin{abstract} 

        Boosting methods combine a set of moderately accurate weak
        learners to form a highly accurate predictor.  Despite the
        practical importance of multi-class boosting,  it has received
        far less attention than its binary counterpart.
        In this work, we propose a fully-corrective multi-class
        boosting formulation which directly solves the multi-class
        problem without dividing it into multiple binary
        classification problems.  
        In contrast, most previous multi-class boosting algorithms
        decompose a multi-boost problem into multiple 
        binary boosting problems.  
        By explicitly deriving the Lagrange dual of the primal
        optimization problem, we are able to construct a column
        generation-based fully-corrective approach to boosting which
        directly optimizes multi-class classification performance.
        The new approach not only updates all weak learners'
        coefficients at every iteration, but does so in a manner
        flexible enough to accommodate various loss functions and
        regularizations. For example, it enables us to introduce
        structural sparsity through mixed-norm regularization to
        promote group sparsity and feature sharing.  Boosting with
        shared features is particularly beneficial  in complex
        prediction problems where features can be expensive to
        compute.  Our experiments on various data sets demonstrate
        that our direct multi-class boosting generalizes as well as,
        or better than, a range of competing multi-class boosting
        methods.  The end result is a highly effective and compact
        ensemble classifier which can be trained in a distributed
        fashion.

\end{abstract}

\begin{keywords}
        multi-class boosting,
        Lagrange duality,
        column generation,
        convex optimization,
        distributed optimization,
        alternating direction methods
\end{keywords}

\section{Introduction}
\label{sec:intro}

            A significant proportion of the most important practical classification problems
            inherently involve making a selection between a large number of classes.
            Such problems demand effective and efficient multi-class
            classification techniques.
            Unlike binary classification, which has been well researched,
            multi-class classification
            has received relatively little attention due to the inherent
            complexity of the problem.
            Some important steps have been 
            (see  \cite{Wu2004Probability, Crammer2001Algorithmic,
            Guruswami1999Multiclass} for instance),  but the primary approach
            thus far has exploited large numbers of independent
            binary classifiers.
            An example of this approach is the extension of a binary
            classification algorithm to the multi-class case by considering
            the problem as a set of one-vs-all binary classification
            problems.

            Boosting has recently attracted much research interest in
            many scientific fields due to its huge success in
            classification and regression tasks, especially
            in the first real-time face detection application \citep{Viola2004}.
            Both theoretical and empirical results show that
            boosting methods have competitive generalization
            performance compared with
            many existing classifiers in the literature.
            To explain why boosting works,
            \cite{Schapire98}
            introduced 
            an appropriate
            margin theory, which was inspired by the
            margin theory in support vector machines,
            and concluded that boosting is also an effective classifier
            which maximizes the minimum margin over the training data.
            Extending 
            this idea, LPBoost~\citep{Demiriz2002LPBoost} seeks to
            maximize the relaxed minimum margin (soft margin) using hinge loss.
            The proposed boosting algorithm is fully corrective in the
            sense that all the coefficients of learned weak
            classifiers are updated at each iteration.  Such
            fully-corrective boosting algorithms typically require
            fewer iterations to achieve convergence.
            
            Despite the significant attention that boosting-based
            binary classification methods have attracted, multi-class
            boosting has been much less well studied.
            As with multi-class classification in general, the
            most natural strategy for multi-class boosting is to
            partition the problem into a set of independent binary classification
            problems. In this scenario each binary classifier is charged with distinguishing 
            a subset of the classes against all others.  Methods such as one-vs-all, all-vs-all and output
            code-based methods belong
            to this category.
						Although such partitioning strategies greatly simplify the problem, 
						they inevitably impact upon  the final solution.  In many cases the 
						partitioning strategy changes the cost function to be optimized, 
						and thus delivers a sub-optimal solution. 
                        The all-vs-all approach, {\em a.k.a.}\
                        one-vs-one, however, has been shown to achieve
                        excellent classification accuracy. 
                        In this approach $ k (k-1) /2 $ two-way
                        pairwise classifiers are trained, with $ k$
                        the number of classes. The computation of both
                        training and testing can be 
                        prohibitively expensive even when $ k $ is
                        of medium size.   
                        More importantly, however, 
                        almost all of these strategies do not directly
                        optimize the multi-class decision function
                        that they seek to exploit.

            In this work, we proffer a direct approach to 
            fully-corrective multi-class boosting.
            In order to achieve this result, we
            generalize the concept of 
            the separating 
            hyperplane and margin in binary boosting
            to multi-class problems. 
            This allows the development of a single, fully-corrective, multi-class boosting 
            classifier which directly optimizes multi-class classification performance.
            Similar ideas have been used in multi-class support vector machines
            \citep{Crammer2001Algorithmic,Weston1999SVM,Elisseeff01akernel}.
            To our knowledge, it has not been employed to design {\it
            fully-corrective}
            multi-class boosting.
            As shown in \citep{Shen2010PAMI} fully-corrective boosting
            in general leads to more compact models. Here for the
            first time, we develop fully-corrective multi-class
            boosting.  
            
            In deriving out direct formulation we also 
            generalize the fully-corrective $\ell_1$ regularized
            boosting algorithms to arbitrary mixed-norm regularization terms.
            Mixed-norm regularization, also known as group sparsity,
            has been used when there exists a structure that separates the
            model into disjoint groups of parameters.
            For $\ell_{1,2}$-norm regularized boosting, for example, 
            each such group of parameters
            is subject to a common 
            $\ell_2$-norm regularizer.
            The key intuition behind structural sparsity is that
            informative features are commonly shared between multiple classes.
            For example, traffic warning signs have a common triangular
            shape with various symbols inside.
            These basic shared features should be used to help differentiate warning
            signs from other traffic signs while the symbols inside can be used to
            differentiate different warning signs.
            In this work, we aim to 
            enable the selection of a 
            common subset of features
            which are informative in identifying a wide range of classes.

            The key idea behind our column generation-based boosting
            approach is that, given 
            an example ${\boldsymbol x}$, with true label $y$, 
            the output of the decision function for the correct label
            must be larger than the output of
            the decision function for all incorrect labels,
            \[ F_y ( \boldsymbol x ) > F_r ( \boldsymbol x ), \,\, \forall r \neq y.\]
            We then formulate a convex optimization problem, which 
            maximizes $ F_y ( \boldsymbol x ) - F_r ( \boldsymbol x ) $ 
            subject to the selected regularization term.
            This leads to a constrained semi-infinite convex optimization problem,
            which may have infinitely many variables. In order to design a boosting
            algorithm, we explicitly derive the Lagrange dual of this problem and
            apply an iterative convex optimization technique known as column generation.
            When the hinge loss is used, our formulation can be viewed as a direct
            extension of LPBoost \citep{Demiriz2002LPBoost} to the multi-class case.
            We also discuss the
            use of the exponential and logistic loss functions.
            In theory, any convex loss function can be employed, as in
            the binary classification case.
            Note that the AnyBoost framework of \cite{anyboost} can not be adopted here
            since AnyBoost cannot cope with multiple constraints.
            In summary, our main contributions are as follows.
             \begin{itemize}
                \item
                    We propose the first direct approach to fully-corrective
                    multi-class boosting based on the generalization of
                    the conventional   ``margin'' in binary classification.
                \item
                    Within this direct, fully corrective boosting framework, we design new
                    boosting methods that
                    promote feature sharing across classes by enforcing group
                    sparsity regularization
                    (referred to as \multistruct).
                    We empirically show
                    that by enforcing group sparsity, the proposed multi-class
                    boosting converges faster
                    while achieving better or comparable generalization performance.
                    The fact that the algorithm converges fast means that fewer
                    features are required for a given classification accuracy and
                    there is a significant improvement in run-time performance.
                    Our derivation for designing multi-class boosting methods
                    is applicable to {\em arbitrary} convex loss functions with
                    general $ \ell_{1,p}$ $(p \geq 1)$ mixed-norms.
                    {\em To our knowledge, this is the first
                    fully-corrective multi-class boosting approach that
                    promotes feature sharing using group sparsity regularization}.
                    Moreover, we propose the use of
                    the alternating direction method
                    of multipliers (\ADMM) \citep{Boyd2011Distributed} to
                    efficiently solve the involved optimization problems,
                    which is much faster than using standard
                    interior-point solvers.
                \item
                    Further, a new family of multi-class boosting algorithms
                    based on a simplified formulation is proposed in order to further reduce training times.
                    This new formulation not only enables us
                    to share features and encourages structural sparsity in the learning
                    procedure of multi-class boosting, but also allows us to take
                    advantage of parallelism in \ADMM to speed up the
                    training time by a factor
                    proportional to the number of classes.
                    The training time required is thus similar to that required to
                    train multiple independent binary classifiers in parallel.
                    The proposed formulation converges significantly faster, 
                    while still enforcing
                    group sparsity.
            \end{itemize}
            Since multi-class classification can be
            seen as an instance of structured learning problems of \cite{StructSVM},
            the proposed formulation may also be applicable to other structured
            prediction problems.

            We briefly review most relevant work on multi-class boosting before
            we present our algorithms.

            \subsection{Related work}

            AdaBoost, 
            proposed in 
            \citep{Freund1997},
            was the first practical {\em binary} boosting algorithm.
            One of the limitations of 
            binary AdaBoost is that
            each weak classifier's accuracy must be higher than $ 0.5 $.
            That is, a weak classifier must exhibit classification capability superior to that of random guessing.
            AdaBoost.M1, directly extended  AdaBoost to multi-class classification
            using multi-class weak classifiers.
            Multi-class weak classifiers, such as decision trees for example, represent a restricted set 
            of weak classifiers able to give predictions on all $ k $ possible labels at
            each call.
            The fact that only multi-class weak classifiers
            can be used represents a significant restriction, as multi-class weak classifiers
            are complicated and require time-consuming training when compared
            with their simple binary counterparts. 
            The higher complexity of the assembled classifier also
            implies a higher risk of
            over-fitting the training data.
            In addition, the requirement that a
            weak classifier's weighted error must be better than $0.5$ 
            can be hard to achieve for problems with many classes.
            Note that, for a problem with $ k $ classes, random guessing can only guarantee an accuracy
            of $ 1/ k $.
            
            The SAMME algorithm of
            \cite{Zhu2009Multi},
						addressed this last issue, and 
            requires only that the multi-class weak classifiers 
            achieve an error rate better than uniform random guessing for multiple labels
            ($ 1/ k $ for $ k $ labels).
            When $ k = 2$, SAMME reduces to the standard AdaBoost, but is still 
            subject to all of the other limitations associated with the use of multi-class weak classifiers.
            
            To alleviate these difficulties
            one solution is to decompose a multi-class boosting problem into
            a set of binary classification problems.
            To this end strategies such as 
            ``one-vs-all'' and ``one-vs-one''
            have been developed.  Such
            approaches can be viewed as special
            cases of error-correcting output coding (ECOC)
            \citep{ECOC_AI,crammer2002learnability}.
            By introducing a coding matrix, AdaBoost.MO
            \citep{Schapire1999Improved}
            is a typical
            example of ECOC based multi-class boosting.
            In this approach a set of binary classifiers is used, with each 
            trained so as to recognise a subset of the classes.  By comparison of the 
            responses of all of the binary classifiers multi-class classification is achieved.
            Algorithms in this category include AdaBoost.MO
            \citep{Schapire1999Improved}, AdaBoost.OC
            and AdaBoost.ECC \citep{Guruswami1999Multiclass}.
            AdaBoost.OC can be seen as a variant of AdaBoost.MO
            which also combines boosting
            and ECOC. However, unlike AdaBoost.MO, AdaBoost.OC
            uses a collection of randomly generated codewords. For
            more details 
            see~\citep{multiboost}.

            The attraction of 
            transforming
            a multi-class classification problem into
            a set of binary classification problems
            is
            that each of the weak classifiers
            need only be a simple binary classifier.
            This approach has its limitations, however, including the fact that 
            the required optimisation problem is typically compromised by the partition, 
            and that it becomes increasingly difficult to ensure that each binary 
            classifier sees a representative sample of the data as the number of classes increases.
            An additional limitation of all partitioning algorithms we have discussed is that
            they are incapable of effectively exploiting the inevitable similarity
            between classes, and thus to efficiently share features between classifiers.
            Since
            binary classifiers are trained independently, the resulting strong
            classifier can be highly unbalanced and often dependent on an
            excessive number of features/weak classifiers.

            Several approaches have been developed which aim to enable feature-sharing within
            multi-class boosting.
            JointBoost, proposed by
            \cite{Torralba2007Sharing},
            finds common features that can be
            shared across classes using heuristics.
            Weak learners are then trained jointly using standard boosting.
            In order to reduce the number of binary classifiers which need to be
            trained for multi-class problems, the authors
            proposed an approximate search procedure based on greedy forward
            selection.  The drawback of greedy approach, however, is that it is
            short-sighted and 
            cannot recover if an error is made.
            The fact that the weak learner selected at each boosting iteration cannot be guaranteed to be globally optimal
            means that the final ensemble is highly likely to be sub-optimal.
            Zhang \etal proposed training multi-class
            boosting with sharable information patterns
            \citep{Zhang2009Finding}.  As a pre-processing step,
            they generate sharable
            patterns using data mining techniques
            and then train a multi-class boosting-based classifier using these patterns.
            The process of identifying sharable features and the 
            training procedure are thus
            de-coupled, and therefore unlikely to reach the optimal solution.
            In comparison to JointBoost and Zhang \etal's work, the method we propose selects
            weak learners systematically on the basis of structural sparsity during 
            the training process and thus, at least asymptotically, will reach the globally optimal solution.

            A related approach,
            termed GradBoost \citep{Duchi2009Boosting}, 
            also exploits a mixed-norm in order to achieve group sparsity,
            but
            does not directly
            optimize the boosting objective function. Instead,
            the algorithm updates a block of variables for optimizing a
            quadratic surrogate function in a fashion
            similar to gradient-based coordinate descent.
            It is not clear how well the surrogate approximates
            the original objective function,
            and no proof is given.
            Since the mixed-norm regularization term is not directly optimized
            either, {group sparsity is achieved heuristically
            by a combination of forward selection and backward elimination}.
            Our work fundamentally differs from \citep{Duchi2009Boosting} in that
            we directly optimize the group sparsity regularized objective by
            following the column generation based boosting
            \citep{Shen2010PAMI} without deferring to heuristics. 

            Our work here can also be seen as an extension of the general binary
            fully-corrective
            boosting framework of \cite{Shen2010PAMI} to the multi-class case.
            As in~\citep{Shen2010PAMI}, we design a feature-sharing boosting method using a direct formulation, 
            but for multi-class problems and using a more sophisticated
            group sparsity regularization.
            Note that the general boosting framework of
            \cite{Shen2010PAMI} is not directly applicable
            in our problem setting.

\subsection{Notation}
				A bold lowercase letter ($ \boldsymbol u $) denotes
        a column vector, and an uppercase letter ($ U$) a matrix.
        $ \trace(U)$ represents the trace of a symmetric matrix.
        An element-wise inequality between two vectors or matrices such as
        ${\boldsymbol u} \geq {\boldsymbol v}$ implies that 
        $ u_i \geq  v_i $ for all $i$.

        Let $  ( \x_i; y_i ) \in \Real^d \times
        \{ 1,\ldots, k \}, i = 1\ldots m $,
        be a set of $ m $
        multi-class training examples, where $k$ denotes the number of classes.
        We denote by ${\cal H}$ a set of weak classifiers (or dictionary);
        note that the size of ${\cal H}$ can be infinite.
        Each $ h_j ( \cdot ) \in {\cal H}, j = 1 \dots n$,
        is a function that maps an input $ \x $ to
        $\{-1, +1 \} $. Although our discussion applies equally in the general case where
        $ h( \cdot) $ make take  any real value, we use binary weak classifiers
        in this work.
        The matrix $ H \in \Real^{m \times n } $ 
        captures
        the weak classifiers' responses to the whole of the training data;
        that is $ H_{ij} = h_j ( \x_i) $.
        Each column $ H_{:j} $ thus represents the output of the weak
        classifier $ h_j(\cdot) $ when applied to the entire training set and
        each row $ H_{ i:} $ the responses of all of the weak classifiers
        to the $ i$th training datum $ \x_i $.

        Boosting algorithms learn a strong classifier of the form
        $ F( \x ) = \sum_{ j = 1} ^ n w_j h_j ( \x )  $
        which is parameterized by a vector $ \w \in \Real^n $.
        In 
        our formulation of the problem
        we need to learn a classifier for
        each class. So for class $ r $ (where $r = 1, \ldots, k $), 
        the learned strong classifier is
        $ F_r( \x ) = \sum_{ j = 1} ^ n w_{r,j} h_j ( \x )  $
        and has  parameter vector $ \w_r $.
        We define $ W =  [ \w_1, \w_2, \ldots, \w_k ] \in \Real^{ n \times k}$
        and let
        $ \fnorm[1]{W} = \sum_{ij} | W_{ij} | $ represent the $ \ell_1 $ norm.
        The $ \ell_{1,2}$ norm of a matrix is defined as
        $ \| W \|_{1,2}  =  \sum_j \|  W_{j:} \|_2 $ with $ \| \cdot \|_2$ being
        the $ \ell_2 $ norm.
        The $ \ell_{1, \infty}$ norm of $ W $ is 
        $ \| W \|_{1, \infty}  =  \sum_j \max ( W_{j:} ) $.

        Here we assume that the weak classifier dictionary for each class
        is the same.
        The final strong classifier is a weighted average of multiple
        weak classifiers, and the estimated classification for a
        test datum $ \bx $ is
        $
        F(\bx) = \argmax_{ r = 1, \dots k } \, \sum_{j=1}^{n} w_{ r ,j} h_j(\bx).
        $

        The remaining content is structured as follows.
        Section \ref{sec:2} presents the main algorithm of our work.
        In particular, we beginning by deriving our algorithm with
        $\ell_1$ penalty for
        the piece-wise linear hinge loss and exponential loss functions.
        Then we discuss group sparsity and derive our algorithm
        with the new structural sparsity for both hinge loss
        and logistic loss.
        We present our experimental results
        in Section \ref{sec:exp} and conclude 
        in Section \ref{sec:conc}.

\section{A direct formulation for multi-class boosting}
\label{sec:2}

        In binary classification, the margin is defined as
        $  y F ( \boldsymbol x )   $ with $ y \in \{-1, +1 \} $.
        In the framework of maximum margin learning,
        one tries to maximize the margin $  y F ( \boldsymbol x )   $
        as much as possible. A large margin implies the learned classifier
        confidently classifies the corresponding
        training example.
        We show how  this idea can be generalized to multi-class
        problems in this section.

\subsection{MultiBoost with $\ell_1$-norm regularization (\multiboost) }

\paragraph{The hinge loss}
        Let us consider the hinge loss case, which is piecewise linear and
        therefore makes it easy to derive our formulation. As we will show,
        both the primal and dual problems are linear programs (LPs), which
        can be globally solved in polynomial time.
        The basic idea is to learn classifiers by pairwise comparison.
        For a training example $ (\x, y) $, if we have a perfect classification
        rule, then the following holds
        \[
            F_y(\x) > F_r(\x), \text{ for any } r \neq y.
        \]
        In the large margin framework with the hinge loss,
        ideally
        \begin{equation}
            \label{EQCVPR11:1}
             F_y(\x) \geq 1 + F_r(\x), \text{ for any } r \neq y,
        \end{equation}
        should be satisfied.
        This means that the correct label is supposed to have
        a classification confidence that is larger by at least a unit than
        any of the confidences for the other predictions.
        This extension of ``margin'' to the multi-class case
        has been introduced in support vector machines
        \citep{Weston1999SVM,Elisseeff01akernel}.
        As pointed out in \cite{Weston1999SVM},
        to formulate multi-class problems as a pairwise ranking
        problem in a single optimization can be more powerful
        than to solve a bunch of one-vs-all  binary classifications.
        The argument is that we may generate a multi-class data set
        that can be classified perfectly,
        but for which the training
        data cannot be separated with no error by one-vs-all.
        Recent work in \citep{Multiclass2012NIPS} theoretically proved
        that the direct approach to multi-class classification
        essentially contains the hypothesis classes of one-vs-all.
        Also because the estimation errors of these two methods are roughly
        the same, the direct approach dominates one-vs-all
        in terms of achievable classification performance.

        By introducing the indication operator $ \delta_{ s , t} $
        such that $ \delta_{s, t} = 1 $ if $ s = t $
        and $ \delta_{s, t}= 0 $ otherwise, the above equation can be simplified as
        \begin{equation}
            \label{EQCVPR11:2}
             \delta_{r, y} +  F_y(\x) \geq 1 + F_r(\x), \forall r=1,2,\ldots, k.
        \end{equation}
        We generalize this idea to the entire training set and introduce slack
        variables $ {\boldsymbol \xi} $ to enable soft-margin.
        The primal problem that we want to optimize can then be written as
        \begin{align}
            \label{EQCVPR11:P1}
            \min_{ W, {\boldsymbol \xi} } \; 
            &
            \sum_{i=1}^m \xi_i  + \nu \fnorm[1]{W}
                      \notag
                      \\
                \; \st \;  &
                \delta_{r,\yi} +  H_{i:} \w_\yi \geq 1 + H_{i:} \w_r - \xi_i, \forall i, r,
                \;
                W \geq 0. 
        \end{align}
        Here $ \nu > 0 $ is the regularization parameter.
        $ {\boldsymbol \xi} \geq 0 $ always holds.
        If for a particular $ \x_i $, $ \xi_i $ is negative, then
        one of the constraint in \eqref{EQCVPR11:P1} that corresponds to the case $ r = y_i $
        will be violated. In other words, the constraint corresponding to the case
        $ r = y_i $ ensures the non-negativeness of $ \boldsymbol \xi $.
        Note that we have one slack variable for each training example.
        It is also possible to assign a slack variable to each constraint in \eqref{EQCVPR11:P1}.
        We derive its Lagrange dual, similar to case of LPBoost \citep{Demiriz2002LPBoost}.
    The Lagrangian of problem \eqref{EQCVPR11:P1} can be written as
    \begin{align*}
        L
        = \sum_i \xi_i + \nu \sum_{j,r} W_{jr}
        -  \sum_{i,r}U_{ir} \cdot
        \bigl( \delta_{r, y_i}  +
        H_{i:} \w_{y_i} - 1 - H_{i:} \w_r + \xi_i
        \bigr) - \trace(V^\T W),
    \end{align*}
    with $ U \geq 0$, $ V \geq 0$.
    At optimum, the first derivative of the Lagrangian w.r.t. the primal variables
    must vanish,
    \begin{equation}
        \label{EQCVPR11:A1}
         \frac{\partial L}{\partial \xi_i} = 0
       \longrightarrow  \sum_r U_{ir} = 1, \forall i.
    \end{equation}
    Also,
    \begin{align}
        \label{EQCVPR11:A2}
        \frac{\partial L}{\partial \w_r } &= {\boldsymbol 0}
        \longrightarrow
        \nu 1^\T + \sum_r U_{ir} H_{i:} - \sum_{i,r=y_i}
        \underbrace{\Bigl( \sum_{l} U_{il} \Bigr)}_{=1, \text{ due to } \eqref{EQCVPR11:A1} }
        H_{i:} = V_{ r: },
    \end{align}
    which leads to
    $
     \sum_i U_{ir} H_{i:} - \sum_i \delta_{  r, y_i } H_{i:} \geq
              -  \nu {\boldsymbol 1}^\transpose, \forall  r.
    $
So the Lagrange dual can be written as:\footnote{Strictly speaking,
    this is one of the Lagrange duals of the original primal because
    some transformations from the standard form have been performed.
}
        \begin{align}
            \label{EQCVPR11:D1}
            \min_{ U } \;& \sum_{r=1}^k \sum_{i=1}^m  \delta_{r ,y_i} U_{ir}
                      \notag
                      \\
                \; \st \; &
                 \sum_i \left( \delta_{  r,  y_i } - U_{ir} \right) H_{i:}
                        \leq
                              \nu {\boldsymbol 1}^\transpose, \forall  r,
                      \sum_r U_{ir} = 1, \forall i; \;
                      U \geq 0. 
        \end{align}
        Each row of the matrix $ U $ is normalized.
        The first set of constraints
        can be infinitely many:
        \begin{equation}
            \label{EQCVPR11:100AA}
            \sum_i \left( \delta_{  r,  y_i } - U_{ir} \right) h ( {\boldsymbol x}_i )
                        \leq
                        \nu, \forall  r, \text{ and } \forall h(\cdot) \in {\mathcal H}.
        \end{equation}
        We can now use column generation to solve the problem, similar to the
        LPBoost \citep{Demiriz2002LPBoost}.
        The subproblem for generating weak classifiers is
        \begin{equation}
            \label{EQCVPR11:sub1}
        h^*( \cdot )
        =  \argmax_{ h( \cdot ) \in \mathcal{H}, r }
            \sum_{ i = 1}^m \left(
                             \delta_{r , y_i} - U_{ir}
                            \right)
                            h( \x_i).
        \end{equation}
        The matrix $ U \in \Real^{m \times k }$
        plays the role of measuring importance of a training example.
        The following algorithm can be used to implement our hinge loss based
        \multiboost.

\SetKwInput{KwInit}{Initilaize}
\SetVline
\linesnumbered

\begin{algorithm}[t]
\caption{\multiboost with the hinge loss.
}
\begin{algorithmic}
   \KwIn{

     1)    A set of examples $\{\bx_i,y_i\}$, $i=1 \cdots m$;
     $
     \;
     $
     \\2)    The maximum number of weak classifiers, $T$;
   }

   \KwOut{
      A multi-class classifier $F(\bx) = \argmax_r \sum_{j=1}^{T} w_{r,j} h_j(\bx)$;
}

\KwInit {

   1)      $t \leftarrow 0$;
   \\2)      Initialize sample weights, $U = 1/k$;
}

\While{ $t < T$ }
{
  1) Find the weak classifier by solving the subproblem \eqref{EQCVPR11:sub1};
  \\2) If the stopping criterion has been met, we exit the loop.\\
   \If{ 
   $ \sum_{i=1}^m \left[ \delta_{r,y_i} - U_{ir} \right] h(\bx_i) 
    < \nu  + \epsilon $ } { break; }
  3) Add the best weak classifier, $h_t(\cdot)$, into the primal problem;
  \\4) Solve the primal problem \eqref{EQCVPR11:P1} using
       a primal-dual interior-point LP solver such as  \citet{mosek},
       such that
       the dual solution is also available.
  \\5) $t \leftarrow t + 1$;
}

\end{algorithmic}
\label{ALGCVPR11:alg1}
\end{algorithm}

\paragraph{The exponential loss}

    Now let us consider the exponential loss in the section.
    In the case of the exponential loss,
    We may write the primal optimization problem as
      \begin{align}
            \label{EQCVPR11:PExp}
            \min_{ W } \;&
            \sum_{i=1}^m
            \sum_{r=1}^{k}
           \exp
           \left[ - \bigl(
                    H_{i:} \w_\yi -  H_{i:} \w_r
                    \bigr)
           \right]
                    + \nu' \fnorm[1]{W}, 
                \; \st \;
                W \geq 0.
        \end{align}

        We define a set of margins associated with a training example as
    \begin{equation}
        \label{EQCVPR11:margin}
        \rho_{i,r} =
                   H_{i:} \w_\yi -  H_{i:} \w_r,
                   \; r = 1,\dots, k.
    \end{equation}
    Clearly only when $ \rho_{i,r} \geq 0 $, will the training example
    $ \x_i $ be correctly classified.
    We consider the logarithmic version of the original
    cost function, which does not change the problem because
    $ \log( \cdot ) $ is strictly monotonically increasing.
    So we write \eqref{EQCVPR11:PExp} into
    \begin{align}
        \label{EQCVPR11:PExp2}
            \min_{ W, \boldsymbol \rho } \;
            \log \Bigl(
            \sum_{i=1}^m
            \sum_{r=1}^k
            \exp \left[
            -  \rho_{i,r}      \right]
            \Bigr)
                    + \nu \fnorm[1]{W}
                \; \st \;
                \;
                \rho_{i,r}   =
                    H_{i:} \w_\yi -  H_{i:} \w_r,
                    \forall i, \forall r,
                \; W \geq 0. 
    \end{align}
    The dual problem can be easily derived:
      \begin{align}
            \label{EQCVPR11:D2}
            \min_{ U } \; &
            \sum_{r=1}^k \sum_{i=1}^m U_{ir} \log U_{ir}
                      \notag
                      \\
                 \; \st \;&
                 \sum_i \left[
                 \delta_{r, y_i }
                 \bigl(
                 {\textstyle \sum_{l=1}^k }  U_{il}
                                  \bigr)
                        - U_{ir} \right] H_{i:}
                        \leq
                              \nu {\boldsymbol 1}^\transpose, \forall
                              r; \;
                      \sum_{i,r} U_{ir} = 1, U \geq 0. 
        \end{align}

    We can see that the dual problem is a Shanon entropy maximization problem.
    The objective function of the dual encourages the weights $ U $ to be uniform.
    The KKT condition gives the relationship between the optimal primal and
    dual variables:
    \begin{align}
        \label{EQCVPR11:K2}
        U_{ir}^\star
        =  \frac{\exp( - \rho_{i,r}^\star ) }{ \sum_{i,r} \exp( - \rho_{i,r}^\star ) },\
        \forall i, r.
    \end{align}
    Different from the case of the hinge loss, here $ U $ is normalized as an entire
    matrix. Also we can solve the primal problem using simple (Quasi-)Newton, which is much
    faster than to solve the dual problem using convex optimization solvers.
    Note that the scale of the primal problem is usually smaller than the dual
    problem. After obtaining the primal variable, we can use the KKT condition to
    get the dual variable.
    The subproblem that we need to solve for generating weak classifiers
    also slightly differs from \eqref{EQCVPR11:sub1}:
    \begin{equation}
        \label{EQCVPR11:sub2}
         h^*( \cdot )
            =  \argmax_{ h(\cdot) \in \mathcal{H}, r }
            \sum_{ i = 1}^m \Bigl(
                             \delta_{r , y_i}
                             \bigl(
                 {\textstyle
                        \sum_{l=1}^k }  U_{il}
                      \bigr)
                 - U_{ir}
                 \Bigr)
                 h( \x_i).
    \end{equation}

\paragraph{General convex loss}

\def\lambda{ { \Theta } } 

    We generalize the presented idea to {\em any smooth convex} loss functions in
    this section.
    Suppose $ \lambda( \cdot ) $ is a smooth convex function defined in
    $\Real$. For classification problems, $  \lambda( \cdot )  $ is usually
    a convex surrogate of the non-convex zero-one loss.
    As in the exponential loss case, we introduce a set auxiliary variables
    that define the margin as the pairwise difference of prediction scores.
    This auxiliary variable is the key to lead to the important Lagrange dual,
    on which the fully-corrective boosting algorithms rely.

    The optimization problem can be formulated as
    \begin{align}
        \label{EQCVPR11:PGenera}
        \min_{W, \boldsymbol \rho}
        \;
        \sum_{i=1}^m \sum_{r=1}^k \lambda( - \rho_{i,r}  )
        + \nu \fnorm[1]{W}
        \;
                \st \;&
                \eqref{EQCVPR11:margin}, {\rm and }\;\,
                W \geq 0.
    \end{align}
    The Lagrangian is
    \begin{align*}
        L = 
        \sum_{i,r} \lambda( - \rho_{ i, r }  )
        -  \trace(V^\T W)
        +  \sum_{i,r} U_{ir}  (  H_{i:}
        {\boldsymbol w}_{y_i} - H_{i:} {\boldsymbol w}_r  )
        -  \sum_{i,r} U_{ir} \rho_{i,r} + \nu \sum_{j,r }  W_{jr}.
    \end{align*}
    We can again write its Lagrange dual as
     \begin{align}
            \label{EQCVPR11:General2}
            \min_{ U } \;
            \sum_{r=1}^k \sum_{i=1}^m  \lambda^* ( - U_{ir} )
                 \; \st \;
                 \sum_i \left[
                 \delta_{r, y_i }
                 \bigl(
                 {\textstyle \sum_{l=1}^k }  U_{il}
                                  \bigr)
                        - U_{ir} \right] H_{i:}
                        \leq
                              \nu {\boldsymbol 1}^\transpose, \forall  r,
        \end{align}
    where
    $ \lambda^* ( \cdot ) $ is the Fenchel dual function of $ \lambda ( \cdot ) $
    \citep{Boyd2004Convex}.
    Note that $ \lambda^* ( \cdot ) $ is always convex even if the original
    loss function $ \lambda ( \cdot) $ is non-convex. The difference is that the duality
    gap is not zero when $ \lambda ( \cdot) $ is non-convex.
    The KKT condition establishes the connection between the dual variable $ U $
    and the primal variable at optimality:
    \begin{align}
        \label{EQCVPR11:KKT100}
        U_{ir}^\star = -   \nabla    \lambda  ( \rho_{i, r}^\star ).
    \end{align}
    So we can actually solve the primal problem and then recover the
    dual solution from the primal.
    From \eqref{EQCVPR11:KKT100}, we know that the weight $ U $ is typically
    non-negative for classification problems because the classification loss
    function $ \lambda ( \cdot ) $ is  monotonically decreasing
    and its gradient is non-positive.

\def\lambda{\theta} 

In the next section we formulate the multi-class boosting algorithm using mixed norm regularization.
We maximize the same margin defined in the previous section.

\subsection{MultiBoost with group sparsity (\multistruct)}

\paragraph{The hinge loss with $\ell_{1,2}$-norm regularization}

    Given training samples, our goal is to minimize the multi-class hinge
    loss with $ \ell_{1,2}$ mixed-norm regularization.
    The primal problem can be written as
    \begin{align}
    \label{EQCVPR12:hinge1}
    \min_{ W, \bslack }   \;
    &
    \sum_{i=1}^{m} \xi_i + \nu  \| W  \|_{1,2},    
    \; \st \; 
    \delta_{r,\yi} + H_{i:} \bw_{\yi} \geq
    1 + H_{i:} \bw_{r} - \xi_i, \forall i, \forall r;
    \; 
    W \geq 0; \; \bslack \geq 0.
    \end{align}
    Here $\nu > 0$ is the regularization parameter.
    We rewrite \eqref{EQCVPR12:hinge1}
    by introducing an auxiliary variable $V$:
    \begin{align}
    \label{EQCVPR12:hinge2}
    \min_{ W, V, \bslack }   \;
    &
    \sum_{i=1}^{m} \xi_i + \nu  \| V  \|_{1,2}    \\ \notag
    \st \; &
    \delta_{r,\yi} + H_{i:} \bw_{\yi} \geq
    1 + H_{i:} \bw_{r} - \xi_i, \forall i, r; 
    \; 
    V = W; \; W \geq 0; \; \bslack \geq 0.
    \end{align}
    This auxiliary variable $ V $ splits the regularization term from the
    classification loss, and plays a critical role in deriving the
    meaningful dual problem.
    Actually $\bslack \geq 0 $ is automatically satisfied since the
    constraint, corresponding to the case $r = \yi$, ensures the
    non-negativeness of $\bslack$.
    The Lagrangian can then be written as
    \begin{align*}
    L = &
    \sum_{i=1}^m \xi_i + \nu \|  V \|_{1,2}
    -\sum_{i,r} U_{ir}   ( \delta_{r,\yi} + H_{i:} \bw_{\yi} - 1
    - H_{i:} \bw_{r} + \xi_i)
    \\ \notag
    &
    -
    \trace( Q^\T (\nu W - \nu V) ) - \trace( P^\T W ),
    \notag
    \end{align*}
    where $W$, $V$ and $\bslack$ are primal variables and
    $U$, $P$ and $Q$ are dual variables (with $U \geq 0$ and $P \geq 0$).
    At optimum, the first derivative of the Lagrangian w.r.t.\ the primal variables, $\bslack$, must vanish,
    $   {\partial L}/{\partial \xi_i} = 0 \rightarrow
    {\textstyle \sum_{r}} U_{ir} = 1, \forall i.
    $
    The first derivative w.r.t.\ each column of $W$ must also be zeros:
    \begin{align}
    \label{EQCVPR12:hinge5}
    \frac{\partial L}{\partial \bw_r }  =  {\bf 0}  
    & \to
    \sum_{i} U_{ir} H_{i:} - \sum_{i,r=y_{i}}
    \underbrace{\Bigl[ {\textstyle \sum_{l} } U_{il}\Bigr]  }_{= 1} H_{i:} = P_{:r} - \nu Q_{:r}
    \\ \notag
    & \to
    {\textstyle \sum_{i}} U_{ir} H_{i:} - {\textstyle \sum_{i}} \delta_{r,\yi} H_{i:} \geq -\nu
    Q_{:r}.
    \end{align}
    The infimum over the primal variables $V$ can be expressed as
    \begin{align}
    \label{EQCVPR12:hinge6}
    \inf_{V} L &= \inf_{V} -\nu \left<Q,V \right>
    + \nu \| V \|_{1,2}
     \\ \notag
    &= -\nu {\textstyle \sum_{j}} \sup_{V_{j:}} \left<Q_{j:},V_{j:} \right>
          + \nu {\textstyle \sum_{j}} \| V_{j:} \|_{2} \\ \notag
    &= -\nu {\textstyle \sum_{j}}
                           \Bigl[ \sup_{V_{j:}} Q_{j:}^{\T} V_{j:}
        - \| V_{j:} \|_{2} \Bigr]
        \\ \notag &
    = -\nu {\textstyle \sum_{j}}
    \begin{cases}
    0 & \mbox{if } \| Q_{j:} \|_{2} \leq 1, \forall j,  \\
    \infty & \mbox{otherwise}.
    \end{cases}
    \end{align}
    Note that we use the fact that the convex conjugate of $\|
    V_{j:} \|_{2}$ is the indicator function
    of the dual norm unit ball \citep{Boyd2004Convex}.
    Hence the Lagrange dual can be written as
    \begin{align}
    \label{EQCVPR12:hinge7}
    \min_{ U, Q }   \;
    &
    \sum_{i,r} U_{ir} \delta_{r,\yi}    \\ \notag
    \st \; &
    {\textstyle \sum_i} (\delta_{r,\yi} - U_{ir} ) H_{i:}  \leq \nu Q_{:r}, \forall r; 
    \;
    {\textstyle \sum_r} U_{ir} = 1, \forall i;
    \, U \geq  0;
    \; \| Q_{j:} \|_{2} \leq 1, \forall j.
    \end{align}
    Since there can be infinitely many constraints, we need to use column generation
    to solve \eqref{EQCVPR12:hinge7} \citep{Demiriz2002LPBoost}.
    The subproblem for generating weak classifiers is
    \begin{align}
    \label{EQCVPR12:hinge8}
    h^{\ast}(\cdot) = \argmax_{h(\cdot)\in \mathcal{H}, r}
    \;
    {\textstyle \sum}_{i=1}^m ( \delta_{r,\yi} - U_{ir} ) h(\bxi). 
    \end{align}
    $ h^{\ast}(\cdot) $ is the one that most violates the first constraint
    in the dual \eqref{EQCVPR12:hinge7}.
    The idea of column generation is that instead of solving the
    original problem with prohibitively large number of constraints,
    we consider instead a small subset of entire variable sets.  The
    algorithm begins by finding a variable that most violates the dual
    constraints, \ie, the solution to \eqref{EQCVPR12:hinge8}, which
    corresponds inserting a
    primal variable into \eqref{EQCVPR12:hinge1} or \eqref{EQCVPR12:hinge2}.
    The process continues as
    long as there exists at least one constraint that is violated
    for \eqref{EQCVPR12:hinge7}.
    The algorithm terminates when we cannot find such a violated
    constraint.
    As in AdaBoost, the matrix $U \in {\mathbb R}^{m
    \times k}$ plays the role of measuring the importance of the training
    samples.  The weak classifier which maximizes \eqref{EQCVPR12:hinge8}
    is selected in each iteration.

\paragraph{The hinge loss with $\ell_{1,\infty}$-norm regularization}

    Similarly, the $\ell_{1,\infty}$-norm regularized primal can be written as
    \begin{align}
    \label{EQCVPR12:hinge9}
    \min_{ W, V, \bslack }   \;
    &
    \sum_{i=1}^{m} \xi_i + \nu  \| V  \|_{1,\infty}    \\ \notag
    \st \; &
    \delta_{r,\yi} + H_{i:} \bw_{\yi} \geq
    1 + H_{i:} \bw_{r} - \xi_i, \forall i, \forall r; 
    \; 
    V = W;  \; W \geq 0;  \; \bslack \geq 0.
    \end{align}
    The Lagrangian of \eqref{EQCVPR12:hinge9} can be written as
    \begin{align}
    L = & \sum_{i=1}^m \xi_i + \nu \|  V \|_{1,\infty}
    -\sum_{i,r} U_{ir}   ( \delta_{r,\yi} + H_{i:} \bw_{\yi} - 1
    - H_{i:} \bw_{r} + \xi_i)  -
          \trace( Q^\T (\nu W - \nu V) ) 
          \notag 
          \\
          &
          - \trace(P^\T W),
    \notag
    \end{align}
    with $U \geq 0$ and $P \geq 0$.
    For $\ell_{1,\infty}$-norm, the infimum over the primal variables $V$ can be expressed as,
    \begin{align}
    \inf_{V} L &= \inf_{V} -\nu \left<Q,V \right>
           + \nu \| V \|_{1,\infty}
             \\ \notag
            &= -\nu {\textstyle \sum_{j}} \sup_{V_{j:}} \left<Q_{j:},V_{j:} \right>
                 + \nu {\textstyle \sum_{j}} \| V_{j:} \|_{\infty}
           \\ \notag
           &= -\nu {\textstyle \sum_{j}}
                                   \Bigl[ \sup_{V_{j:}} Q_{j:}^{\T} V_{j:}
                - \| V_{j:} \|_{\infty} \Bigr]
                \\ \notag
            &= -\nu {\textstyle \sum_{j}}
            \begin{cases}
            0 & \mbox{if } \| Q_{j:} \|_{1} \leq 1, \forall j,  \\
            \infty & \mbox{otherwise}.
            \end{cases}
    \end{align}
    Here we make use of the fact that,
    \begin{align}
    f^{\ast}( \by) = \sup_\bx ( \by^{\T} \bx - \| \bx \| ) =
    \begin{cases}
            0 & \mbox{if } \|  \by \|_{\ast} \leq 1,  \\
            \infty & \mbox{otherwise},
            \end{cases}
    \end{align}
    where $\| \cdot \|_{\ast}$ is dual norm of $\| \cdot \|$.\footnote{
    We note here that $\ell_{p}$ norm in primal corresponds to
    $\ell_{q}$ norm in dual with ${1}/{p} + {1}/{q} = 1$.
    For example, the Euclidean norm, $\| \cdot \|_{2}$ is dual to itself and
    the $\ell_1$-norm, $ \| \cdot \|_{1}$ is dual to the $\ell_{\infty}$-norm,
    $ \| \cdot \|_{\infty}$.}
    Hence we can derive its corresponding dual as,
    \begin{align}
    \min_{ U,Q }   \;
    &
    \sum_{i,r} U_{ir} \delta_{r,\yi}
    \\ \notag
    \st  &
    {\textstyle \sum_i} (\delta_{r,\yi} - U_{ir} ) H_{ij}
        \leq \nu Q_{:r}, \forall r;
    \;
    {\textstyle \sum_r} U_{ir} = 1, \forall i; \; U \geq 0;
    \;
    \| Q_{j:} \|_{1} \leq 1, \forall j.
    \end{align}
    From the dual problem we see that the only difference between $\ell_{1,2}$-norms and
    $\ell_{1,\infty}$-norms is in the norm of the last constraint.
    This is not surprising since $\ell_{p}$ norm in primal corresponds to
    $\ell_{q}$ norm in dual with ${1}/{p} + {1}/{q} = 1$.

\paragraph{The logistic loss with $\ell_{1,2}$-norm and $\ell_{1,\infty}$-norm regularization}

    In this secion, we consider the logistic loss with a mixed-norm regularization.
    The learning problem for the logistic loss in an $\ell_{1,2}$ regularization
    framework can be expressed as
    \begin{align}
        \label{EQCVPR12:logPrimalL12}
            \min_{ W, V, \brho } \;
            &
            \frac{1}{mk} \sum_{i=1}^{m} \sum_{r=1}^{k}
                \log \bigl( 1 + \exp \left( -\rho_{ir} \right) \bigr) +
                \nu  \| V  \|_{1,2}   \\ \notag
            \st \; &
            \rho_{ir} = H_{i:} \bw_{\yi} - H_{i:} \bw_{r}, \forall i,
            \forall r;  \; 
            V = W; \; W \geq 0.
            \notag
    \end{align}
    The Lagrangian of \eqref{EQCVPR12:logPrimalL12} can be written as
    \begin{align}
        \label{EQCVPR12:log3}
        L =& \frac{1}{mk} \sum_{i=1}^m \sum_{r=1}^k
        \log\bigl( 1 + \exp\left( -\rho_{ir} \right) \bigr) +
                \nu \| V  \|_{1,2}
            -\sum_{i,r} U_{ir}
            ( \rho_{ir} - H_{i:} \bw_{\yi} + H_{i:} \bw_{r})
            \\ \notag
            &- \trace( Q^\T (\nu W - \nu V) ) - \trace( P^\T W ),
    \end{align}
    with $U \geq 0$ and $P \geq 0$.
    At optimum the first derivative of the Lagrangian w.r.t.\ each row of $W$ must be zeros
    \begin{align}
        \label{EQCVPR12:log4}
        \frac{\partial L}{\partial \bw_r } =  {\bf 0}
        &\rightarrow
            \sum_{i,r=\yi}
            \left( \textstyle \sum_{l} U_{il} \right) H_{i:}
            {\textstyle - \sum_{i}} U_{ir} H_{i:}
            = P_{:r} - \nu Q_{:r} \\ \notag
        &\rightarrow
        {\textstyle \sum_{i}}
            \left[ \delta_{r,\yi} \left( \textstyle \sum_{l} U_{il} \right)
                  - U_{ir} \right] H_{i:} \geq -\nu Q_{:r}
    \end{align}
    for  $ \forany r $.
    Take infimum over the primal variable, $\rho_{ir}$,
    \begin{align}
        \label{EQCVPR12:log6}
        \frac{\partial L}{\partial \rho_{ir}} = 0 \rightarrow
            \rho_{ir} ^\star = -\log\left( \frac{-m k U_{ir} ^\star }
            {m k U_{ir}^\star -1 } \right), \forall i, \forall r.
    \end{align}
    and
    \begin{align}
        \label{EQCVPR12:log7}
        \inf_{\rho_{ir}} L =
            \frac{1}{mk}
            \sum_{ir} - (1 + mkU_{ir}) & \log\left(1+mkU_{ir}\right)
                      -mkU_{ir} \log \left( -mkU_{ir} \right).
    \end{align}
    By reversing the sign of $U$,
    the Lagrange dual can be written as
    \begin{align}
        \label{EQCVPR12:logDualL12}
            \max_{ U,Q }   \;
            &
            -\frac{1}{mk} \sum_{i=1}^{m} \sum_{r=1}^{k}
                \Bigl[mkU_{ir} \log \left( mkU_{ir} \right)
                + 
                \left(1 - mkU_{ir}\right) \log\left( 1 - mkU_{ir} \right) \Bigr]
            \\ \notag
            \st \; &
            {\textstyle \sum_{i}}
                \bigl[ \delta_{r,\yi} \left( \textstyle \sum_{l} U_{il} \right)
                  - U_{ir} \bigr]
                        H_{i:} \leq \nu Q_{:r}, \forall r;
             \;
            \| Q_{j:} \|_{2} \leq 1, \forall j.
    \end{align}
    Through the KKT optimality condition, the
    gradient of Lagrangian   
    over primal variables $\brho$
    and dual variables $U$ must vanish at
    the optimum.
    The solutions of \eqref{EQCVPR12:logPrimalL12} and \eqref{EQCVPR12:logDualL12}
    coincide since both problems are feasible and satisfy Slater's
    condition.
    One can find the solution by solving either problem.
    The relationship between the optimal values of $\brho$ and $U$ can be expressed as
    \begin{align}
    \label{EQCVPR12:log9}
    U_{ir}^\star = \frac{ \exp(-\rho_{ir}^\star) }{ mk \bigl(
    1+\exp(-\rho_{ir}^\star) \bigr) }.
    \end{align}
    As is the case for the hinge loss,
    the dual of the $\ell_{1,\infty}$-norm regularized logistic loss can be written as
    \begin{align}
    \label{EQCVPR12:logDualL12_B}
    \max_{ U,Q }   \;
    &
    -\frac{1}{mk} \sum_{i=1}^{m} \sum_{r=1}^{k}
    \Bigl[mkU_{ir} \log \left( mkU_{ir} \right)
    + 
    \left(1 - mkU_{ir}\right) \log\left( 1 - mkU_{ir} \right) \Bigr]
    \\ \notag
    \st \; &
    {\textstyle \sum_{i}}
    \bigl[ \delta_{r,\yi} \left( \textstyle \sum_{l} U_{il} \right)
    - U_{ir} \bigr] H_{i:} \leq \nu Q_{:r}, \forall r;
    \; \| Q_{j:} \|_{1} \leq 1, \forall j.
    \end{align}

\paragraph{General convex loss with arbitrary $\ell_{1,p}$-norm regularization}

\def\Cost{ {   \Theta } } 

    In this section, we generalize our idea to any convex loss functions
    with any mixed-norm regularizers.
    As before, we define $ 
    \Cost(\cdot)$ as a smooth convex function and $\Omega(\cdot)$
    as any well-established regularization functions\footnote{
    Here we assume a non-overlapping group structure.
    This assumption is always valid since
    $ W =  [ \w_1, \w_2, \ldots, \w_k ]$ and $\bigcap_{j=1}^{n} W_{:j}
    = \emptyset$.
    }.
    We define the margin as the pairwise difference of prediction scores.
    The general mixed-norm regularized optimization problem that 
    we want to solve is,
    \begin{align}
        \label{EQCVPR12:sup_gen1}
            \min_{ W, V, \brho } \;
            &
            \sum_{i=1}^{m} \sum_{r=1}^{k}   \Cost ( \rho_{ir} ) +
                \nu \sum_{j} \Omega( V_{j:} )
            \\ \notag
            \st \; &
            \rho_{ir} = H_{i:} \bw_{\yi} - H_{i:} \bw_{r}, \forall i,
            \forall r;       \ \text{and} \
            V = W; W \geq 0.
    \end{align}
    The Lagrangian of \eqref{EQCVPR12:sup_gen1} is
    \begin{align}
        L =& \sum_{i=1}^m \sum_{r=1}^k  \Cost  \left( \rho_{ir} \right) +
                \nu \sum_{j} \Omega( V_{j:} )    \
            -\sum_{i,r} U_{ir}
            ( \rho_{ir} - H_{i:} \bw_{\yi} + H_{i:} \bw_{r})
            \\ \notag
            \; &- \trace( Q^\T (\nu W - \nu V)  ) - \trace ( P^\T W ),
    \end{align}
    Following our derivation for multi-class logistic loss, the Lagrange dual can be written as,
    \begin{align}
            \min_{ U,Q }   \;
            &
            \sum_{i=1}^{m} \sum_{r=1}^{k}
               \Cost ^ {\ast}(-U_{ir})
             \\ \notag
            \st \; &
            {\textstyle \sum_{i}}
                \bigl[ \delta_{r,\yi} \left( \textstyle \sum_{l} U_{il} \right)
                  - U_{ir} \bigr]
                        H_{i:} \leq \nu Q_{:r}, \forall r;
             \ \text{and} \
            \Omega^{\ast}( Q_{j:} ) \leq 1, \forall j.
    \end{align}
    where $ \Cost^{\ast}(\cdot)$ is the Fenchel dual function of $l(\cdot)$
    and $\Omega^{\ast}(\cdot)$ is the Fenchel conjugate of $\Omega(\cdot)$.
    Through the KKT condition, the relationship between the dual variable $U$ and the primal variable $\rho$,
    \begin{align}
        U_{ir} ^\star = - \nabla  \Cost ( \rho_{ir}  ^\star  ),
    \end{align}
    holds at optimality.
    It is important to note here the difference between \multiboost (having $\ell_1$ penalty) and
    \multistruct (having mixed-norm penalty).
    Although both dual variables, $U$, have the same expression, \ie,
    each dual variable is defined as the negative gradient of the loss
    at $\rho_{ir}$, the solution to the primal variables, $W$, are different.
    \multiboost does not enforce group sparsity and
    is unable to exploit the existence of structural features.
    The details of our boosting algorithm are given in
    Algorithm~\ref{ALGCVPR12:alg1}.

\SetKwInput{KwInit}{Initilaize}
\SetVline
\linesnumbered

\begin{algorithm}[t]
\caption{MultiBoost with shared weak classifiers via group sparsity.
}
\begin{algorithmic}
   \KwIn{

     1)    A set of examples $\{\bx_i,y_i\}$, $i=1 \cdots m$;
     $
     \;
     $
     \\2)    The maximum number of weak classifiers, $T$;
   }

   \KwOut{
      A multi-class classifier $F(\bx) = \argmax_r \sum_{j=1}^{T} w_{r,j} h_j(\bx)$;
}

\KwInit {

   1)      $t \leftarrow 0$;
   \\2)      Initialize sample weights, $U_{ir} = 1/(mk)$;
}

\While{ $t < T$ }
{
  1) Train a weak learner,
  $h_t(\cdot) = $
  $$
  \left\{ \begin{array}{ll}
  \argmax_{h(\cdot) \in \mathcal{H},r}  \sum_{i=1}^m
  \left[
  \delta_{r,y_i}  - U_{ir}
   \right] h(\bx_i) , &\mbox{{hinge loss}} \\
  \argmax_{h(\cdot),r}  \sum_{i=1}^m
  \left[
  \delta_{r,y_i} \left( \textstyle \sum_{l} U_{il} \right) - U_{ir}
   \right] h(\bx_i) , &\mbox{{logistic}}
       \end{array} \right.
  $$
  \\2) If the stopping criterion has been met, we exit the loop.\\
   \If{ $\Bigl\| \sum_{i=1}^m \left[
   \delta_{r,y_i} - U_{ir}
   \right] h(\bx_i) \Bigr\|_2
    < \nu  + \epsilon $ } { break; \quad {(hinge loss)}}
     \If{ $\Bigl\| \sum_{i=1}^m \left[
   \delta_{r,y_i} \left( \textstyle \sum_{l} U_{il} \right) - U_{ir}
   \right] h(\bx_i) \Bigr\|_2
    < \nu  + \epsilon $ } { break; \quad {(logistic loss)}}
  3) Add the best weak learner, $h_t(\cdot)$, into the current set;
  \\4) Solve either the primal or the dual problem (we solve the dual \eqref{EQCVPR12:hinge7})
  for the { hinge loss case};
       or solve the primal problem \eqref{EQCVPR12:logPrimalL12} using \ADMM
  for the logistic loss case;
  \\5) Update sample weights (dual variables);
  \\6) $t \leftarrow t + 1$;
}

\end{algorithmic}
\label{ALGCVPR12:alg1}
\end{algorithm}

    \begin{thm}[Convergence property] Both $\ell_1$-norm and $\ell_{1,p}$-norm
    regularized boosting algorithms are guaranteed to converge to an optimum
    of any convex loss functions
    provided that both algorithms makes progress at each boosting iteration.
    In other words, as long as the objective value decreases,
    both algorithms optimize \eqref{EQCVPR12:sup_gen1}
    globally to a desired accuracy.
    \end{thm}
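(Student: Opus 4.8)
The plan is to treat the column-generation procedure of Algorithm~\ref{ALGCVPR12:alg1} as a cutting-plane / restricted-master method for the convex (semi-infinite) program \eqref{EQCVPR12:sup_gen1}, and to show that its stopping rule certifies global optimality. The argument mirrors the binary case in \citep{Shen2010PAMI,Demiriz2002LPBoost}, adapted to the multi-class Fenchel dual derived above. The first ingredient I would record is strong duality: since $\Theta(\cdot)$ is convex and $\Omega(\cdot)$ is a norm (hence convex), \eqref{EQCVPR12:sup_gen1} is a convex program, and under Slater's condition (already noted for the logistic instance) the primal and its dual share the same optimal value with zero duality gap. Crucially, the same holds for every \emph{restricted} problem obtained by retaining only a finite subset $\{h_1,\dots,h_t\}\subseteq\mathcal H$ of weak classifiers, i.e.\ by pinning $W_{j:}=0$ for all $j>t$.

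Next I would establish monotonicity. Let $\mathrm{OPT}_t$ denote the optimal value of the restricted primal at iteration $t$, and $\mathrm{OPT}$ that of the full problem \eqref{EQCVPR12:sup_gen1}. Because enlarging the admissible set of columns can only enlarge the primal feasible set, the sequence $\mathrm{OPT}_t$ is non-increasing and bounded below by $\mathrm{OPT}$, so $\mathrm{OPT}_t\ge\mathrm{OPT}$. The ``makes progress'' hypothesis is precisely that each iteration which is not already optimal strictly decreases this value, ruling out stalling.

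The crux is the optimality certificate. At iteration $t$ we solve the restricted problem to obtain a primal--dual pair $(W^{(t)},U^{(t)},Q^{(t)})$; by strong duality on the restricted problem the dual objective at $(U^{(t)},Q^{(t)})$ equals $\mathrm{OPT}_t$, and this pair satisfies every dual constraint indexed by $h_1,\dots,h_t$. The subproblem \eqref{EQCVPR12:hinge8} (or its logistic analogue) computes the most violated of the remaining constraints over all $h\in\mathcal H$; note that the test $\bigl\|\sum_i(\delta_{r,y_i}-U_{ir})h(\bx_i)\bigr\|_2<\nu+\epsilon$ is exactly the condition that such an $h$ can be absorbed by some $Q_{:r}$ with $\|Q_{j:}\|_2\le 1$. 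When the stopping test fires, no constraint of the full dual is violated, so $(U^{(t)},Q^{(t)})$, with $Q$ extended to the unused rows, is feasible for the full dual. By weak duality this full-dual-feasible point bounds $\mathrm{OPT}$; since it attains value $\mathrm{OPT}_t$, we obtain $\mathrm{OPT}_t\le\mathrm{OPT}$. Combined with $\mathrm{OPT}_t\ge\mathrm{OPT}$ this forces $\mathrm{OPT}_t=\mathrm{OPT}$, so the restricted solution is globally optimal up to the tolerance $\epsilon$, establishing the claim.

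The main obstacle I anticipate is the semi-infinite nature of the dictionary: when $|\mathcal H|=\infty$ the process need not terminate finitely, so I would argue asymptotic convergence rather than exact termination. Here the strictly decreasing, lower-bounded values $\mathrm{OPT}_t$ converge, and one must show the limit equals $\mathrm{OPT}$, i.e.\ that the maximal constraint violation driving the subproblem tends to zero along the iterates. This should follow from the ``makes progress'' assumption together with continuity of the Fenchel-dual objective and compactness of the dual feasible set (each row of $U$ is normalized and $\|Q_{j:}\|_\ast\le 1$), which prevent the violation from remaining bounded away from zero without contradicting the monotone decrease. Verifying these regularity conditions, rather than the duality bookkeeping of the previous paragraph, is where the real work lies.
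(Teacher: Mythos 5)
Your proposal is correct in substance but argues the theorem from the opposite side to the paper. The paper's proof works in the primal: it shows that whenever the stopping criterion is \emph{not} met, the newly generated weak learner $h_n(\cdot)$ must receive a strictly positive coefficient $w_{r,n}$ for some $r$ --- otherwise stationarity would force the Lagrange multiplier $v_{r,n}=\nu-\sum_{i=1}^m\left[\delta_{r,y_i}-U_{ir}\right]h_n(\x_i)$ to be negative, contradicting the sign constraint on the multiplier --- and hence the restricted objective strictly decreases; it then concludes by convexity that the iterates reach the global optimum. You instead prove the complementary direction via the dual: when the stopping test fires, the restricted dual solution $(U^{(t)},Q^{(t)})$ extends to a feasible point of the full semi-infinite dual, and weak duality sandwiches $\mathrm{OPT}_t=\mathrm{OPT}$. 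The two arguments hinge on the same inequality (the dual constraint indexed by the candidate weak learner), but yours yields a genuine optimality \emph{certificate} at termination, whereas the paper's closing step --- strict decrease plus absence of local optima implies global convergence --- is not airtight on its own, since a strictly decreasing, bounded-below sequence of objective values can stall above the optimum. Your final paragraph correctly isolates the remaining work for the infinite-dictionary, non-terminating case (showing the maximal dual violation vanishes along the iterates), a point the paper's proof does not address; so your route is, if anything, the more complete one, at the cost of having to verify the compactness and continuity conditions you list.
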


    \begin{proof}
    Here we consider \multiboost. The proof of \multistruct would follow
    the same discussion.
    Our proof relies on the fact that the $\ell_1$ regularizer forces the
    set of possible solutions to be sparse and
    each column generation iteration guarantees the objective value to be smaller.
    We first assume that the current solution is a finite subset of
    weak learners, $\{ h_j( \cdot) \}_{j=1}^{n-1}$.
    If we add a weak learner, $h_{n} (\cdot)$, that is not in the current subset,
    and the corresponding coefficient, $w_{r,n} = 0, \forall r$,
    the solution must remain unchanged.
    We can simply conclude that the current set of weak learners,
    $\{ h_j (\cdot) \}_{j=1}^{n-1}$, and their coefficients, $\bw_r, \forall r$,
    are already at the optimal solution.

    Next, we consider the case when the optimality condition is violated.
    We need to show that we can find a weak learner, $h_{n} (\cdot)$,
    which is not in the current set and $\exists r: w_{r,n} > 0$.
    Let us assume that $h_{n} (\cdot)$ is the base learner found by solving
    Step~$1$ in Algorithm~\ref{ALGCVPR11:alg1},
    and the stopping criterion (Step~$2$) has not been met.
    Hence
    $\exists r: \sum_{i=1}^m \left[
     \delta_{r,y_i} - U_{ir} \right] h_{n} (\bxi)
     > \nu$.
    If after the weak learner $h_{n} (\cdot)$ is added into the primal problem,
    the primal solution remains unchanged, that is,
    $w_{r,n} = 0, \forall r$.
    Based on the optimality condition:
    \begin{align}
     \inf_{\bw_r} L^{\prime} = \inf_{\bw_r} \left( \nu - \sum_{i=1}^m \left[
     \delta_{r,y_i} - U_{ir} \right] H_{i:} - 
     \bv_r^\T \right) \bw_r. \notag
    \end{align}
    At optimum, the first derivative of $L^{\prime}$ w.r.t.\ the 
    primal variables must vanish,
    \ie, $L^{\prime}$ must be ${\mathbf 0}$. 
    But $\exists r: v_{r,n} = \nu - \sum_{i=1}^m \left[
     \delta_{r,y_i} - U_{ir} \right] h_{n} (\cdot) < 0$.
     This contradicts the fact the Lagrange multiplier, $v_{r,n}$,
     must be greater than or equal to zero.

    We can conclude that after the base learner $h_{n} (\cdot)$ is added into
    the primal problem, $\exists r: w_{r,n} > 0$.
    Since one more primal variable is added into the problem, the objective value
    of the primal problem must decrease.
    A decreasing in the objective value guarantees that the algorithm makes
    progress at each iteration.
    Since all optimization problems are convex,
    there exists no local optimal solution.
    Therefore the proposed column generation based boosting
    is guaranteed to converge
    to the global optimal solution.

    \end{proof}

\subsection{Implementation}

    Note that the dual problem of hinge loss, \eqref{EQCVPR12:hinge7}, is a
    conic quadratic optimization problem
    involving
    several linear constraints and quadratic cones.
    We use the Mosek optimization solver to solve \eqref{EQCVPR12:hinge7}
    which
    provides solutions for both primal and dual problems simultaneously
    using the interior-point method.
    For the logistic loss formulation the primal problem has $nk$
    variables and $mk$ simple constraints \eqref{EQCVPR12:logPrimalL12}.  The
    dual problem has $mk$ variables\footnote{Here we ignore the equality
    constraints since they can be put back into the original cost
    function.} and $nk$ constraints.  In boosting, we often have more
    training samples than final weak classifiers ($m \gg n$).
    However, the $\ell_{1,2}$-norm is not differentiable everywhere,
    and thus to solve \eqref{EQCVPR12:logPrimalL12} we apply the
    \ADMM method \citep{Boyd2011Distributed}.
    \ADMM decouples the regularization term from the logistic loss by
    introducing additional auxiliary variables.  The algorithm then
    solves \eqref{EQCVPR12:logPrimalL12} by using an alternating
    minimization approach.
    \ADMM formulates the original problem as the following,
    \begin{align}
        \label{EQCVPR12:ADMM1}
            \min_{ W,Z }
            f(W) + g(Z)
            \; \st \;
            W = Z.
    \end{align}
    Here $f(W)$ is any convex loss functions \eqref{EQCVPR12:sup_gen1}
    and $g(Z) = \nu \cdot \Omega(Z) $ is any regularization functions.
    As in the method of multipliers, we form the augmented Lagrangian,
    \begin{align}
        \label{EQCVPR12:ADMM2}
        L_{\lambda} =&\ f(W) + g(Z) +
        \left< U,W-Z \right> + \frac{\lambda}{2} \| W - Z \|_2.
    \end{align}
    Here $\lambda$ is the augmented Lagrangian parameter ($\lambda > 0$).
    The method of multipliers for \eqref{EQCVPR12:ADMM1} has the form,
    \begin{align}
        \label{EQCVPR12:ADMM3}
        (W^{s+1}, Z^{s+1}) =&\ \argmin_{W,Z} L_{\lambda} (W, Z, U^s) \\
        U^{s+1} =&\ U^s + \lambda ( W^{s+1} - Z^{s+1} ).
    \end{align}
    Here the Lagrangian is minimized jointly with respect to both $W$ and $Z$ variables.
    Since it is expensive to solve a joint minimization in \eqref{EQCVPR12:ADMM3},
    both primal variables ($W$ and $Z$) are updated in an alternating fashion.
    This alternate update scheme is known as \ADMM.
    \ADMM consists of the following iterations,
    \begin{align}
        \label{EQCVPR12:ADMM4}
        W^{s+1} =&\ \argmin_{W} L_{\lambda} (W, Z^s, U^s) \\
        \label{EQCVPR12:ADMM5}
        Z^{s+1} =&\ \argmin_{Z} L_{\lambda} (W^{s+1}, Z, U^s) \\
        U^{s+1} =&\ U^s + \lambda ( W^{s+1} - Z^{s+1} ).
    \end{align}
    As an example, we regularize the above logistic loss with a mixed-norm
    $\ell_{1,2}$ regularizer.
    We can rewrite \eqref{EQCVPR12:ADMM4} and  \eqref{EQCVPR12:ADMM5} as,
    \begin{align}
        \label{EQCVPR12:ADMM6}
        W^{s+1} =&\ \argmin_{W} \ \frac{1}{mk} \sum_{i=1}^m
            \sum_{r=1}^k \log\left(1+\exp\left( -\rho_{ir} \right)\right)+ 
            (U^s)^\T W + \frac{\lambda}{2} \| W - Z^s \|_2^2 \\
        \label{EQCVPR12:ADMM7}
        Z^{s+1} =&\ \argmin_{Z} \ \nu \| Z \|_{1,2} - (U^s)^\T Z +
            \frac{\lambda}{2} \| W^{s+1} - Z \|_2^2.
    \end{align}
    Here $\rho_{ir} = H_{i:} \bw_{r} - H_{i:} \bw_{y_i}$.
    Since \eqref{EQCVPR12:ADMM6} is now smooth and differentiable everywhere,
    a quasi-Newton method such as L-BFGS-B can be used to efficiently solve \eqref{EQCVPR12:ADMM6}.
    For \eqref{EQCVPR12:ADMM7}, a closed-form solution exists and it can be
    computed through sub-differential calculus \citep{Boyd2011Distributed}.
    The solution is known as a block soft thresholding,
    \begin{align}
        \label{EQCVPR12:ADMM8}
        Z_{j:}^{s+1} =& \ \mathcal{S}_{\nu / \lambda}(W_{j:}^{s+1} + U_{j:}^{s}), \forall j,
    \end{align}
    where $\mathcal{S} $ is a vector soft thresholding operator defined as
    \begin{equation}
        \label{EQ:STVec}
    \mathcal{S}_\kappa( \bx ) = (1 - \kappa/\| \bx  \|_2)_{+} \bx,
    \end{equation}
    with $  \mathcal{S} ( 0 ) = 0 $. 
    A brief summary of \ADMM in provided in Algorithm~\ref{ALG:ADMM}.

\SetKwInput{KwInit}{Initilaize}
\SetVline
\linesnumbered

\begin{algorithm}[t]
\caption{ADMM for solving \eqref{EQCVPR12:logPrimalL12}}
\begin{algorithmic}
   \KwIn{

     1)    Outputs of weak classifiers, $H$;
     $ \; $
     \\2)    Augmented Lagrangian parameter, $\lambda$;
     $ \; $
     \\3)    The maximum number of iterations, $s_{\rm max}$;
   }

   \KwOut{
      An optimal $W^{\ast}$;
   }

\KwInit {

   1)      $s \leftarrow 0$;
   $ \; $
   \\2)      $W^{0}$, $Z^{0}$, $U^{0}$;
}

\Repeat{ {\rm convergence}}
{
  $W^{s+1} = \argmin_{W} \frac{1}{mk} \sum_{i=1}^m
        \sum_{r=1}^k \log\bigl(1+\exp\left( - \rho_{ir} \right)\bigr)+
  (U^s)^\T W + \frac{\lambda}{2} \| W - Z^s \|_2^2 $;
  \\ Update $Z_{j:}^{s+1} $ using \eqref{EQCVPR12:ADMM8};
  \\ $U^{s+1} = U^s + \lambda ( W^{s+1} - Z^{s+1} )$;
  \\ $s \leftarrow s + 1$;
  \\   \If{ $s > s_{\rm max}$ } { break; }
}

\end{algorithmic}
\label{ALG:ADMM}
\end{algorithm}

    \def\qmax{ { q_{\rm max} } }

\paragraph{Distributed optimization via \ADMM}
    We describe here how to exploit distributed computing in \ADMM to speed up the
    training time of our proposed approach.
    In order to solve the problem in a distributed fashion, we first
    separate the loss function across $\qmax$ blocks of data.
    We redefine our problem as,
    \begin{align}
        \label{EQCVPR12:distboost1}
            \min_{ W,Z }   \;
            \sum_{q=1}^\qmax    \Cost_q ( W_q ) + \nu \cdot \Omega( Z )
            \;
            \st \;
            W_q - Z = 0, q = 1, \ldots, \qmax,
    \end{align}
    where $\Cost_q$ refers to the loss function for the $q$-th block of data.
    Similar to the previous section, \ADMM considers the following iterations,
    \begin{align}
        \label{EQCVPR12:distboost2}
        W_q^{s+1} =&\ \argmin_{W_q} L_{\lambda} (W_q, Z^s, U^s), \forall q; \\
        \label{EQCVPR12:distboost3}
        Z^{s+1} =&\ \argmin_{Z} L_{\lambda} (W_1^{s+1}, \ldots,
        W_\qmax ^ {s+1}, Z, U_1^s, \ldots, U_\qmax ^ s); \\
        U_q^{s+1} =&\ U_q^s + \lambda ( W_q^{s+1} - Z^{s+1} ), \forall
        q=1\cdots \qmax,
    \end{align}
    where $\lambda$ is the augmented Lagrangian parameter ($\lambda > 0$).
    The resulting \ADMM algorithm for \eqref{EQCVPR12:distboost2} and \eqref{EQCVPR12:distboost3} is
    \begin{align}
        \label{EQCVPR12:distboost4}
        W_q^{s+1} =&\ \argmin_{W_q} \ l_q (W_q) + ( U^s_q )^\T W_q +
        \frac{\lambda}{2} \| W_q -  Z^s \|^2_2, \forall q,  \\
        \label{EQCVPR12:distboost6}
        Z^{s+1} =&\ \argmin_{Z} \ \nu \cdot \Omega(  Z  ) +
                  \sum_{q=1}^ \qmax  \Bigl( -  \left( U^s_q \right)^\T Z +
                 \frac{\lambda}{2} \| W_q^{s+1} -  Z \|^2_2 \Bigr), \\
        \label{EQCVPR12:distboost8}
        U_q^{s+1} =&\ U_q^s + \lambda ( W_q^{s+1} - Z^{s+1} ), \forall q,
    \end{align}
    where $\bar{W}^{s+1} = \frac{1}{ \qmax } \sum_{q=1}^\qmax
    W_q^{s+1}$ and $\bar{U}^{s} = \frac{1}{\qmax} \sum_{q=1}^\qmax  U_q^{s}$.
    For \multiboost, the $Z$-update is a soft threshold operation, \ie,
    \begin{align}
       \label{EQCVPR12:distboost9}
        Z^{s+1} =&\ \argmin_{Z} \ \nu \cdot \Omega(  Z  ) +
                  \sum_{q=1}^ \qmax 
                  \Bigl( -  \left( U^s_q \right)^\T Z +
                 \frac{\lambda}{2} \| W_q^{s+1} -  Z \|^2_2 \Bigr), \\ \notag
                =&\ \argmin_{Z} \ \nu \|  Z  \|_{1} +
                  ( \qmax \lambda /2) \| Z - \bar{W}^{s+1} -
                  (1/\lambda) \bar{U}^s \|_2^2, \\ \notag
                =&\  \mathcal{S}_{\nu / (\lambda  \qmax ) } \left(\bar{W}_{j:}^{s+1} +
                  (1/\lambda) \bar{U}_{j:}^{s} \right), \forall j.
    \end{align}
    The soft thresholding operator $\mathcal{S}$ applied to a scalar is defined as
    \begin{equation}
    \mathcal{S}_\kappa ( x ) = (1 - \kappa / | x | )_{+} x.
    \end{equation} 
    for $ x \neq 0$. 
    For \multistruct, \eg, $\ell_{1,2}$-norm regularizer,
    a closed-form solution exists for
    \eqref{EQCVPR12:distboost6} and it can be computed as
    \begin{align}
        \label{EQCVPR12:distboost10}
        Z^{s+1} =&\ \argmin_{Z} \ \nu \cdot \Omega(  Z  ) +
                  \sum_{q=1}^\qmax \Bigl( -  \left( U^s_q \right)^\T Z +
                 \frac{\lambda}{2} \| W_q^{s+1} -  Z \|^2_2 \Bigr), \\ \notag
                =&\ \argmin_{Z} \ \nu \|  Z  \|_{1,2} +
                  (\qmax \lambda /2) \| Z - \bar{W}^{s+1} -
                  (1/\lambda) \bar{U}^s \|_2^2, \\ \notag
                =&\  \mathcal{S}_{\nu / (\lambda \qmax ) } \left(\bar{W}_{j:}^{s+1} +
                  (1/\lambda) \bar{U}_{j:}^{s} \right), \forall j.
    \end{align}
    In this case, note that $  \mathcal{S}  $ is the vector
    thresholding operator defined in \eqref{EQ:STVec}.  

    Here we assume that $\sum_{q=1}^\qmax  m_q = m$, \ie, the sum of the number of samples in each block is equal to the total number of samples.
    The first step, \eqref{EQCVPR12:distboost4}, can be carried out independently in parallel for each block of data.
    In other words, we distribute  \eqref{EQCVPR12:distboost4} to each thread or processor.
    The second step, \eqref{EQCVPR12:distboost9} or \eqref{EQCVPR12:distboost10}, gathers variables computed in \eqref{EQCVPR12:distboost4} to form the average.
    After the final step, \eqref{EQCVPR12:distboost8}, the value of $U_q^{s+1}$ is then distributed to the subsystems.

    For both hinge loss and logistic regression, we can rewrite \eqref{EQCVPR12:distboost4} as,
    \begin{align}
         W_q^{s+1}   =&\ \argmin_{W_q} \frac{1}{m_q}
                  \sum_{i=1}^{m_q} \xi_i + (U_q^s)^\T W_q + \frac{\lambda}{2} \| W_q - Z^s \|_2^2, \\
         W_q^{s+1}   =&\ \argmin_{W_q} \frac{1}{m_q k}
                  \sum_{i=1}^{m_q} \sum_{r=1}^k \log \Bigl(1+\exp\left( -\rho_{ir} \right)\Bigr)+ (U_q^s)^\T W_q + \frac{\lambda}{2} \| W_q - Z^s \|_2^2;
    \end{align}
    respectively.

    \subsection{Faster training of multi-class boosting}
    Although we have combined \ADMM with
    L-BFGS-B for faster training of multi-class logistic loss,
    the resulting algorithm is still computationally expensive to train.
    The drawback of \eqref{EQCVPR12:logPrimalL12} is that the formulation cannot
    be separated for faster training.  Since real-world data often
    consists of a large number of samples and classes,
    the training procedure can be very slow.

    In order to improve the training efficiency of the classifier we thus
    propose here another variation of the multi-class boosting based
    on the logistic loss.
    This variation is achieved through a simplification of the form of
    $\rho_{ir}$ in \eqref{EQCVPR12:logPrimalL12} to $\rho_{ir} = y_{ir} H_{i:}
    \bw_{r}$ where $y_{ir} = 1$ if $\yi = r$ and $y_{ir} = -1$,
    otherwise.
    Note that this formulation was originally introduced in
    \cite{Chapelle2008Multi} for multi-class as well as multi-label
    support vector machine (SVM) learning and proved to be effective.
    To our knowledge, this formulation of multi-class loss function
    has not been applied to boosting.
    Here we extend it to multi-class boosting.
    The fast training (\ova) formulation is:
    \begin{align}
        \label{EQCVPR12:ova1}
            \min_{ W, \brho }
            \;
            \frac{1}{mk} \sum_{i=1}^{m} \sum_{r=1}^{k} \log \bigl( 1 + \exp \left( -\rho_{ir} \right) \bigr) +
                \nu  \| W  \|_{1,2} \;  
            \;
            \st
            \; 
            \rho_{ir} = y_{ir} H_{i:} \bw_{r}, \forall i, \forall r;
            \;
            W \geq 0.
    \end{align}
    The Lagrange dual can be written as
    \begin{align}
        \label{EQCVPR12:ova2}
        \max_{ U }
        \; &
        -\frac{1}{mk} \sum_{i=1}^{m} \sum_{r=1}^{k}
        \Bigl[mkU_{ir} \log \left( mkU_{ir} \right) + 
        \left(1 - mkU_{ir}\right)
        \log\left( 1 - mkU_{ir} \right) \Bigr] \\ \notag
        \st \; & {\textstyle \sum_{i}} U_{ir} y_{ir}
        H_{i:} \leq \nu Q_{:r}, \forall r;
        \;
        \| Q_{j:} \|_{2} \leq 1, \forall j.
    \end{align}
    The relationship between $\rho$ and $U_{ir}$ is the same as
    \eqref{EQCVPR12:log9}.  We replace steps $1$ and $2$ in
    Algorithm~\ref{ALGCVPR12:alg1} with the constraint in \eqref{EQCVPR12:ova2} and
    step $4$ in Algorithm~\ref{ALGCVPR12:alg1} with the optimization problem in
    \eqref{EQCVPR12:ova1}.
    As in \cite{Chapelle2008Multi}, it is easy to
    apply the above formulation to multi-label classification,
    where each example can have multiple class labels.
    We leave this for future work.

    \paragraph{Parallel optimization for \ova boosting}
        The computational bottleneck of Algorithm~\ref{ALG:ADMM} lies in minimizing
        $W^{s+1}$.  By simplifying the margin as $\rho_{ir} = y_{ir}
        H_{i:} \bw_{r}$, we can solve each $\bw_r, \forall r$
        independently.  This speeds up our training time by a factor
        proportional to the number of classes.  Let us define $W = [
        \bw_1, \bw_2,\ldots, \bw_k ] \in {\Real}^{n \times
    k}$, $Z = [ \bz_1, \bz_2,\ldots, \bz_k ] \in {\Real}^{n \times
    k}$ and $U = [ \bu_1, \bu_2,\ldots, \bu_k ] \in {\Real}^{n \times
    k}$, line $2$ in Algorithm~\ref{ALG:ADMM} can simply be replaced by,
    \begin{align}
        \label{EQCVPR12:ADMMPar}
        \bw^{s+1}_{r} =& \argmin_{\bw} \frac{1}{mk}
        \sum_{i=1}^m
        \sum_{r=1}^k
             \log \bigl( 1+\exp\left( -\rho_{ir} \right) \bigr)+ 
            (\bu_r^s)^\T \bw + \frac{\lambda}{2}
            \| \bw - \bz_r^s \|_2^2, \; \forall r.
    \end{align}
    Even without a multi-core processor, solving a series of
    \eqref{EQCVPR12:ADMMPar} is still faster than solving line $2$ in
    Algorithm~\ref{ALG:ADMM}.  Distributed optimization can also be
    applied to our algorithms to further speed up the training time.  The
    idea is to distribute a subset of training data in \eqref{EQCVPR12:ADMMPar}
    to each processor and gather optimal $\bw^{s+1}_{r}$ to form the
    average.  Interested readers should refer to Chapter $8$ in
    \cite{Boyd2011Distributed}.

\section{Experiments}

\subsection{\multiboost}

    \label{sec:exp}

    \begin{figure}[!ht]
    \begin{center}
    \subfigure[mean images of ``1'', ``6'' and ``9'']{
    \includegraphics[width=0.1602\textwidth,clip]{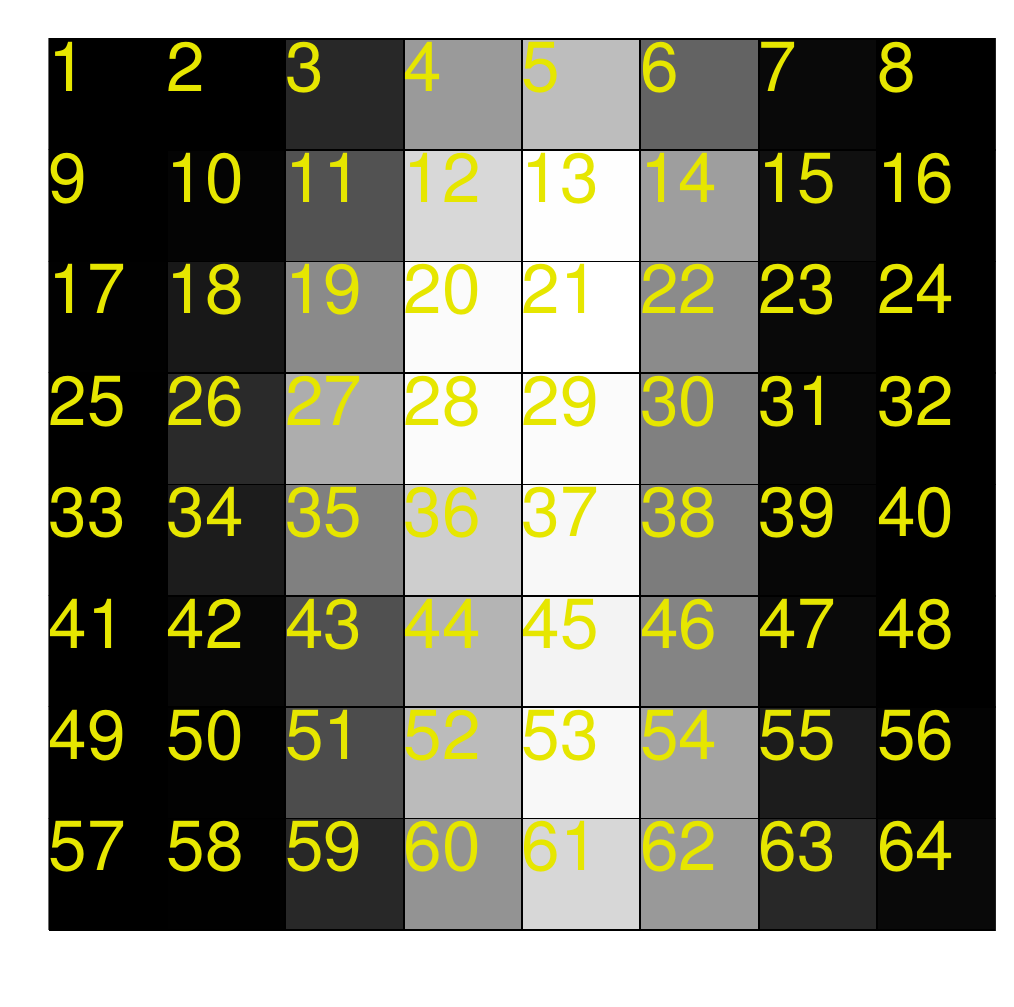}
    \includegraphics[width=0.162\textwidth,clip]{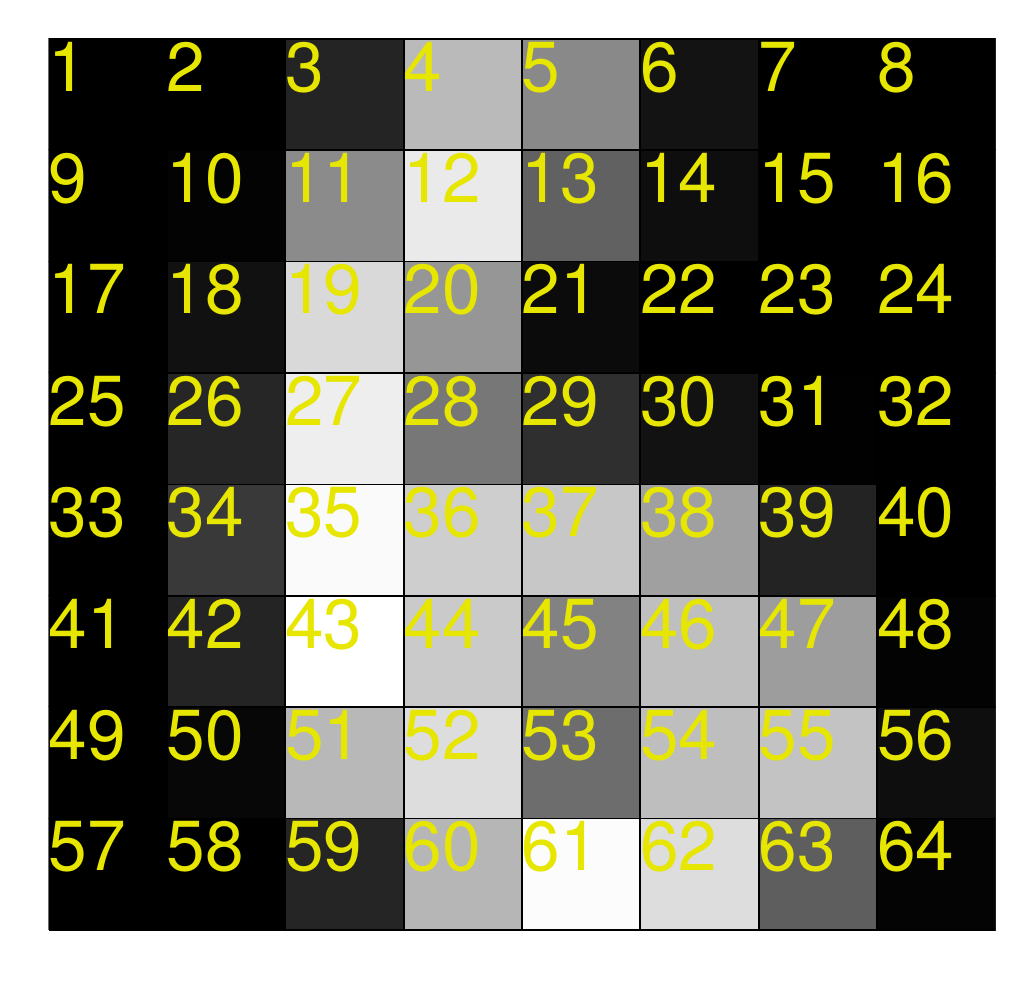}
    \includegraphics[width=0.162\textwidth,clip]{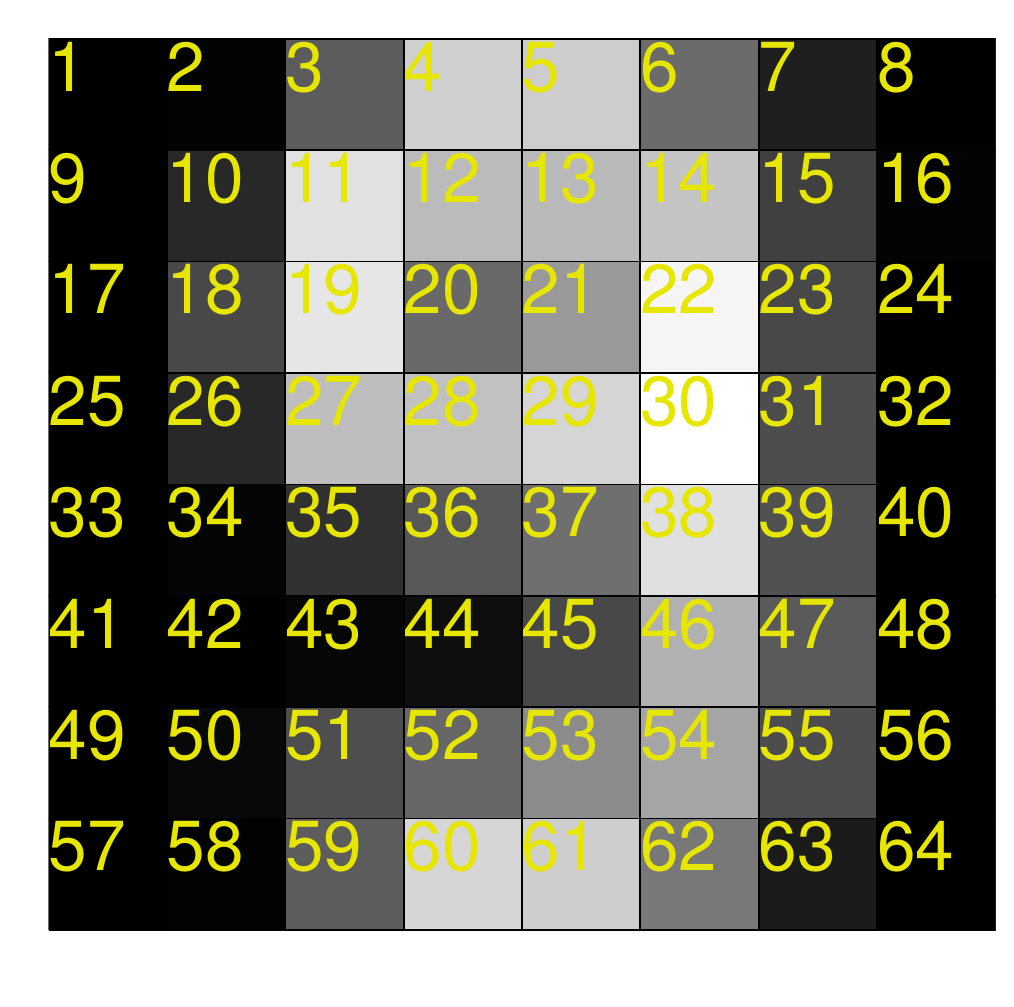}
    }
    \subfigure[AdaBoost.ECC]{
    \includegraphics[width=0.16\textwidth,clip]{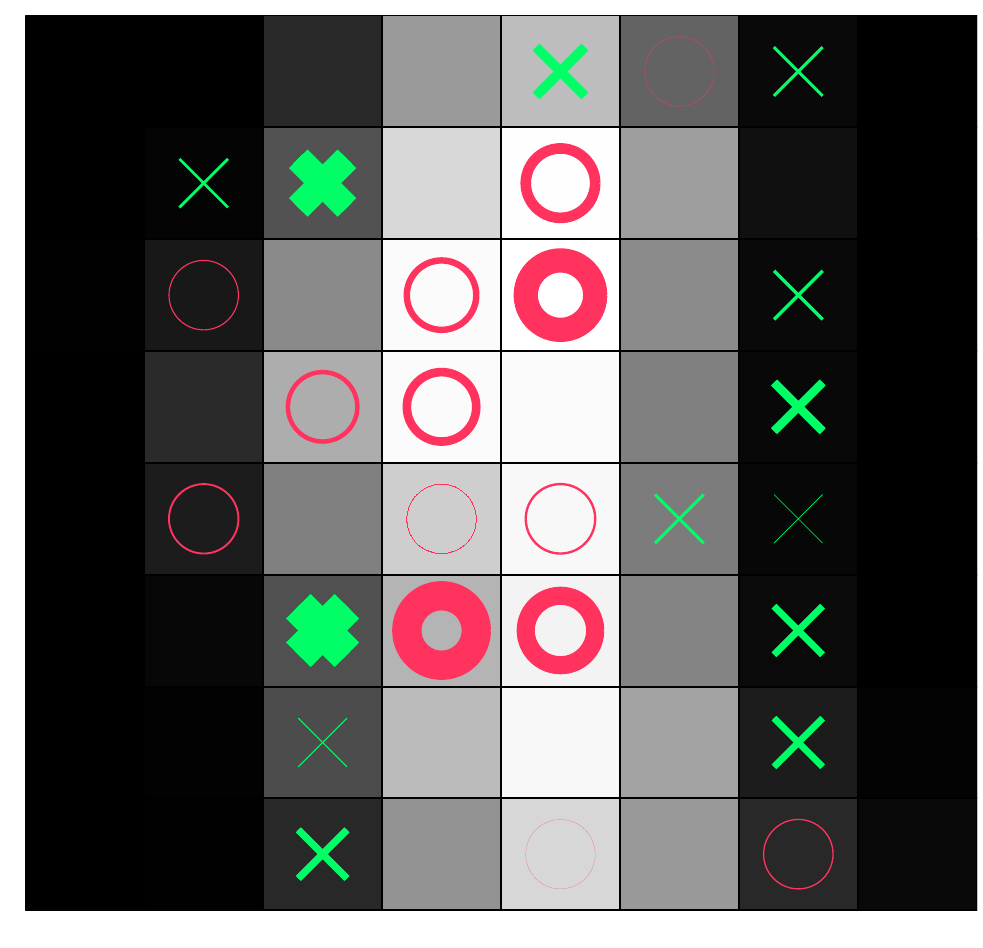}
    \includegraphics[width=0.16\textwidth,clip]{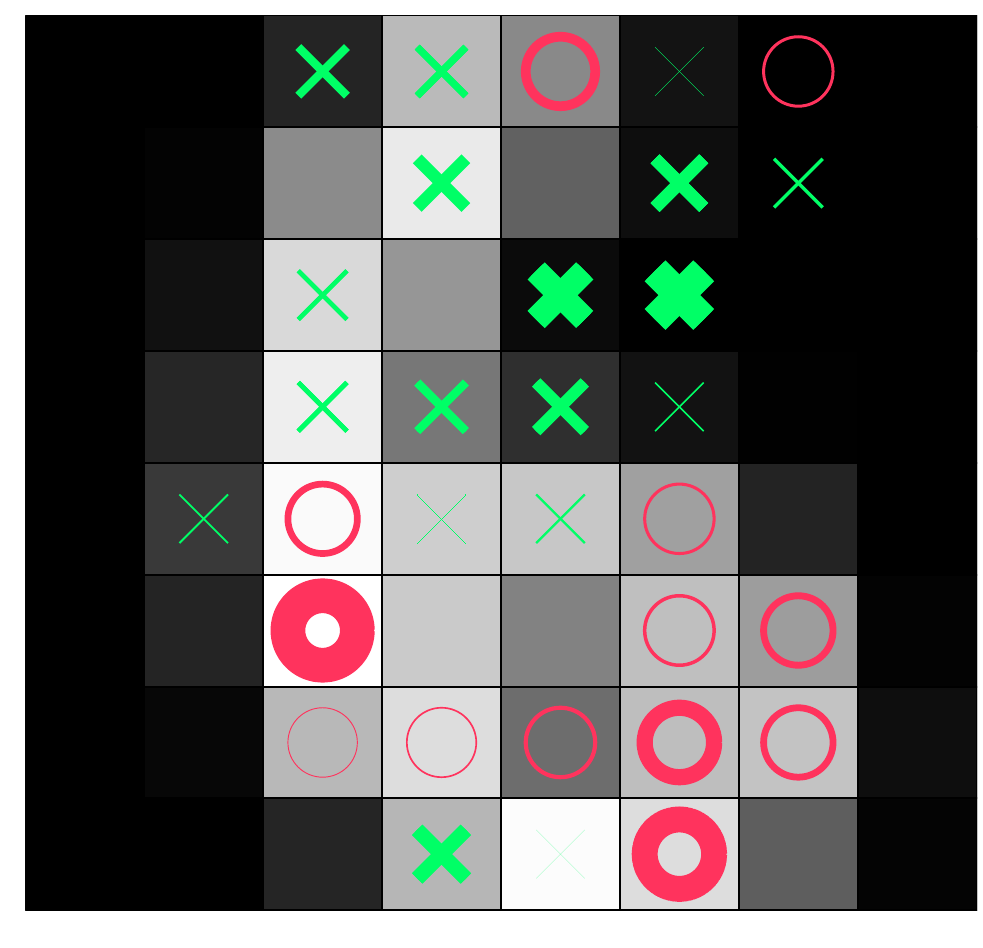}
    \includegraphics[width=0.16\textwidth,clip]{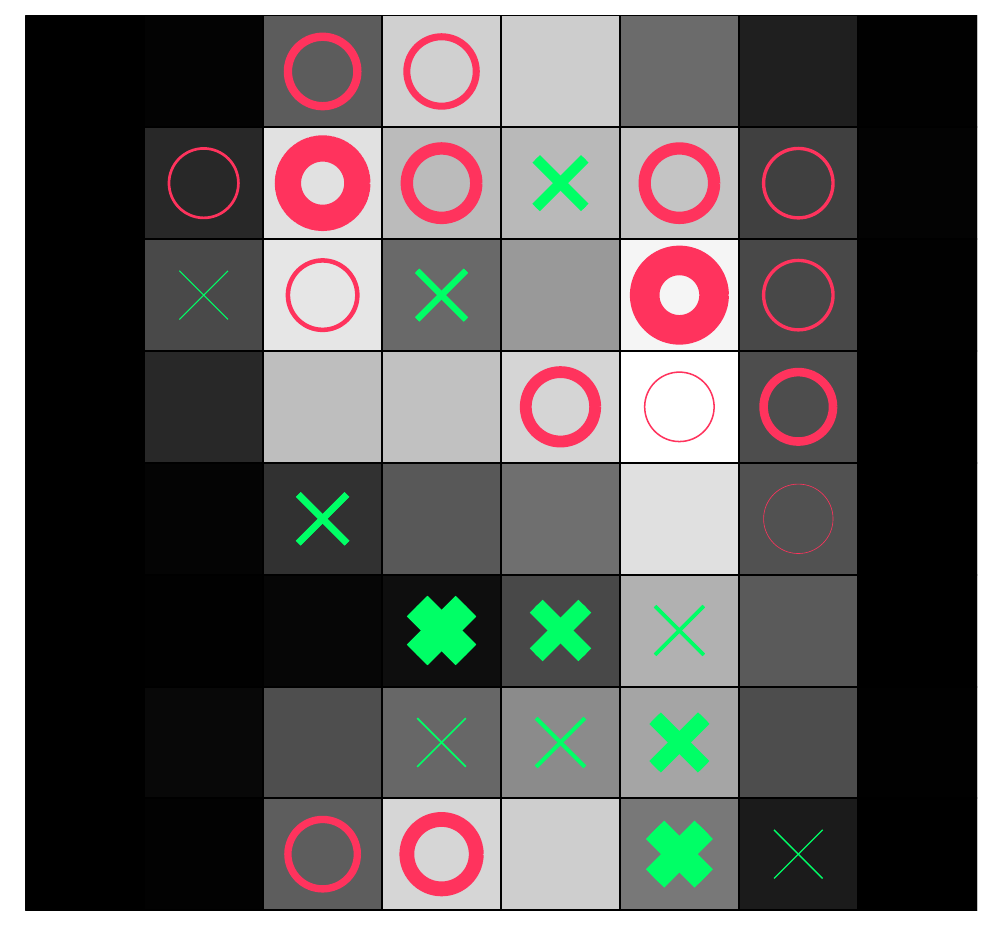}
    }
    \subfigure[\mbh]{
    \includegraphics[width=0.16\textwidth,clip]{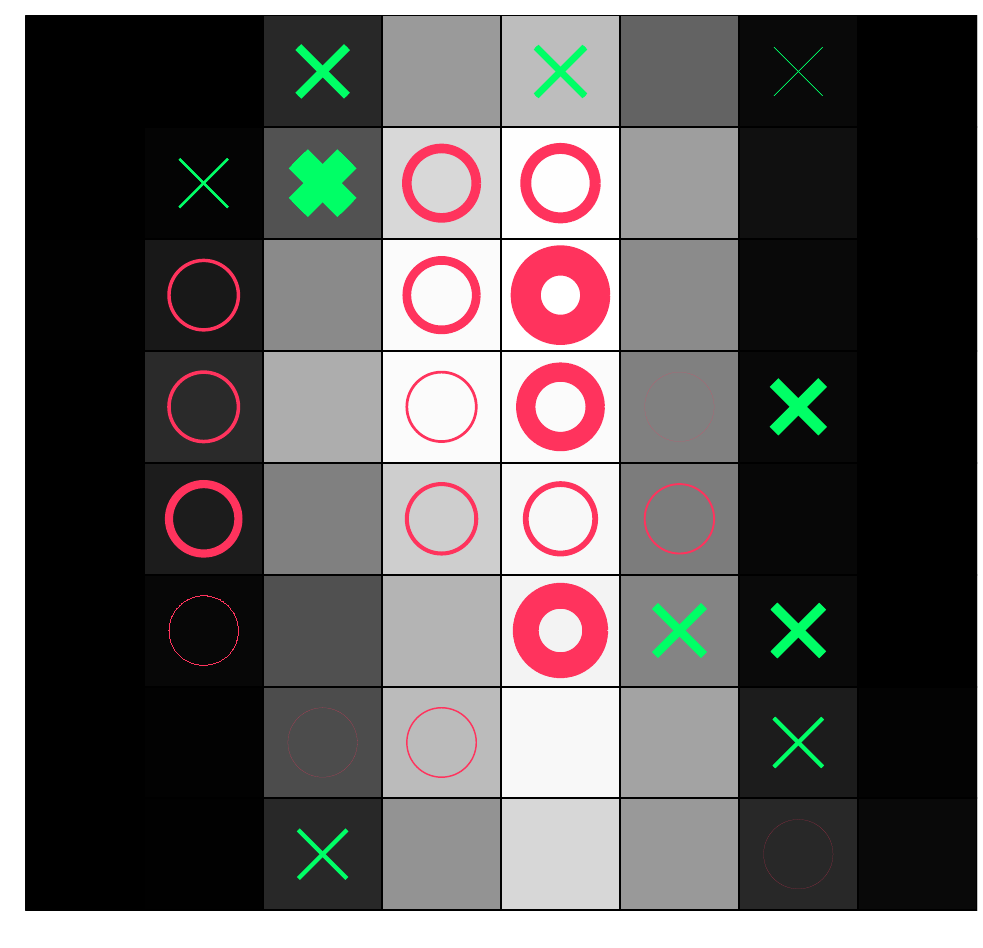}
    \includegraphics[width=0.16\textwidth,clip]{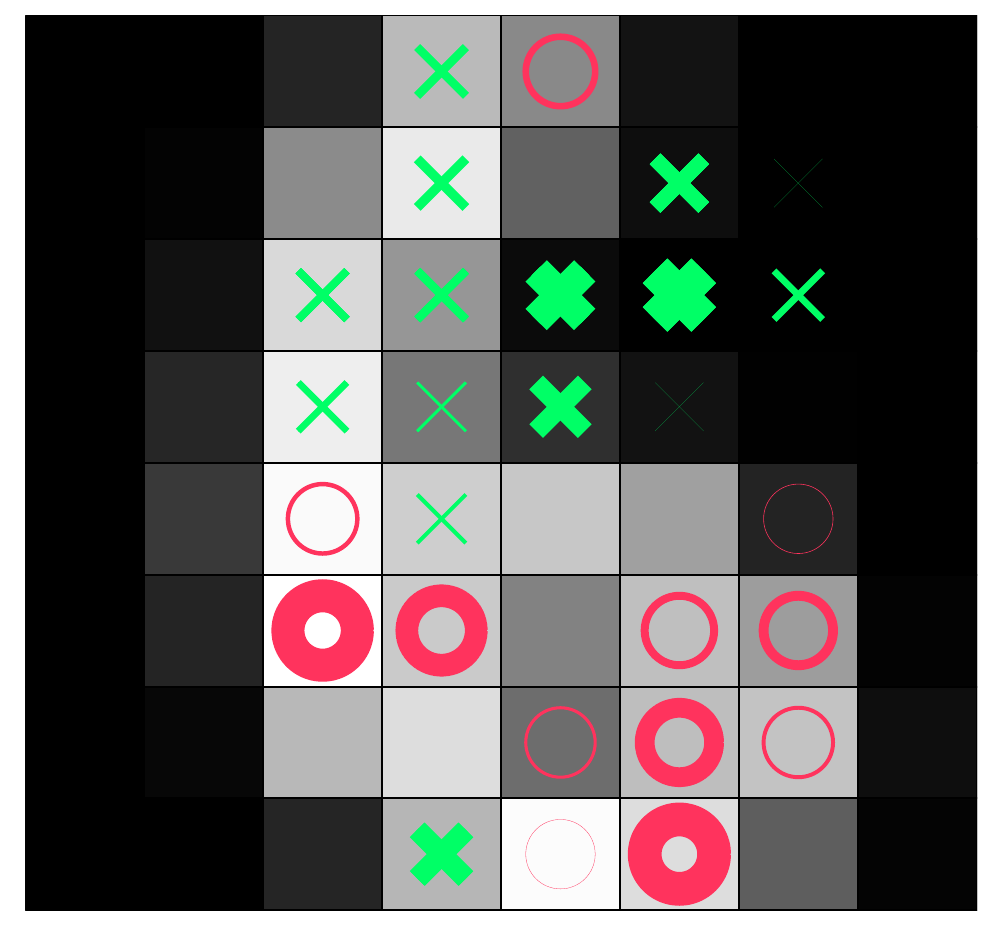}
    \includegraphics[width=0.16\textwidth,clip]{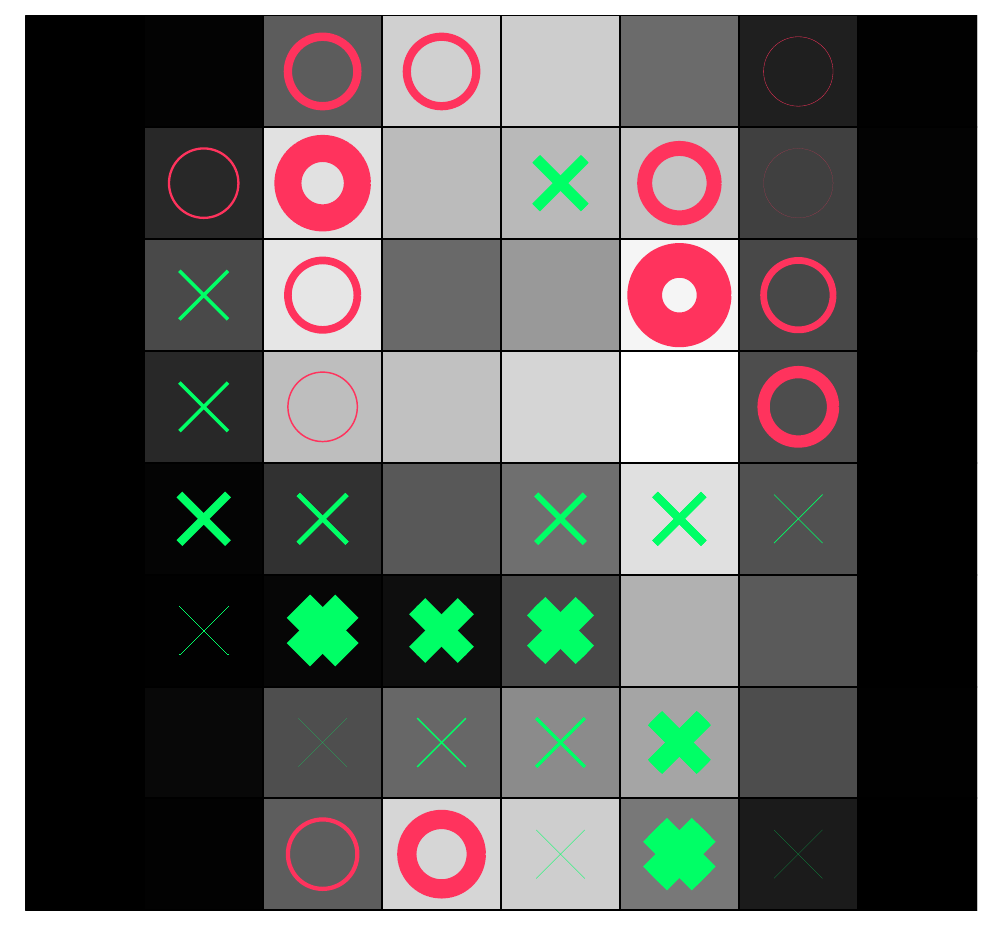}
    }
    \subfigure[\mbe]{
    \includegraphics[width=0.16\textwidth,clip]{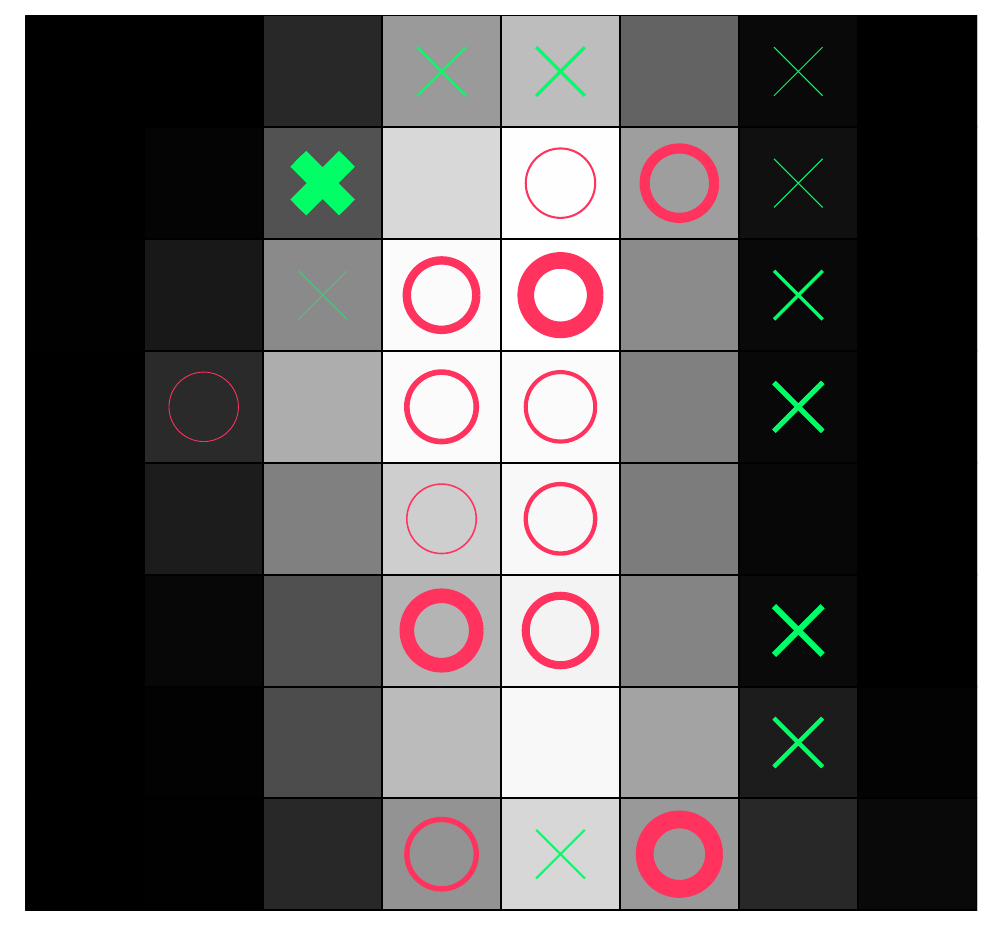}
    \includegraphics[width=0.16\textwidth,clip]{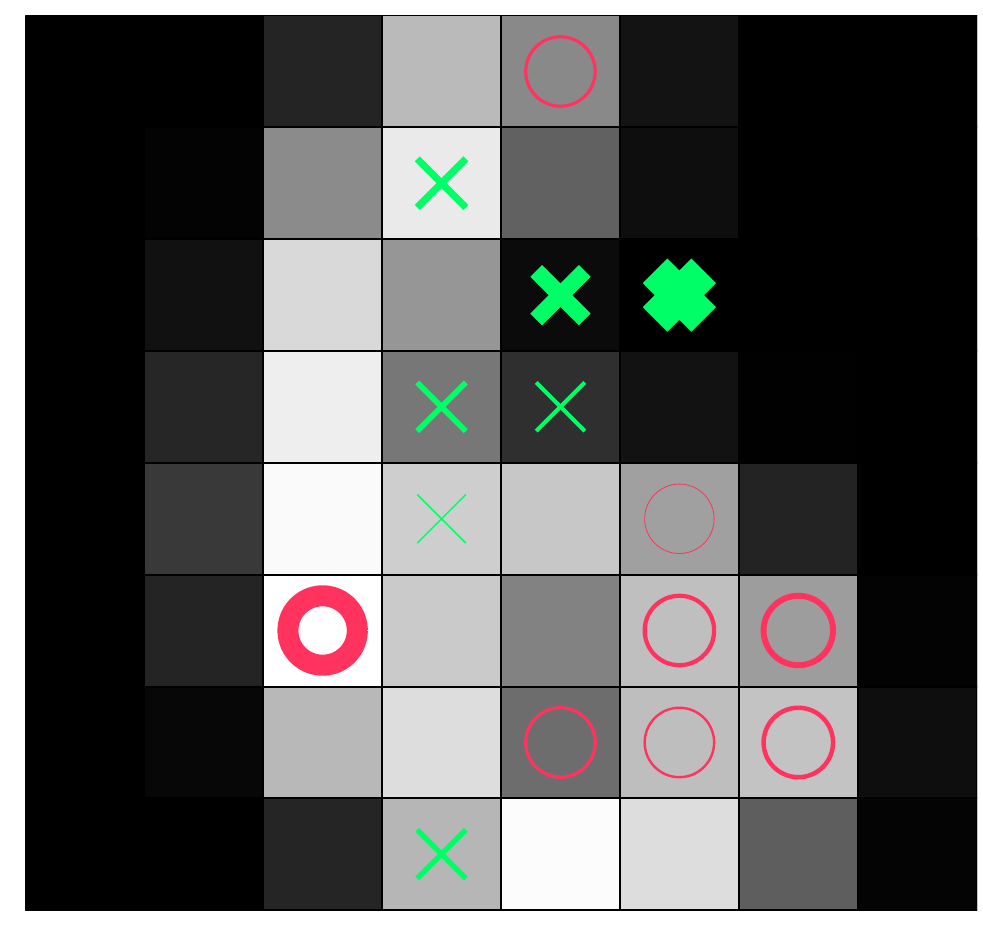}
    \includegraphics[width=0.16\textwidth,clip]{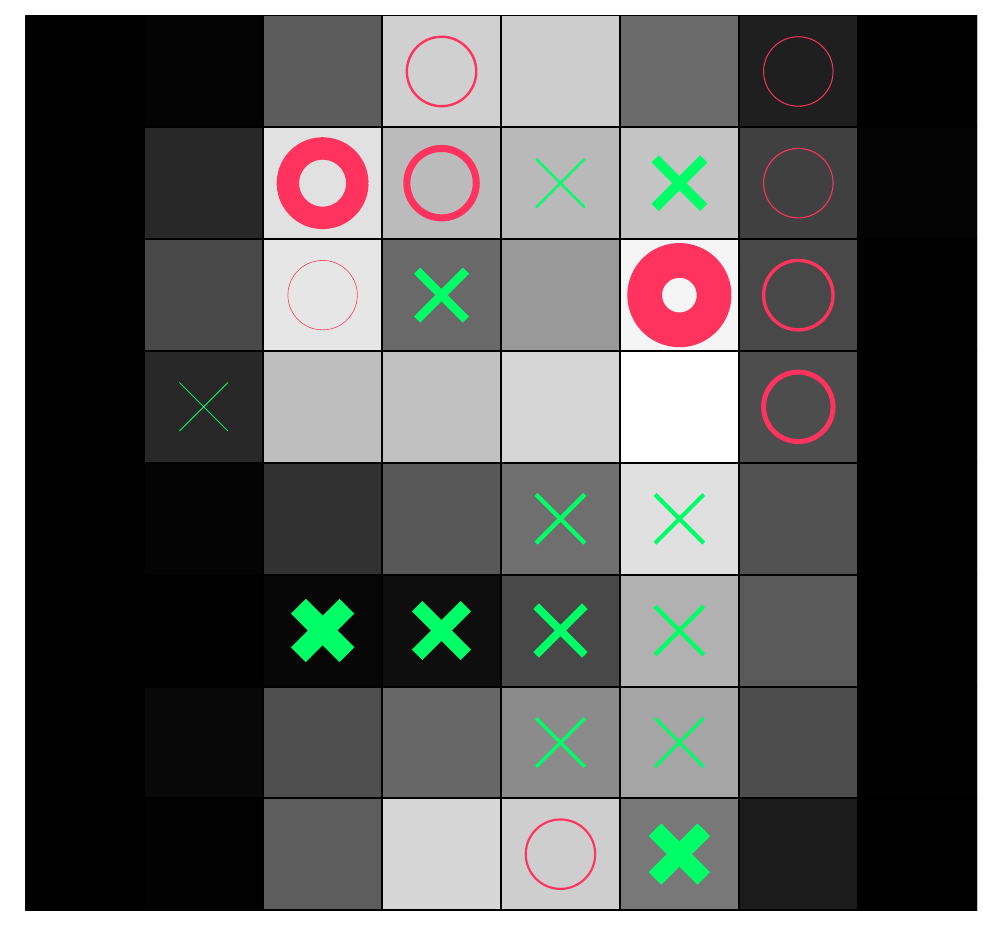}
    }
    \end{center}
       \caption{
       Plot (a) shows the mean images of the samples
       belonging to digits ``1'', ``6'' and ``9''.  Each block is a feature
       and is numerically indexed.
       The remaining plots illustrate the classification models trained on this data set
       by (b) AdaBoost.ECC, (c) \mbh and (d) \mbe.
       Red circles indicate that weak classifiers on these features
       should take large values;
       Green crosses indicate small values should be taken on these
       features in order to make correct classification.
       The width of a mark is proportional to the weight of the stump.
       We can see that \mbh is slightly better than AdaBoost.ECC,
       e.g., on the $43$-th and $21$-th features.
       }
    \label{figCVPR11:opt}
    \end{figure}

    We first performed a few sets of experiments to compare \multiboost
    with previous multi-class boosting algorithms.
    For fair comparison, we focus on the multi-class algorithms
    using binary weak learners, including AdaBoost.MO and AdaBoost.ECC,
    which are still considered as the state-of-the-art.
    For AdaBoost.MO, the error-correcting output codes are introduced
    to reduce the primal problem into multiple binary ones; for AdaBoost.ECC,
    the binary partitioning is made at each iteration by using the ``random-half" method,
    which has been experimentally proven better
    than the optimal ``max-cut'' solution \cite{li2006multiclass}.
    Decision stumps are chosen as the weak classifiers for all boosting algorithms,
    due to its simplicity and the  controlled complexity of the weak learner.

    Convex optimization problems are involved in \mbh and \mbe. To solve them,
    we use the off-the-shelf Mosek convex optimization package, 
    which provides solutions for both primal and dual problems simultaneously with its
    interior-point Newton method.
    We also need to set the regularization parameter $\nu$ for these two algorithms using cross
    validation.
    For each run, a five-fold cross validation is carried out first
    to determine the best $\nu$. Notice that the loss functions in \mbh and \mbe
    may have different scales, we choose the parameter
    from $\{10^{-4}$, $10^{-3}$, $5\cdot 10^{-3}$, $0.01$, $0.02$, $0.04$, $0.05\}$ for the former,
    and the candidate pool $\{10^{-8}$, $10^{-7}$, $5 \cdot 10^{-7}$, $8 \cdot 10^{-7}$,
$10^{-6}$, $ 2 \cdot 10^{-6}$, $4 \cdot 10^{-6}$, $8 \cdot 10^{-6}$, $10^{-5}\}$ for the latter.

    \paragraph{Toy data}
    In the first experiment, we make the comparison on a toy data set,
    which consists of 4 clusters of planar points.
    Each cluster has 50 samples, which are drawn from their respective normal distribution.
    As shown in Figure \ref{figCVPR11:toy}(a), the centers of the circles indicate where their means are,
    and the radii depict the different deviations.
    We run the boosting algorithms on this toy data set
    and plot the decision boundaries on the $x$-$y$ plane.
    Figures \ref{figCVPR11:toy}(b)-(e) illustrate the results
    when the number of training iterations is set to be $100$.
    In this case, it is hard to state which model is better.
    However, if we increase the iteration to $5000$ times, the planes in (f) and (g)
    are apparently over segmented by AdaBoost.MO and AdaBoost.ECC. On the contrary,
    the decision boundaries of (h) \mbh and (i) \mbe seem closer to the
    true decision boundary.
    Unlike the others, models trained by AdaBoost.MO are more complex,
    since this learning method assembles $\ell$ weak classifiers rather than one at each iteration
    if $\ell$-length codewords are used. Empirically we see that
    AdaBoost.ECC also  seems  susceptible to over-fitting.

    \begin{figure*}[!ht]
    \begin{center}
    \subfigure[data]{\includegraphics[width=0.3\textwidth,clip]{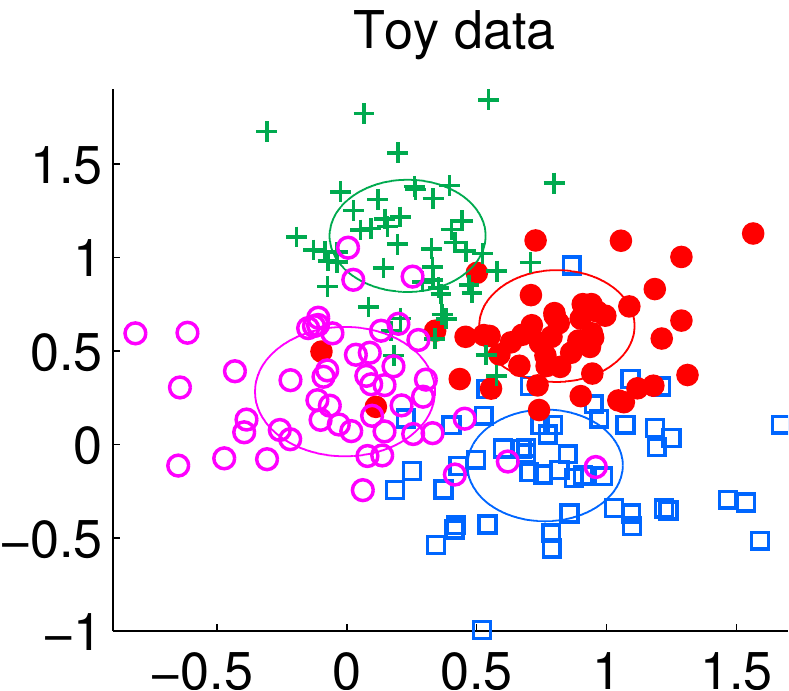}}
    \subfigure[AdaBoost.MO, 100 iterations]{\includegraphics[width=0.3\textwidth,clip]{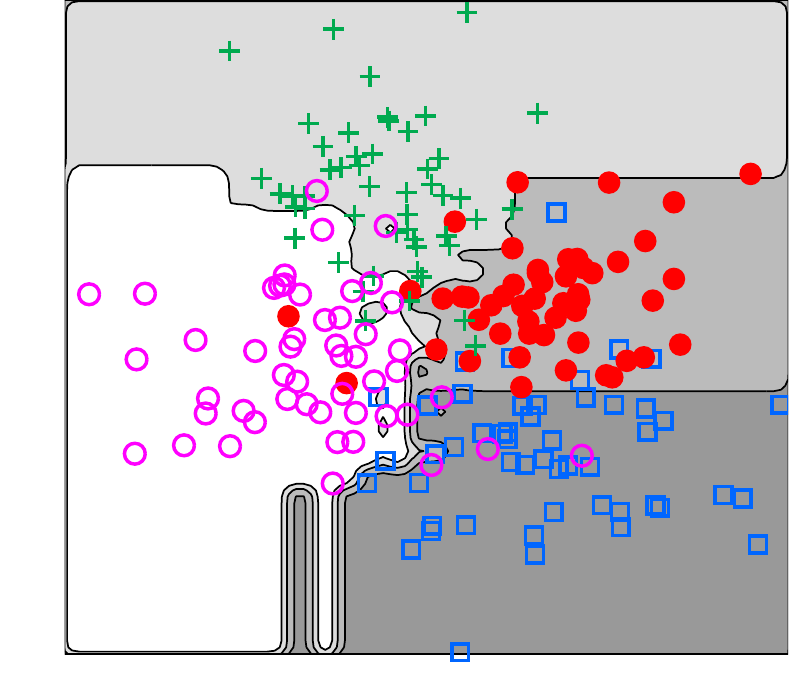}}
    \subfigure[AdaBoost.ECC, 100 iterations]{\includegraphics[width=0.3\textwidth,clip]{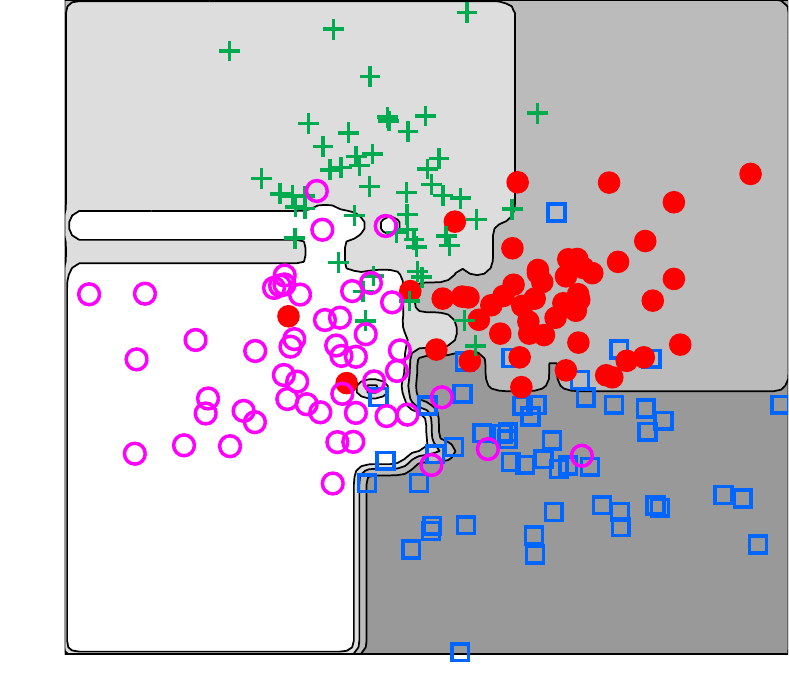}}
    \subfigure[\mbh, 100 iterations]{\includegraphics[width=0.3\textwidth,clip]{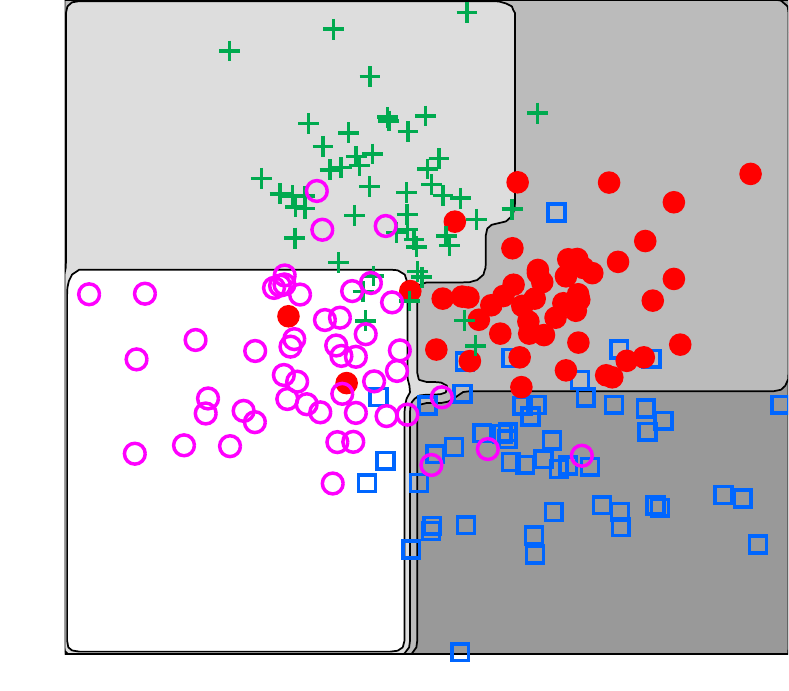}}
    \subfigure[\mbe, 100 iterations]{\includegraphics[width=0.3\textwidth,clip]{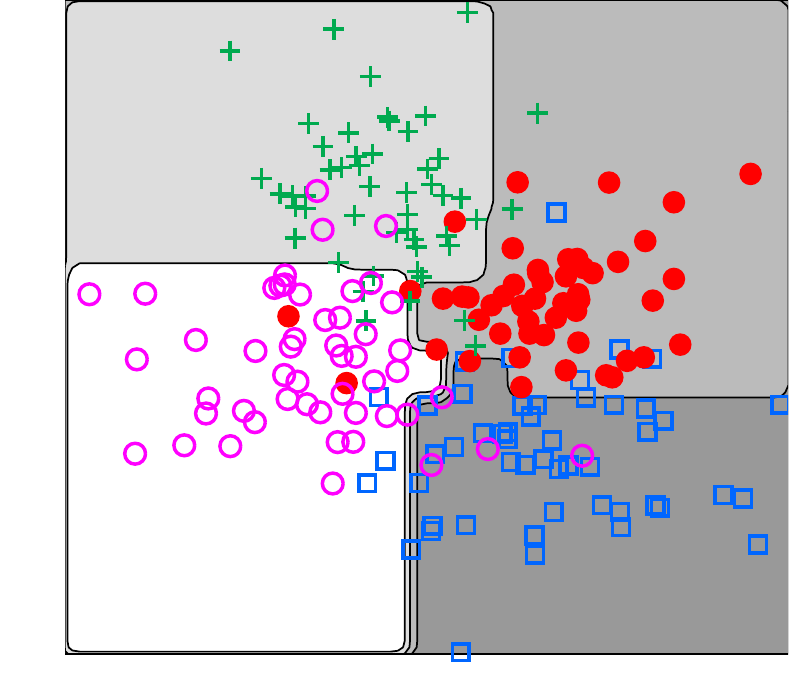}}
    \subfigure[AdaBoost.MO, 5000 iterations]{\includegraphics[width=0.3\textwidth,clip]{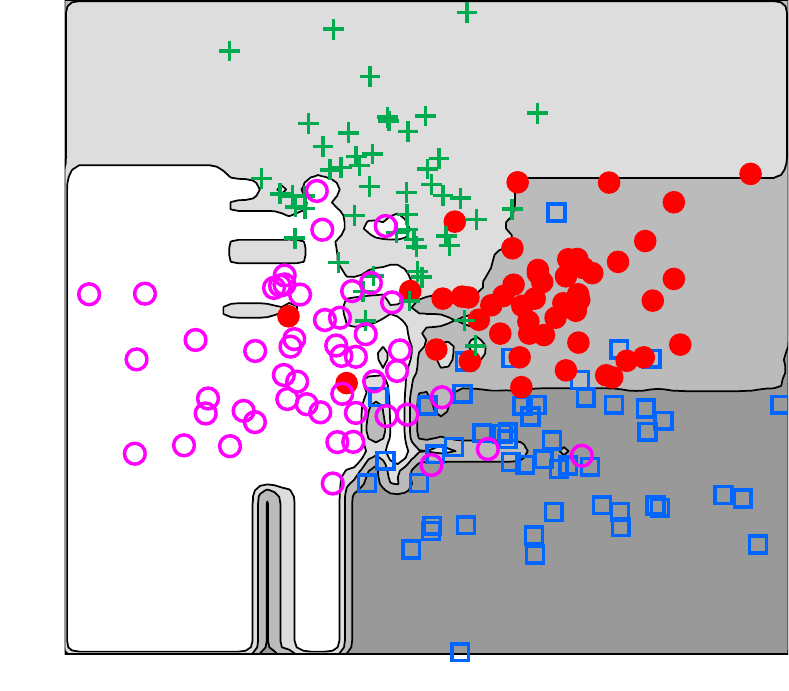}}
    \subfigure[AdaBoost.ECC, 5000 iterations]{\includegraphics[width=0.3\textwidth,clip]{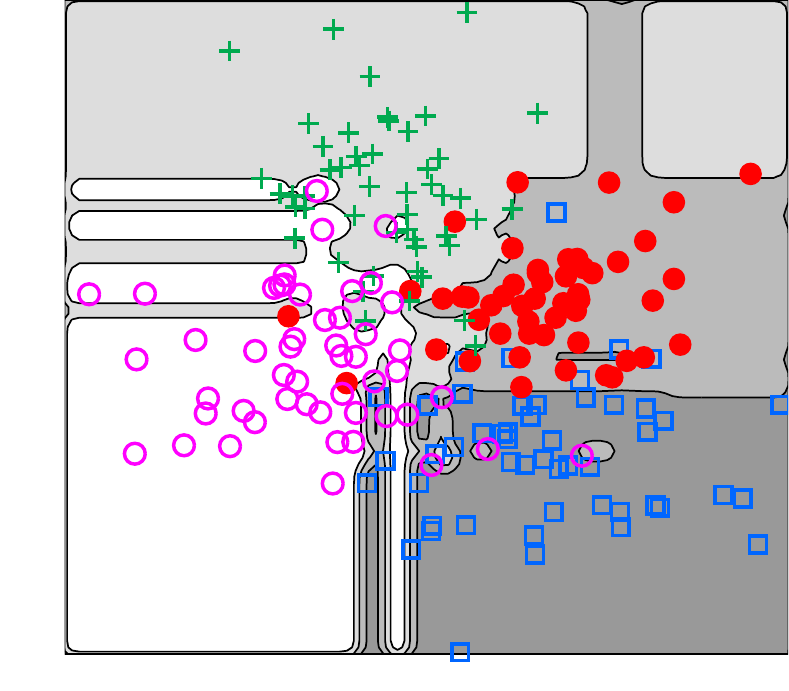}}
    \subfigure[\mbh, 5000 iterations]{\includegraphics[width=0.3\textwidth,clip]{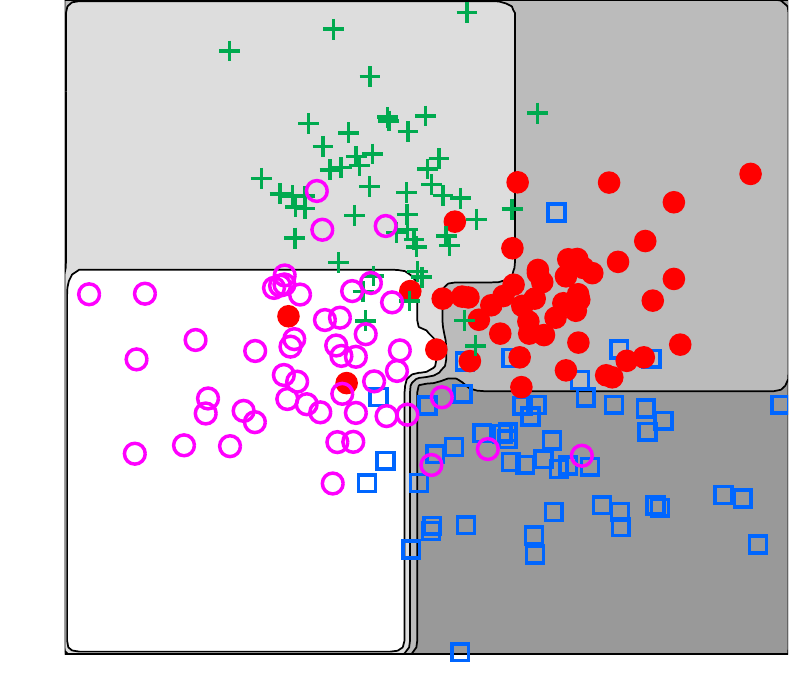}}
    \subfigure[\mbe, 5000 iterations]{\includegraphics[width=0.3\textwidth,clip]{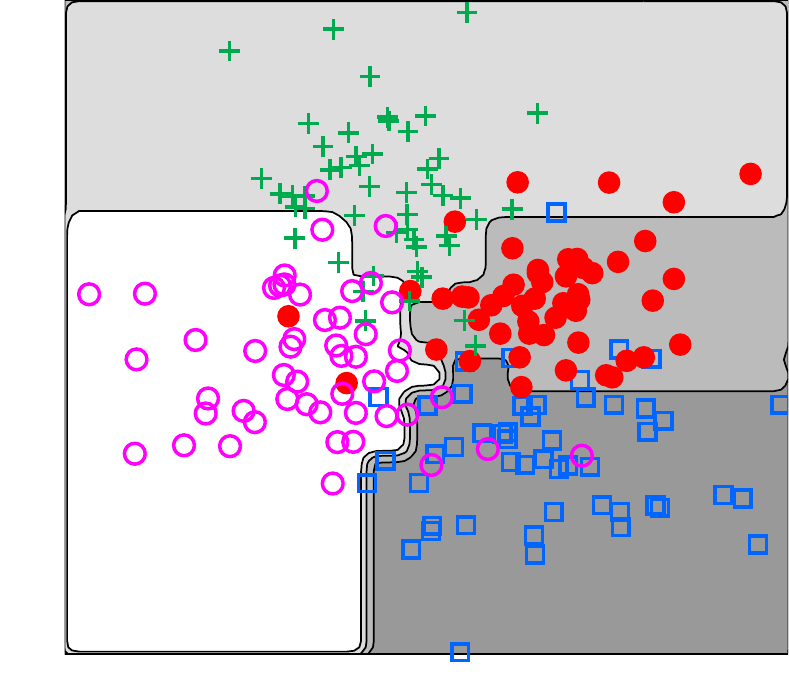}}
    \end{center}
       \caption{
            Figure (a) shows a toy data set, which contains 4 classes
            and a total of 200 sample points.
            Boosting algorithms are trained on this set using decision stumps.
            Plots (b)-(e) illustrate the decision boundaries made by
            (b) AdaBoost.MO, (c) AdaBoost.ECC, (d) \mbh and (e) \mbe
            with the number of training iterations being $100$.
            For comparison, plots (f)-(i) illustrate the decision boundaries of these algorithms,
            respectively,
            when the number of iterations is $5000$.
            (f) and (g) apparently suffer from over-fitting.
        }
    \label{figCVPR11:toy}
    \end{figure*}

    \begin{figure}[t]
    \begin{center}
    \includegraphics[width=0.45\linewidth]{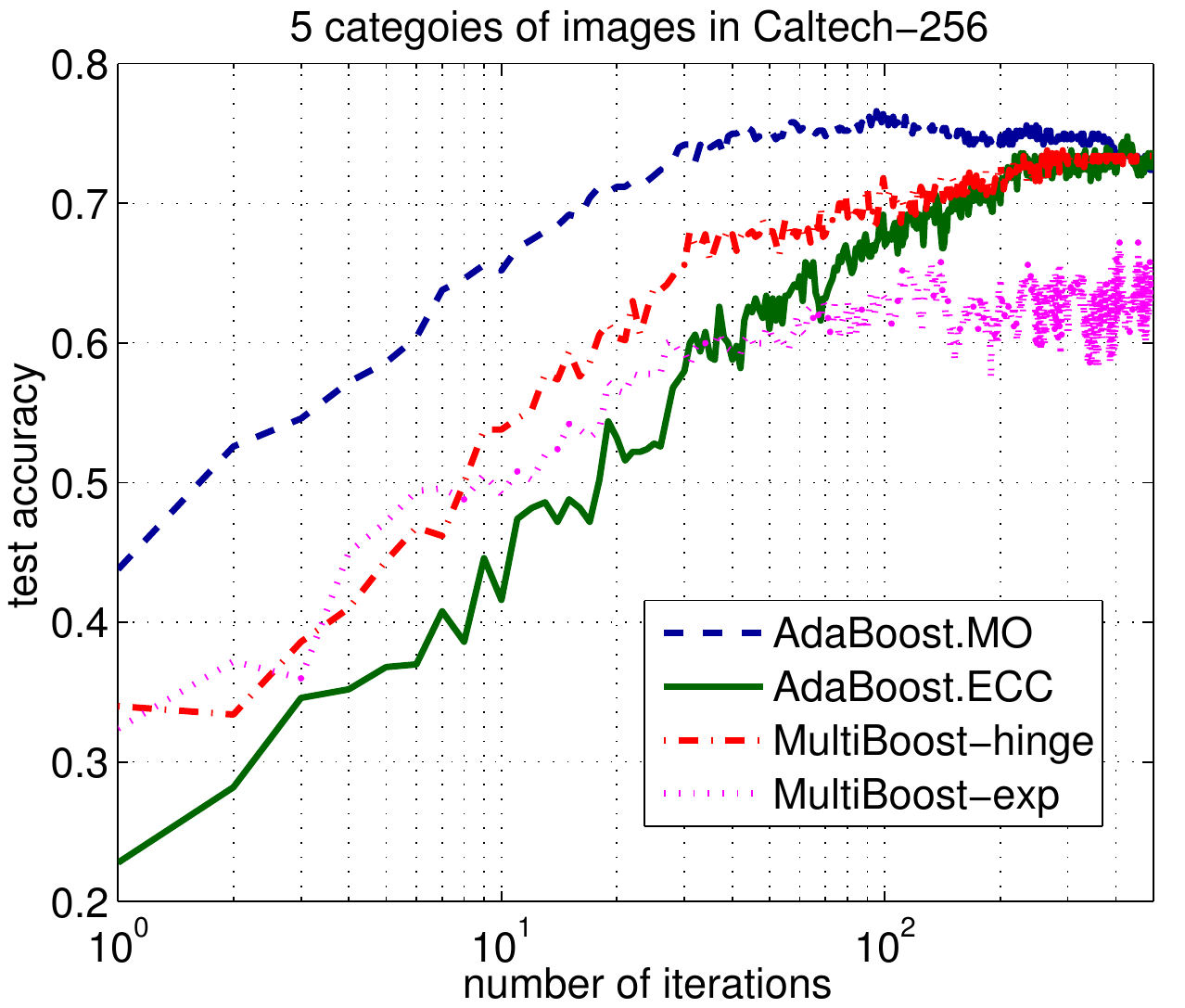}
    \end{center}
       \caption{
        Test accuracy curves of four boosting algorithms on 5 categories of Caltech-256 images.
        The weak classifiers are decision stumps and the number of training iterations is $500$.
        The average results of $10$ runs are reported.
        Each run randomly selects $75\%$ data for training and the other $25\%$ for test.
       }
    \label{figCVPR11:caltech}
    \end{figure}

    \paragraph{UCI data sets}
    Next we test our algorithms on $7$ data sets collected from UCI repository.
    Samples are randomly divided into $75\%$ for training and $25\%$ for test,
    no matter whether there is a pre-specified split or not. Each data set is run $10$ times
    and the average results of test error are reported in Table \ref{table:uci}.
    The maximum number of iterations is set to $500$. Almost all the algorithms converge
    before the maximum iteration.
    Again the regularization parameter is determined by 5-fold cross validation.

    Table \ref{table:uci} reports the results. The conclusion that we can draw on
    this experiment is: 1) Overall, all the algorithms achieve comparable accuracy.
    2) our algorithms are slightly better in terms of generalization ability
    than the other two on $5$ out of $7$ data sets.
    \mbe outperforms others in  $4$ data sets. 3)
    Also note that the performance \mbh is more stable than \mbe, which may be
    due to the fact that the hinge loss is less sensitive to noise than
    the exponential loss.

    \begin{table*}[ht!]
    \begin{center}
    \begin{tabular}{l|c |c | c | c }
    \hline
    dataset &AdaBoost.MO &AdaBoost.ECC & \mbh & \mbe \\
    \hline\hline
        thyroid     &0.005$\pm$0.001 &0.005$\pm$0.001 &0.005$\pm$0.001
				    &\textbf{0.004$\pm$0.001}  \\
	    dna         &0.059$\pm$0.005 &0.064$\pm$0.005 &\textbf{0.057$\pm$0.007}
				    &0.061$\pm$0.004 \\
	    wine        &0.036$\pm$0.025 &0.034$\pm$0.029 &0.032$\pm$0.018
				    &\textbf{0.030$\pm$0.029}  \\
	    iris        &0.062$\pm$0.017 &0.073$\pm$0.021 &0.068$\pm$0.022
				    &\textbf{0.057$\pm$0.022}  \\
	    glass       &\textbf{0.232$\pm$0.047}  &0.242$\pm$0.053 &0.234$\pm$0.046
				    &0.315$\pm$0.086 \\
	    svmguide2   &0.213$\pm$0.039 &0.214$\pm$0.030 &0.222$\pm$0.052
				    &\textbf{0.206$\pm$0.040}  \\
        svmguide4   &0.192$\pm$0.018 &\textbf{0.191$\pm$0.018}  &0.207$\pm$0.018
				    &0.214$\pm$0.027 \\
    \hline
    \end{tabular}
    \end{center}
    \caption{
        Test errors of four boosting algorithms on UCI data sets.
        The average results of 10 repeated tests are reported.
        Weak classifiers are decision stumps.
        \mbe is the best on 4 out of 7 data sets.
        }
    \label{table:uci}
    \end{table*}

\paragraph{Handwriting digits recognition}
    To further examine the effectiveness of our algorithms,
    We have conducted another experiment on a handwritten digits data set,
    which is also from UCI repository. The original data set contains $5620$ digits
    written by a total of $43$ people on $32\times32$ bitmaps.
    Then the bitmaps are divided into $4\times4$ non-overlapping blocks,
    and an $8\times8$ descriptor is generated by calculating the sum of $0$-$1$ pixels in each block.
    For ease of exposition, only $3$ distinct digits of ``1'', ``6'' and ``9'' are
    chosen for classification.
    Figure \ref{figCVPR11:opt}(a) illustrates the mean images of their training data examples
    of the three digits.
    The index of each block (feature) is also printed on Figure \ref{figCVPR11:opt}(a)
    for the convenience of exposition.

    We train multi-class boosting on this data set.
    The number of maximum training iterations is set to $500$.
    $75\%$ data are used for training, and the rest for test.
    Again $ 5 $-fold cross validation is used.
    We still use decision stumps as the weak classifiers.
    Boosting learning with decision stumps implies that we select features at the same time.
    In other words, decision stumps select most discriminative blocks
    for classifying these digits.
    The four compared algorithms have similar performances on this test
    with nearly $98\%$ test accuracy.
    We plot the models of AdaBoost.ECC, \mbh and \mbe in Figures \ref{figCVPR11:opt}(b)-(d).
    AdaBoost.MO can be hardly illustrated as it involves a multi-dimensional
    coding scheme.
    Notice that a decision stump divides the value range of the feature into two parts,
    on which there are necessarily two different attributions,
    we use red circles and green crosses to represent the positive and negative parts.
    For example, if a decision stump on the $10$-th feature is $x_{10} > \tau$
    and assigns a set of weights $\{0.5,0.2,0.8\}$ to three labels,
    we mark $10$-th block in the third digit image with a red circle,
    and $10$-th block in the second digit with a green cross;
    if the stump is $x_{10} < \tau$ with the same weights,
    we do the opposite marks. In other words,
    red circles indicate the decision stumps should take bigger values on these blocks,
    while green crosses indicate these classifiers should take some values as small as possible.
    The width of a mark stands for the minimal margin defined in Equation \eqref{EQCVPR11:margin},
    that is, in the $i$-th digit, the width is proportional to
    $h(\x) \w_{y_i} - \max\{ h(\x)\w_r \}$, $\forall r \neq y_i$.
    Some features may be selected multiple times,
    which divide the value range into several segments.
    In this case, we neglect all the middle parts.


\begin{figure*}[tp!]
\setlength{\fboxsep}{1pt}
\setlength{\fboxrule}{1pt}
 \colorbox{green}{\includegraphics[width = 0.52cm,height = 0.85cm]{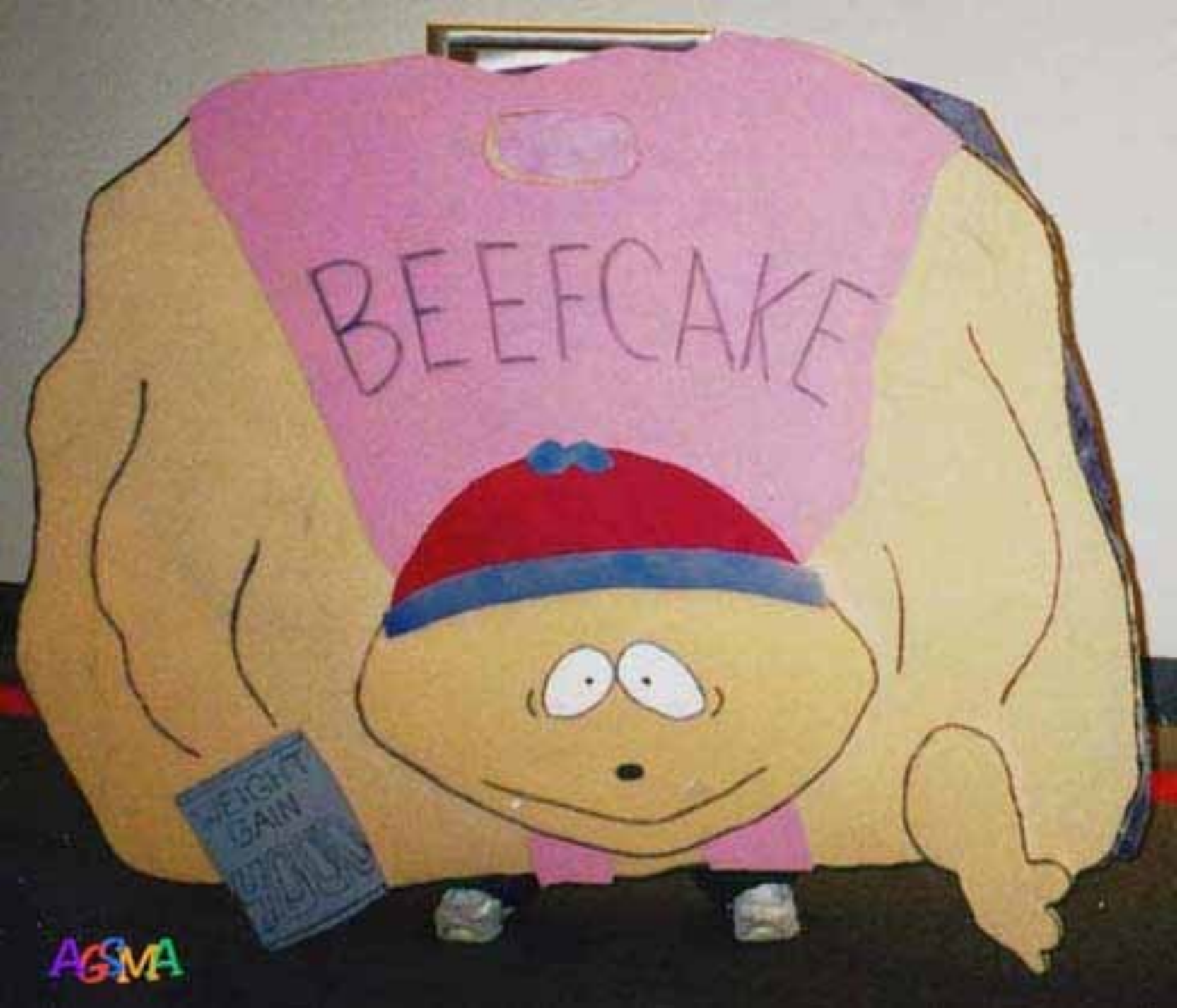}}
 \colorbox{green}{\includegraphics[width = 0.52cm,height = 0.85cm]{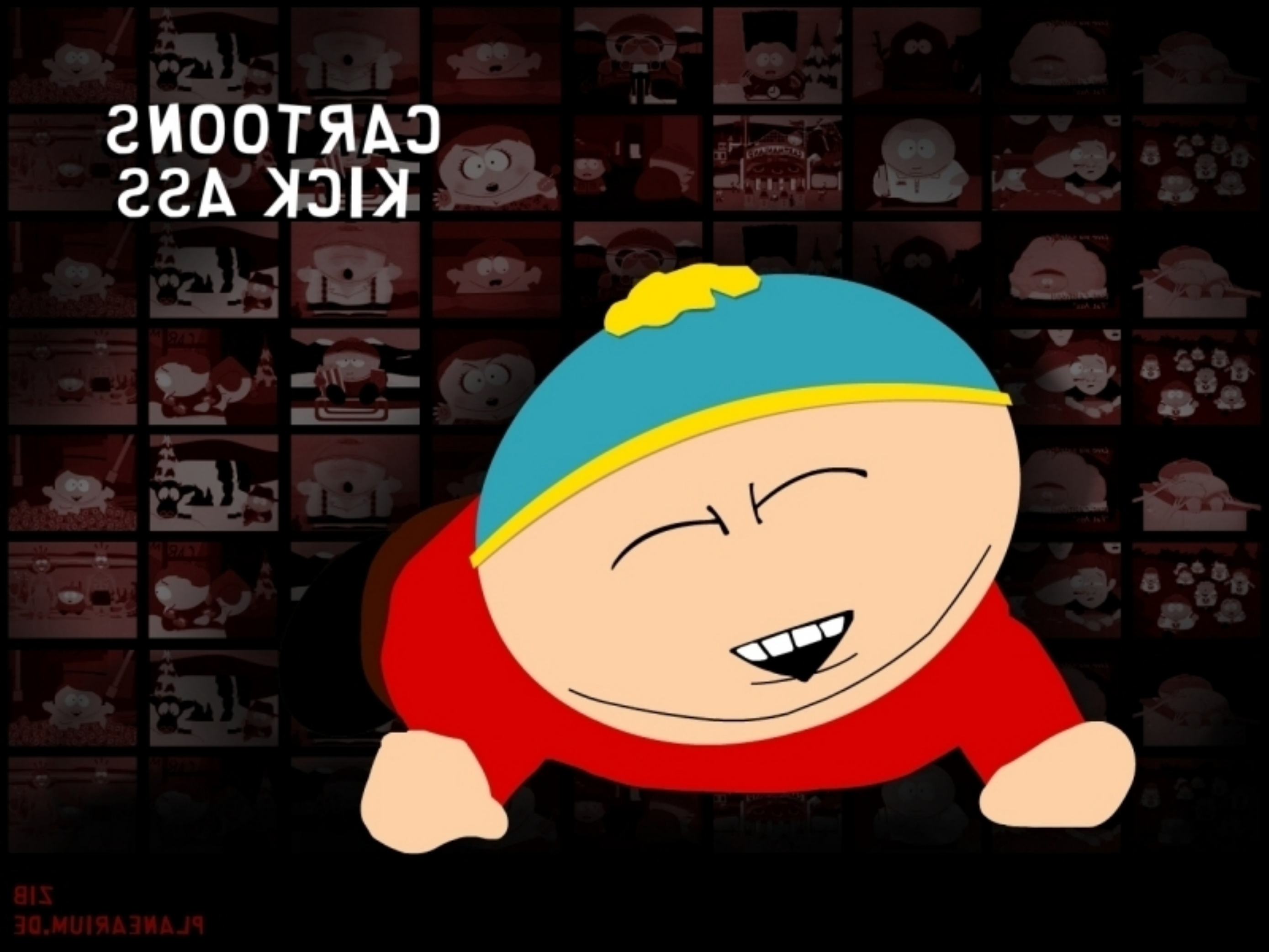}}
 \colorbox{green}{\includegraphics[width = 0.52cm,height = 0.85cm]{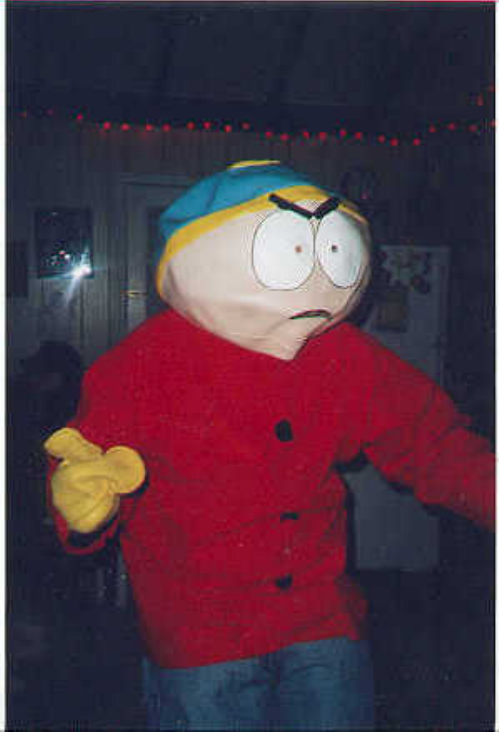}}
 \colorbox{green}{\includegraphics[width = 0.52cm,height = 0.85cm]{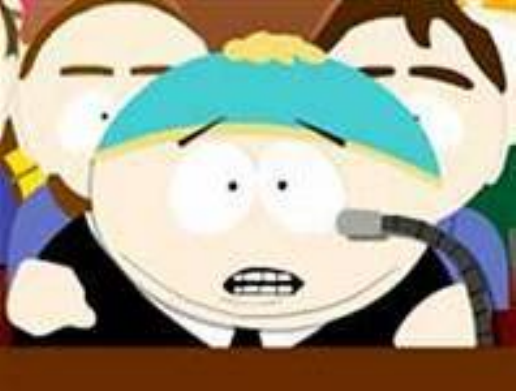}}
 \includegraphics[width = 0.25cm,height = 0.85cm]{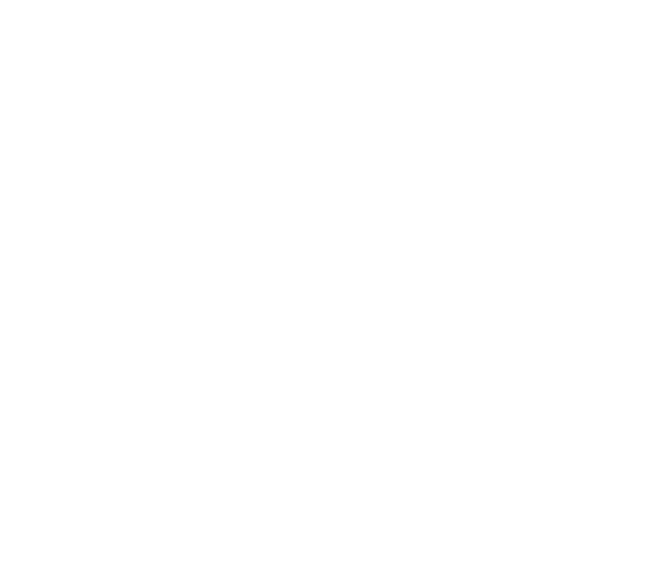}
 \colorbox{green}{\includegraphics[width = 0.52cm,height = 0.85cm]{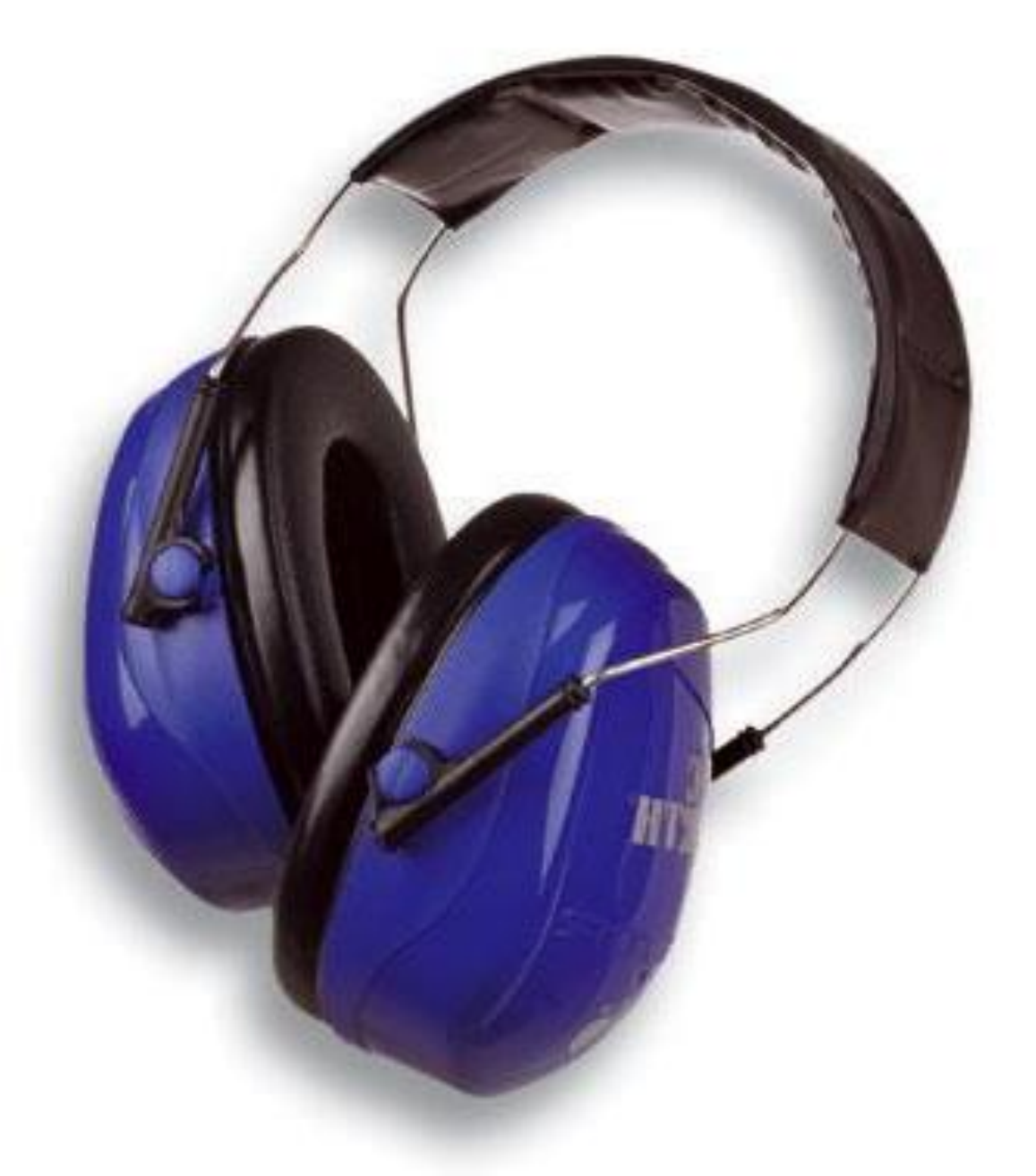}}
 \colorbox{green}{\includegraphics[width = 0.52cm,height = 0.85cm]{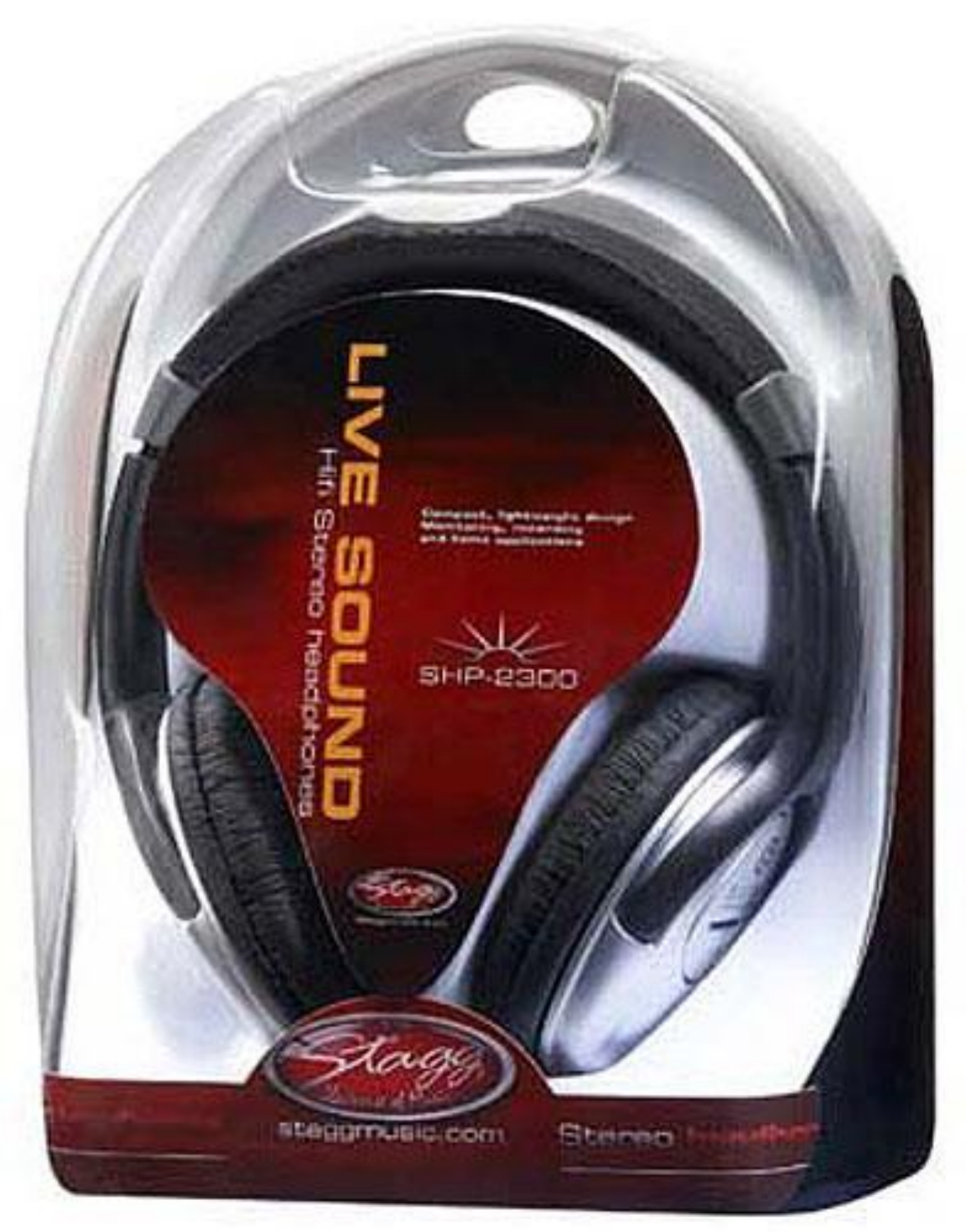}}
 \colorbox{green}{\includegraphics[width = 0.52cm,height = 0.85cm]{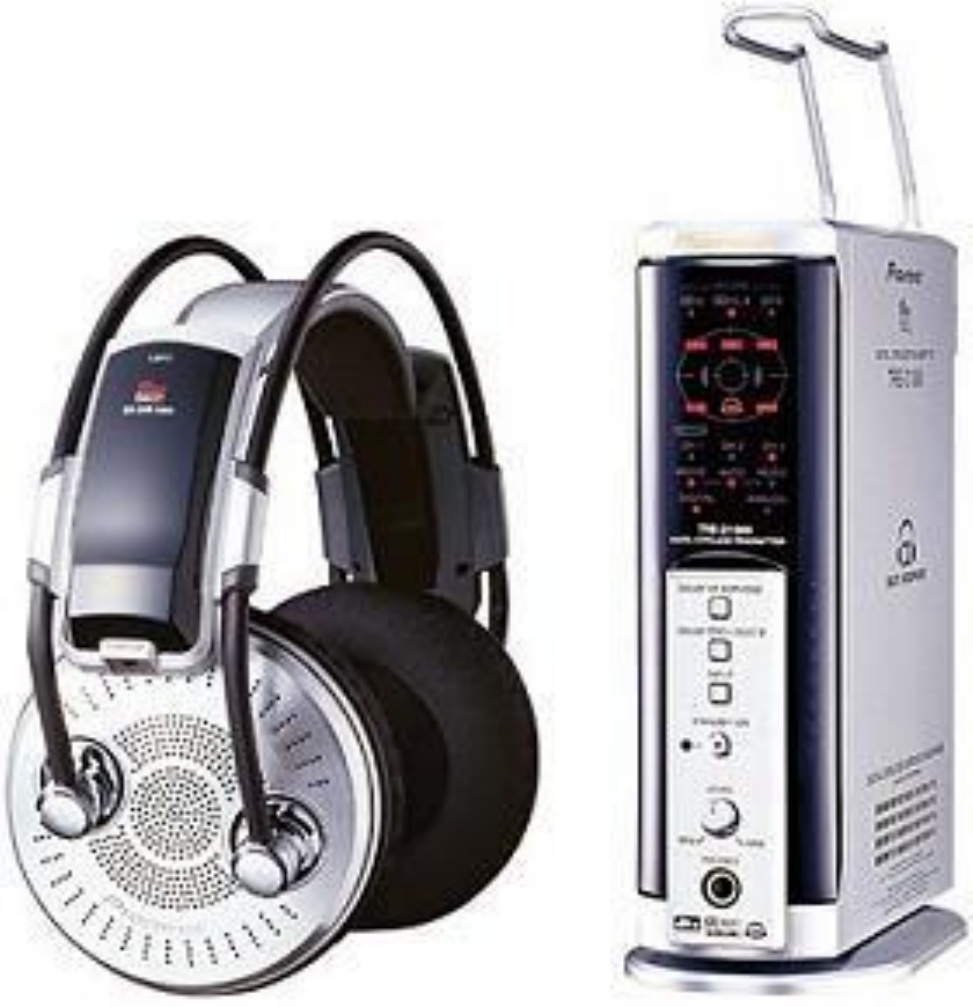}}
 \colorbox{green}{\includegraphics[width = 0.52cm,height = 0.85cm]{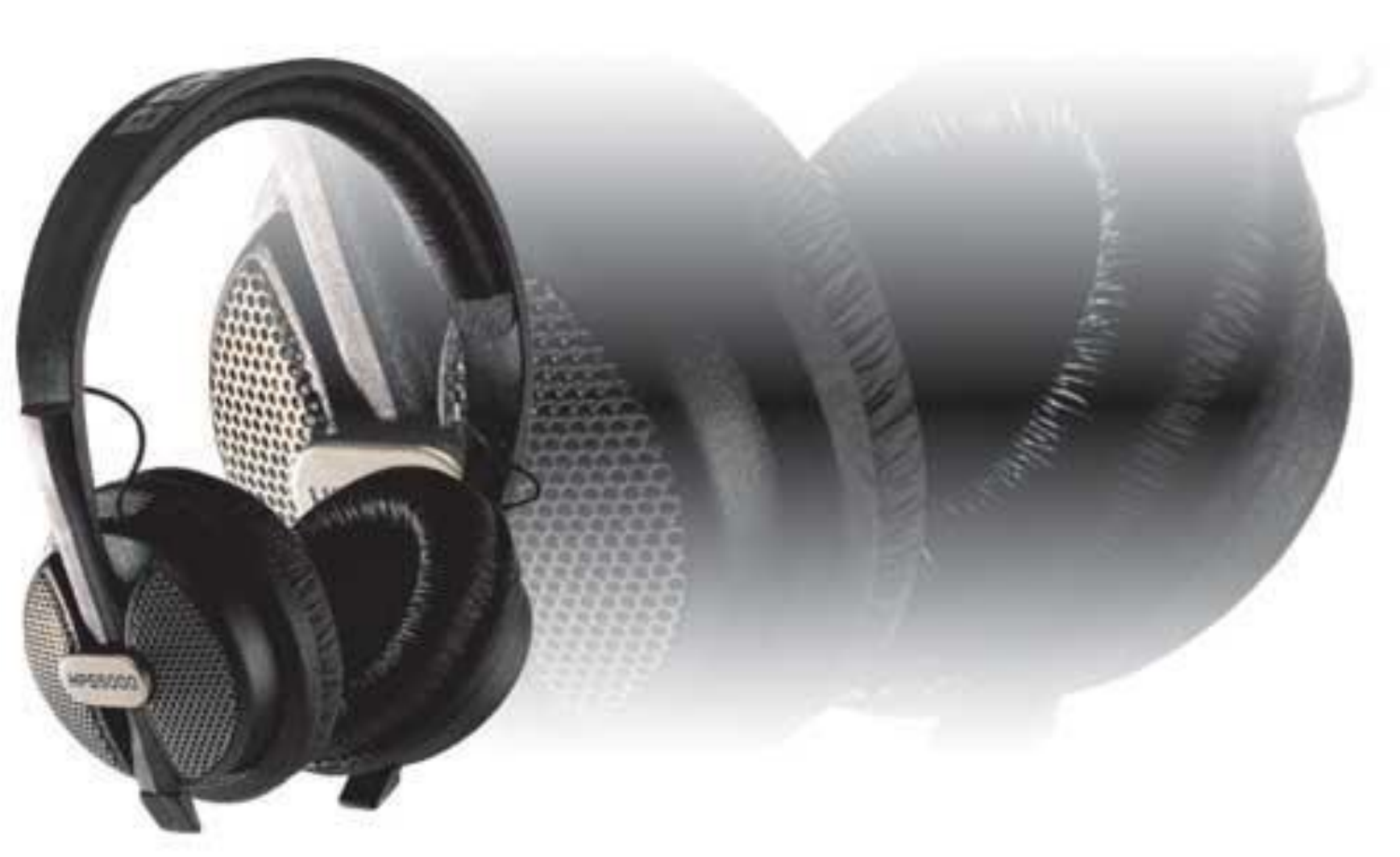}}
 \colorbox{green}{\includegraphics[width = 0.52cm,height = 0.85cm]{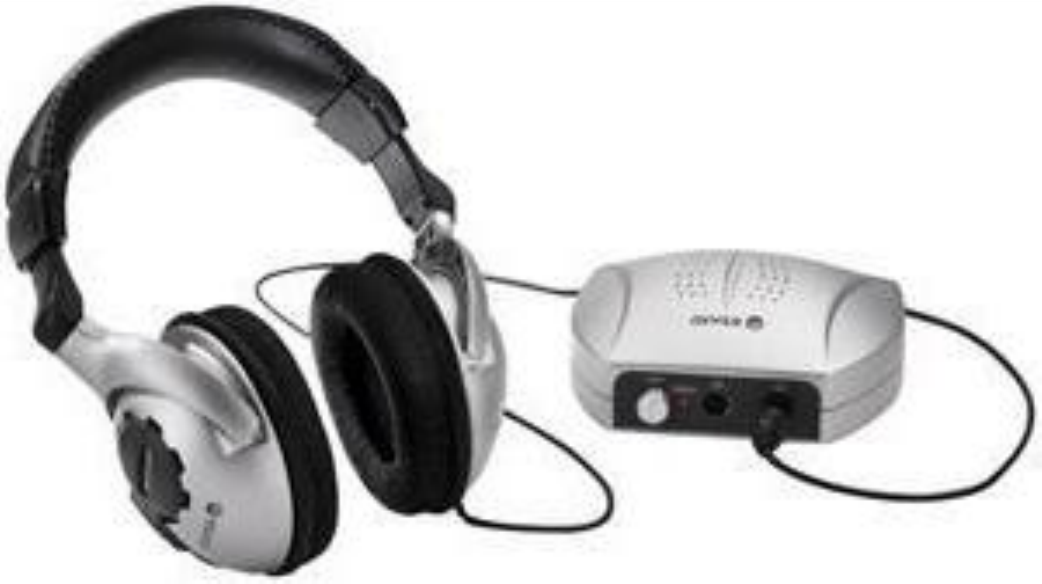}}
 \includegraphics[width = 0.25cm,height = 0.85cm]{white}
 \colorbox{green}{\includegraphics[width = 0.52cm,height = 0.85cm]{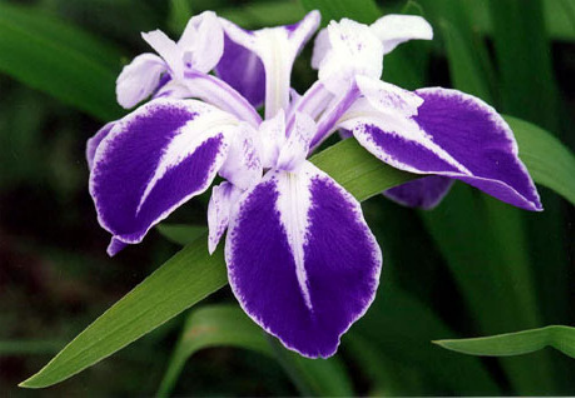}}
 \colorbox{green}{\includegraphics[width = 0.52cm,height = 0.85cm]{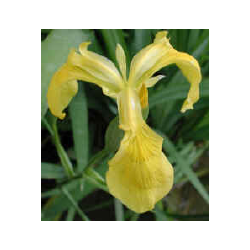}}
 \colorbox{green}{\includegraphics[width = 0.52cm,height = 0.85cm]{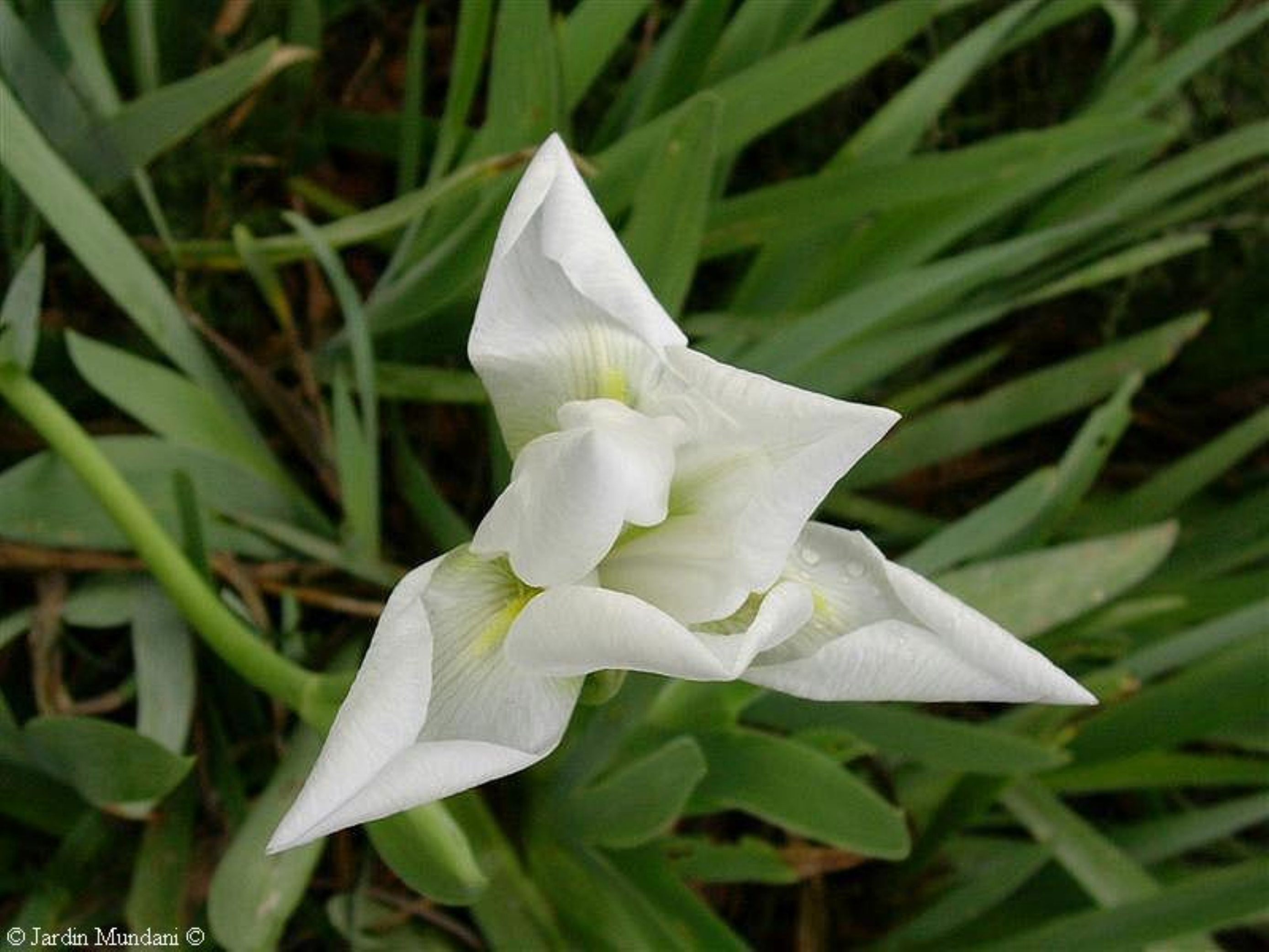}}
 \colorbox{green}{\includegraphics[width = 0.52cm,height = 0.85cm]{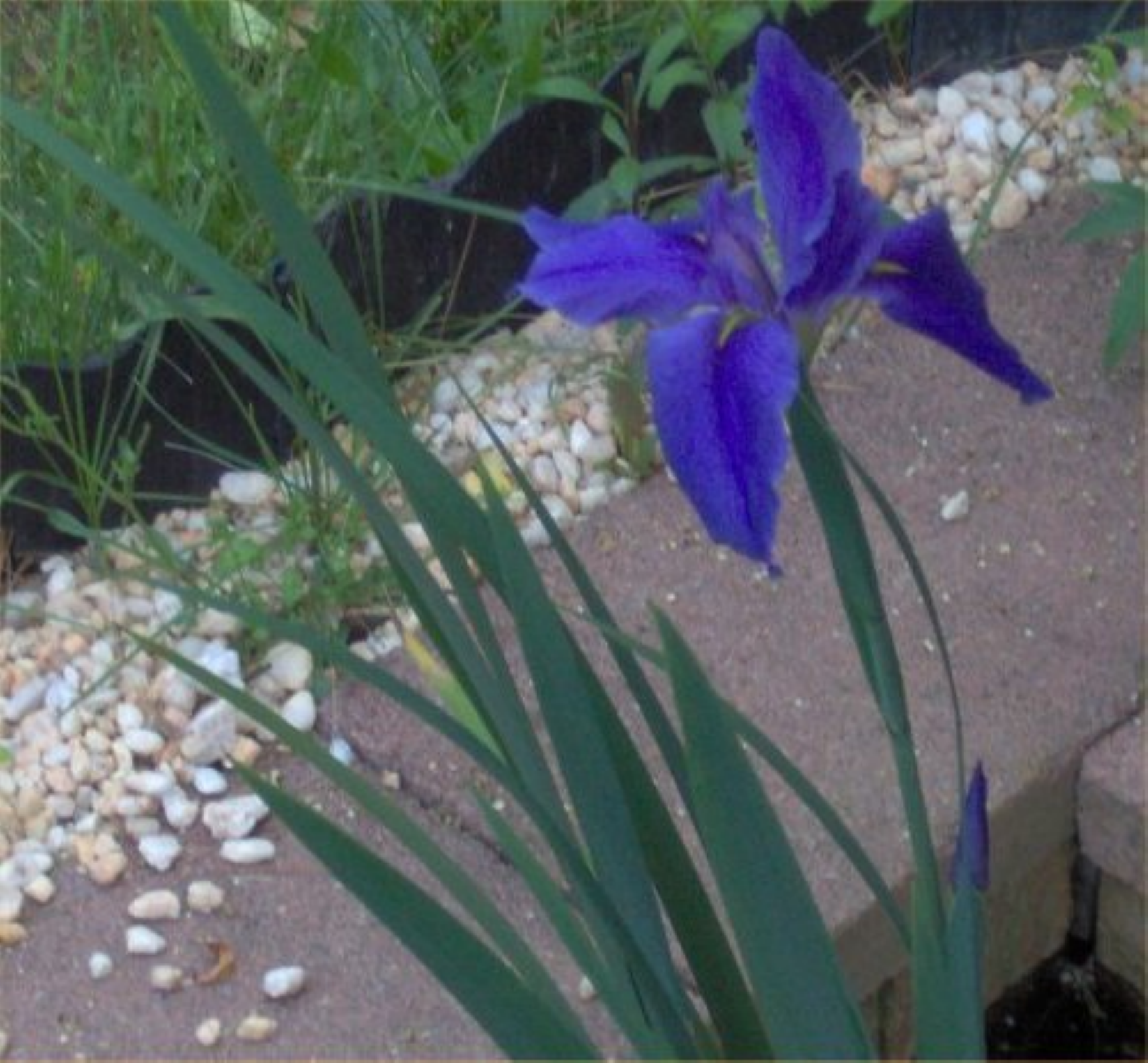}}
 \includegraphics[width = 0.25cm,height = 0.85cm]{white}
 \colorbox{green}{\includegraphics[width = 0.52cm,height = 0.85cm]{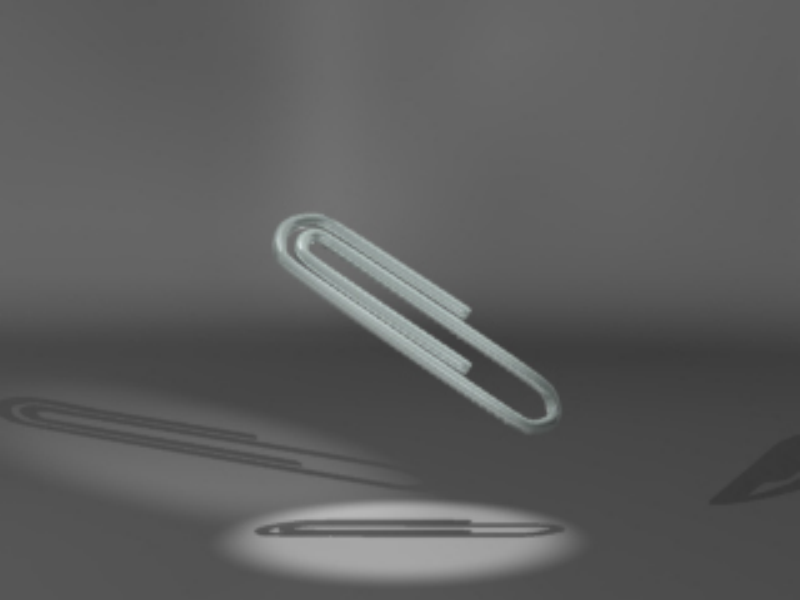}}
 \colorbox{green}{\includegraphics[width = 0.52cm,height = 0.85cm]{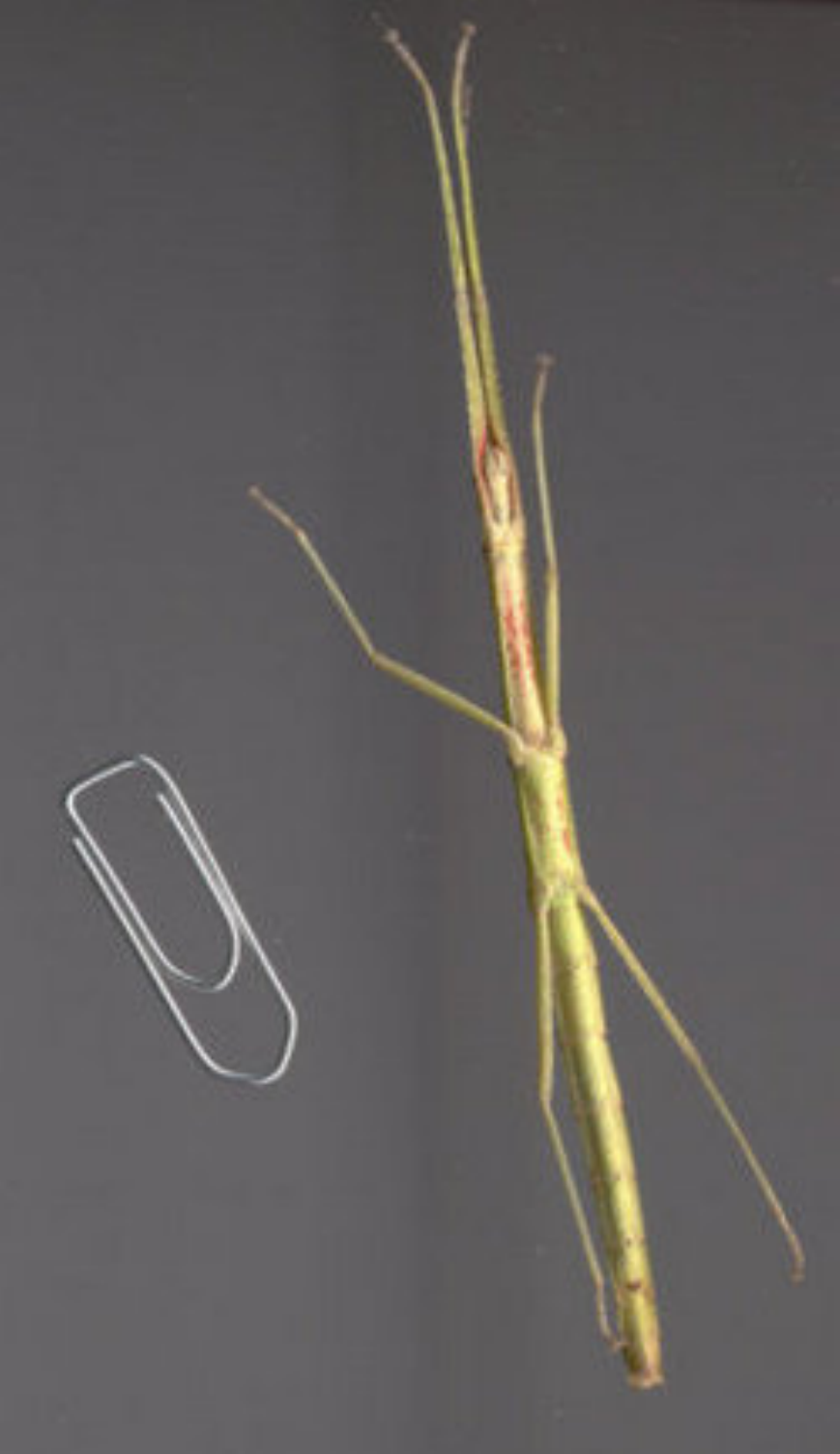}}
 \includegraphics[width = 0.25cm,height = 0.85cm]{white}
 \colorbox{green}{\includegraphics[width = 0.52cm,height = 0.85cm]{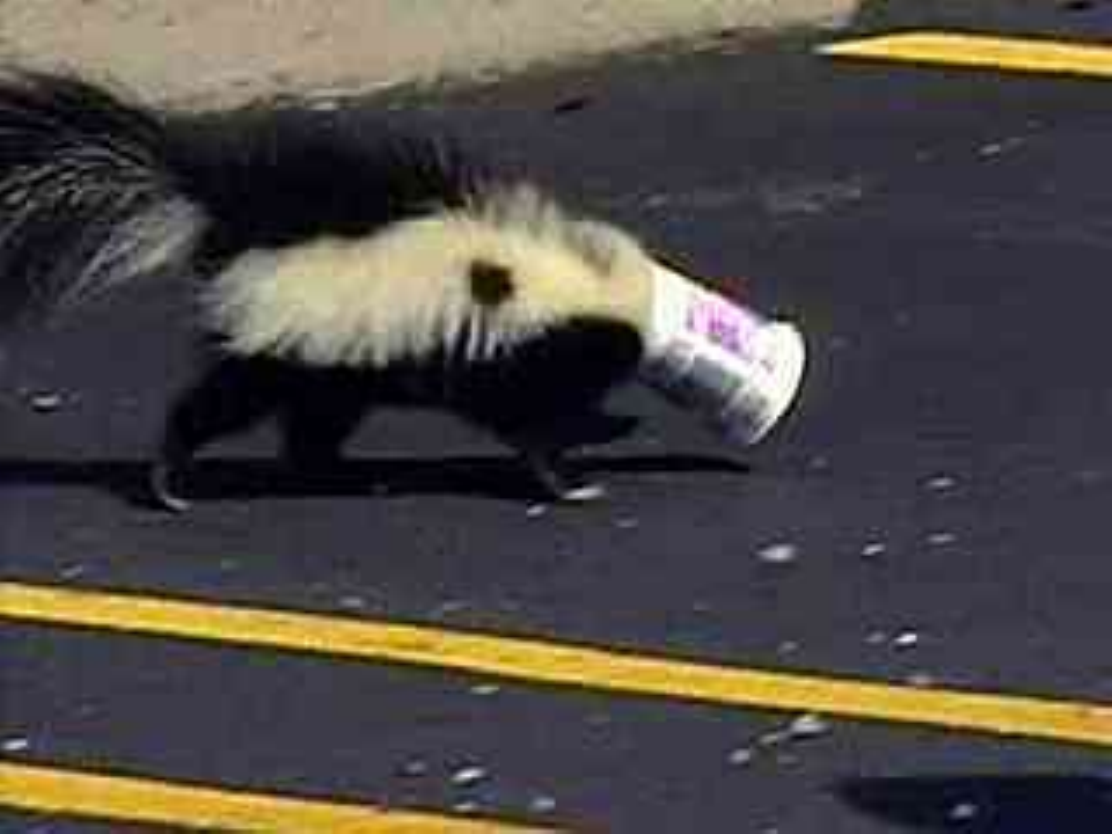}}
 \colorbox{green}{\includegraphics[width = 0.52cm,height = 0.85cm]{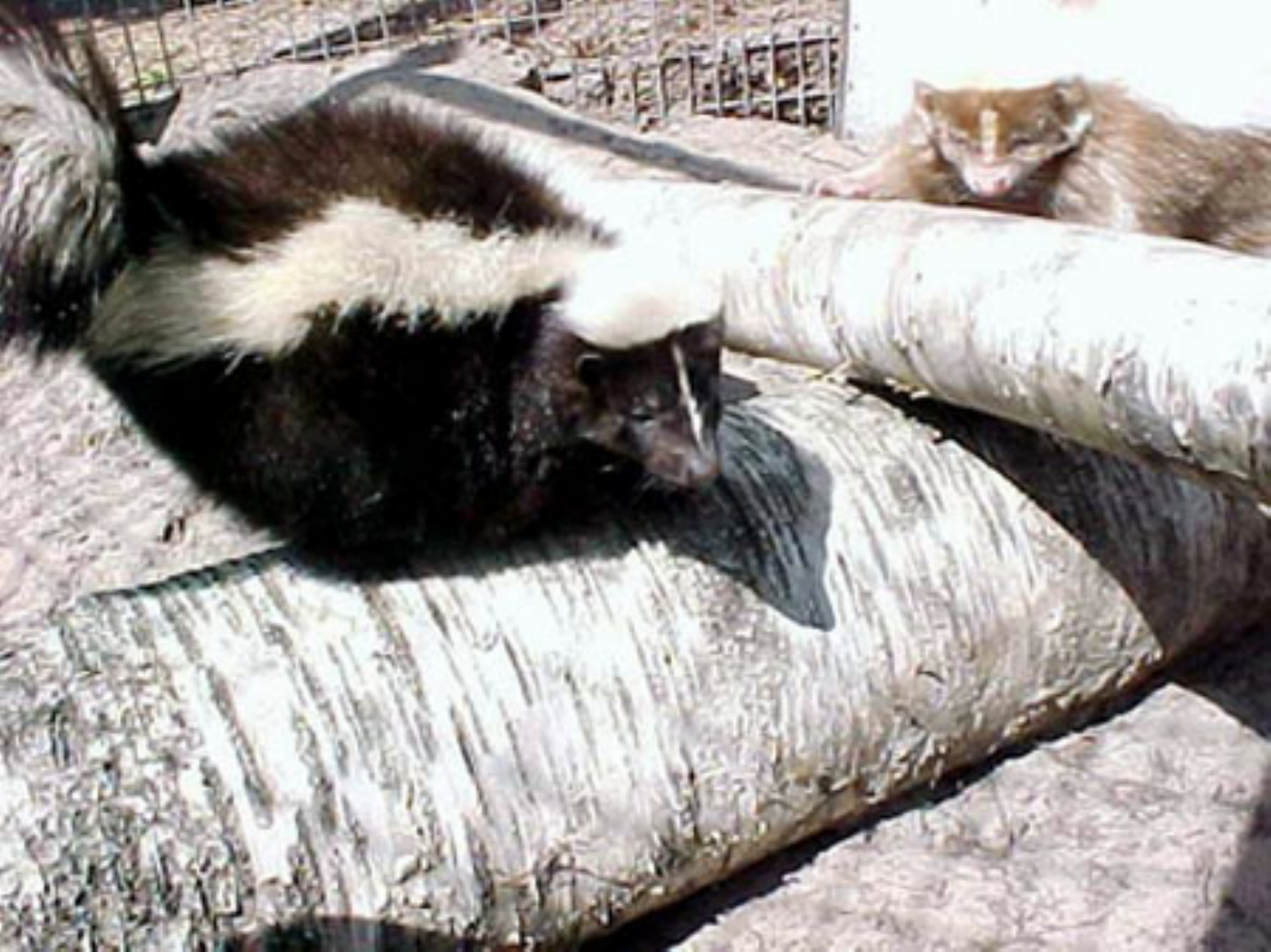}}
 \colorbox{green}{\includegraphics[width = 0.52cm,height = 0.85cm]{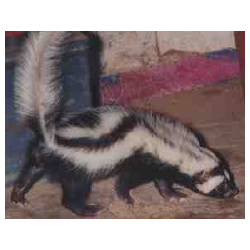}}
 \colorbox{green}{\includegraphics[width = 0.52cm,height = 0.85cm]{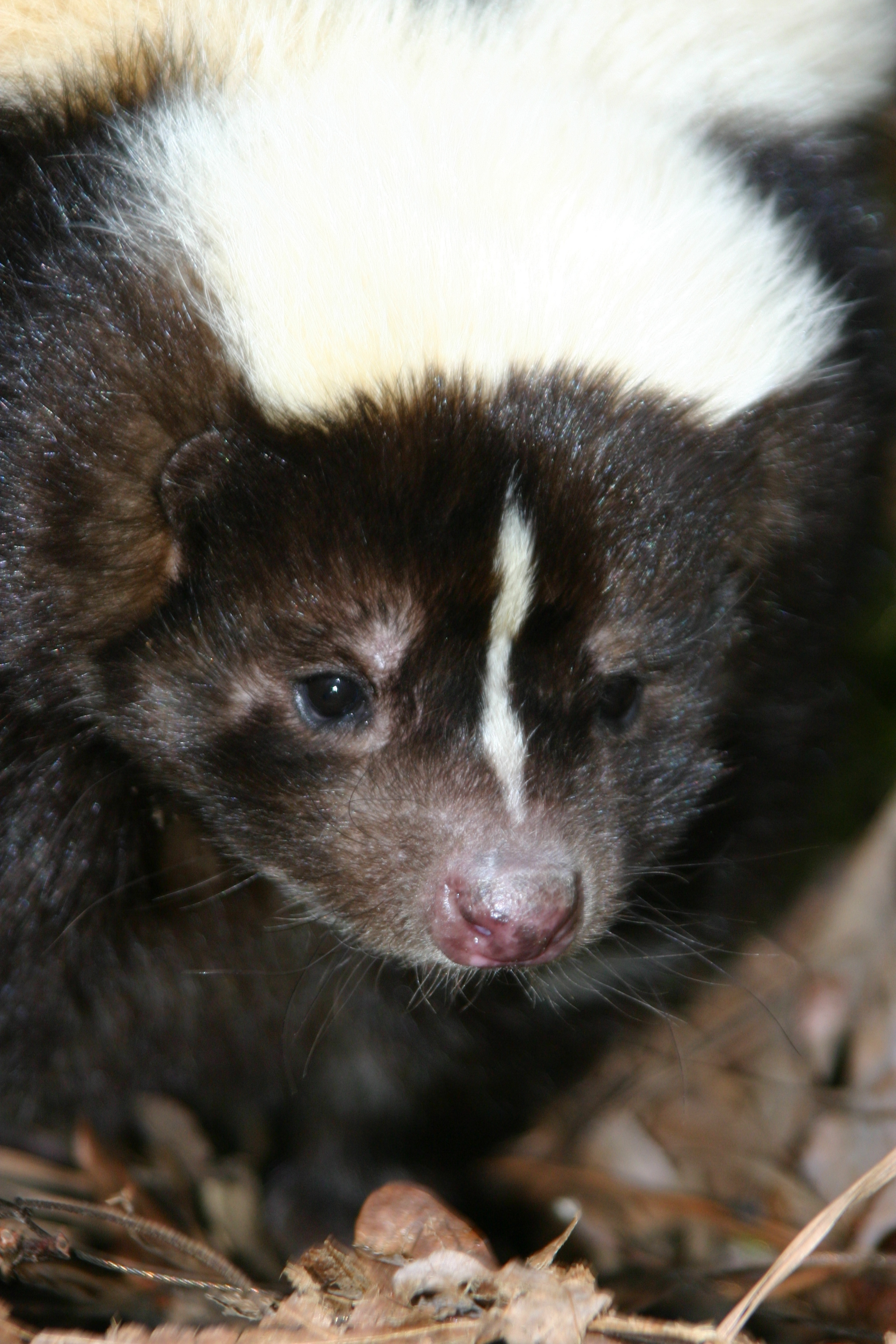}}
\colorbox{green}{\includegraphics[width = 0.52cm,height = 0.85cm]{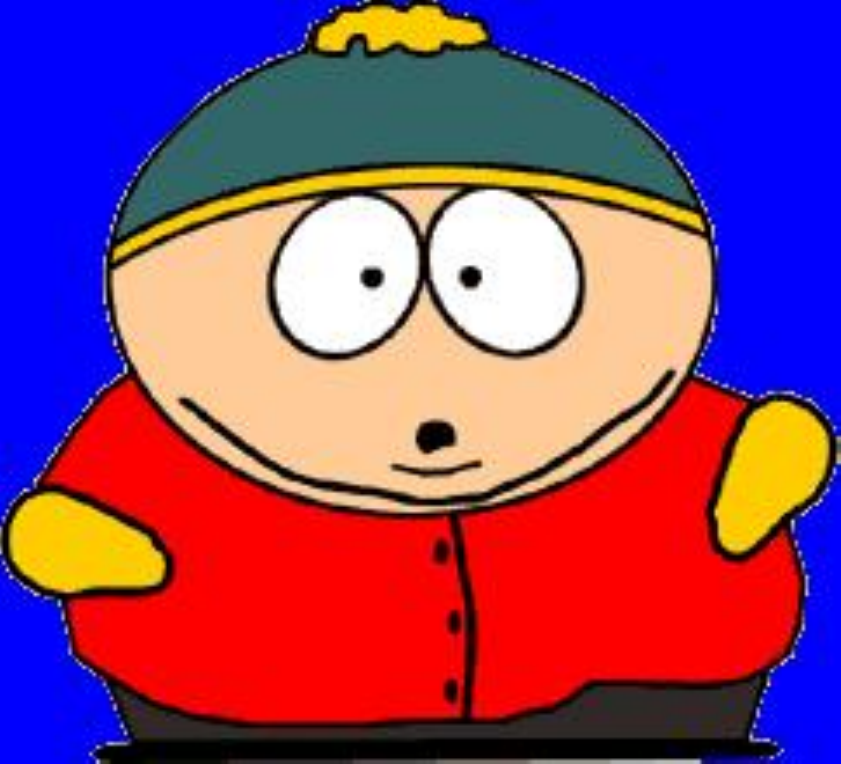}}
\colorbox{green}{\includegraphics[width = 0.52cm,height = 0.85cm]{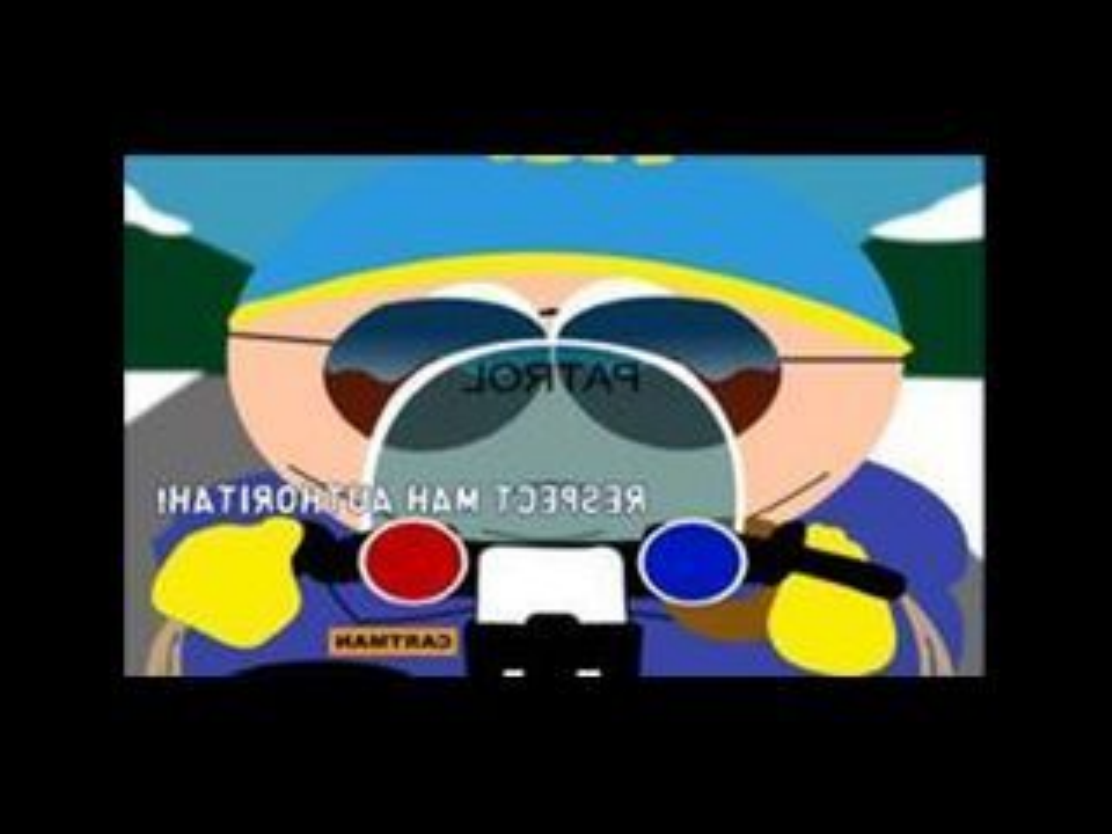}}
\colorbox{green}{\includegraphics[width = 0.52cm,height = 0.85cm]{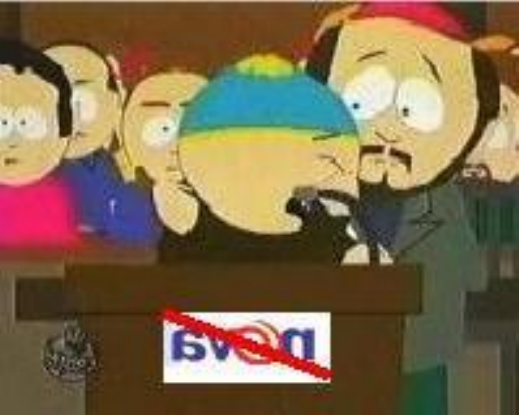}}
\colorbox{green}{\includegraphics[width = 0.52cm,height = 0.85cm]{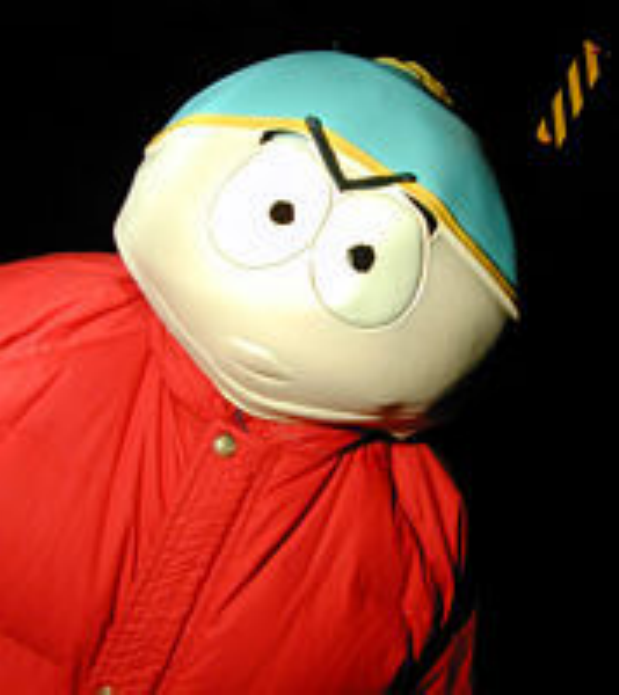}}
\includegraphics[width = 0.3cm,height = 0.85cm]{white}
\colorbox{green}{\includegraphics[width = 0.52cm,height = 0.85cm]{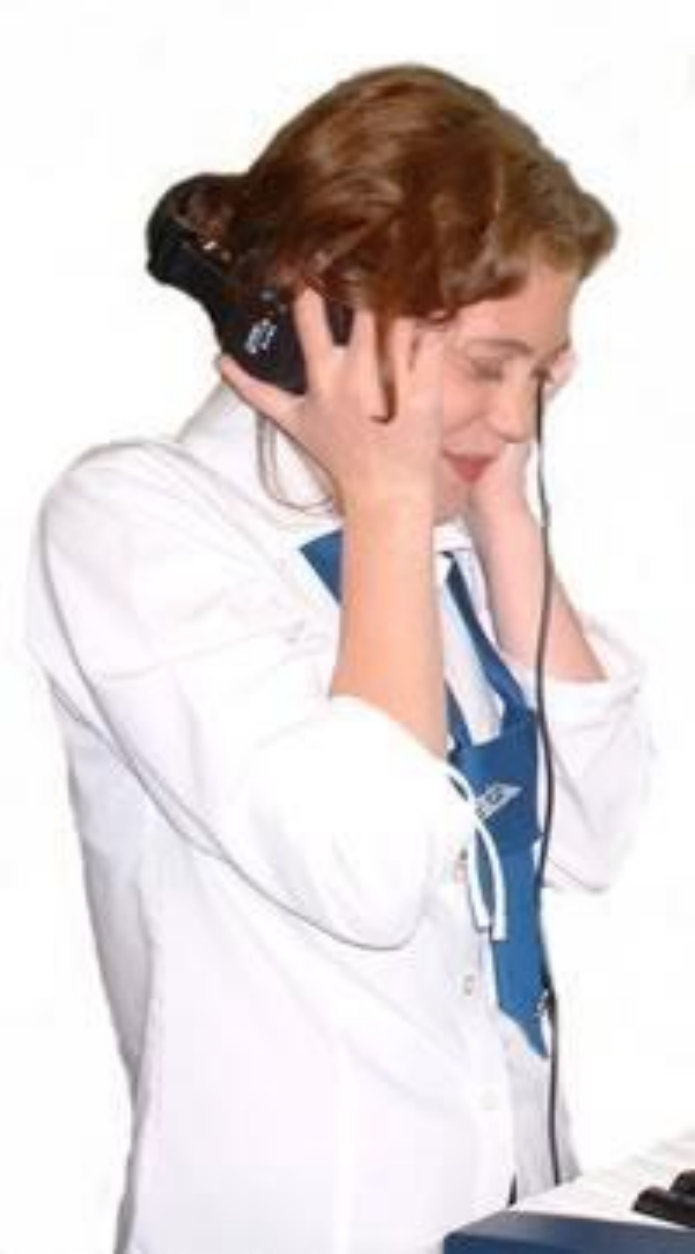}}
\colorbox{green}{\includegraphics[width = 0.52cm,height = 0.85cm]{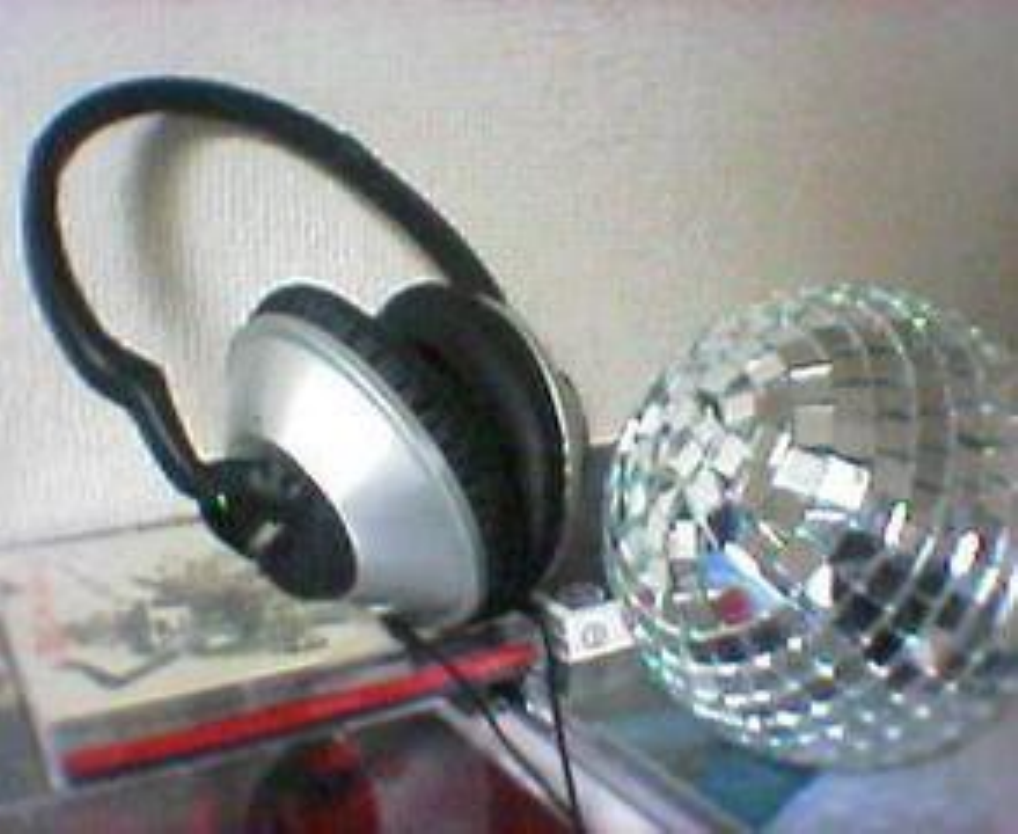}}
\colorbox{green}{\includegraphics[width = 0.52cm,height = 0.85cm]{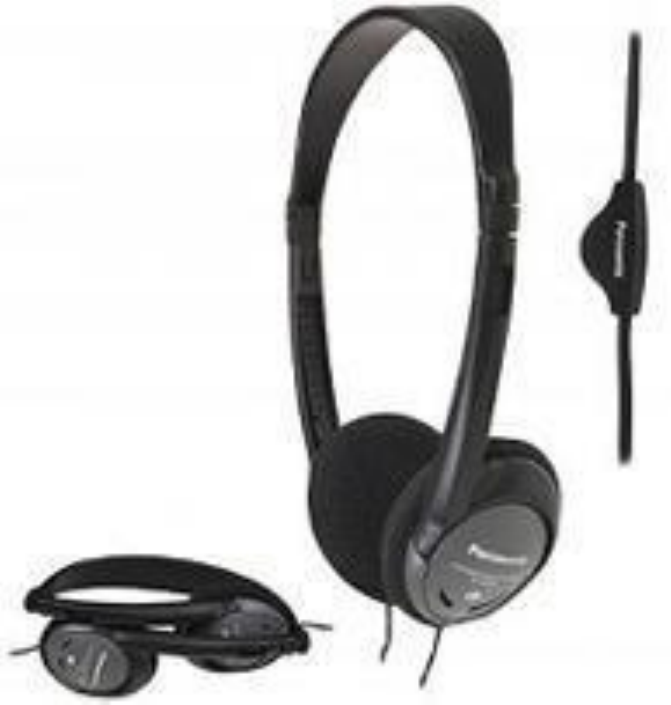}}
\colorbox{green}{\includegraphics[width = 0.52cm,height = 0.85cm]{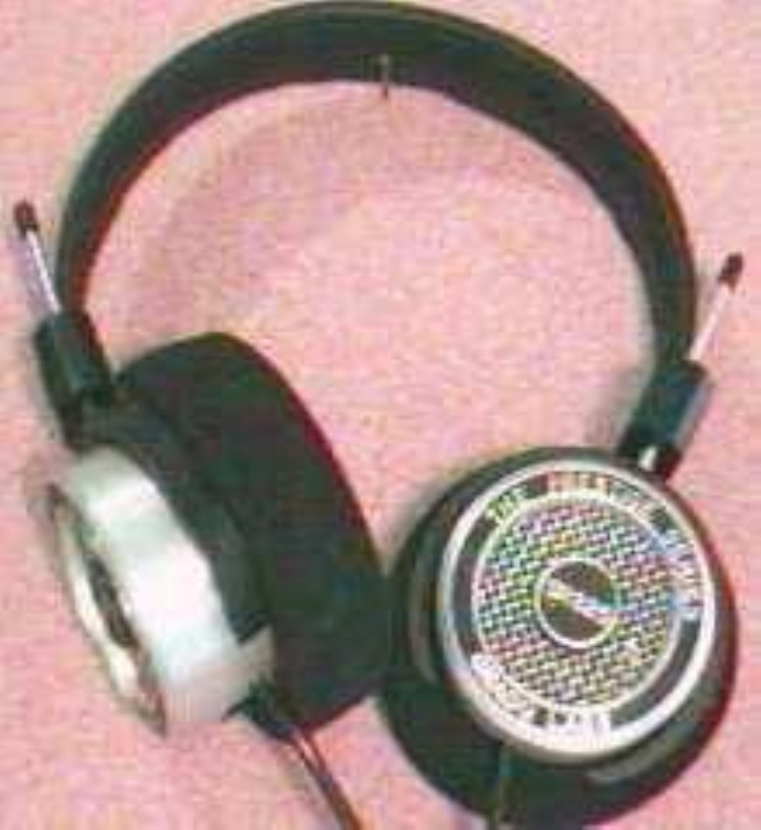}}
\colorbox{green}{\includegraphics[width = 0.52cm,height = 0.85cm]{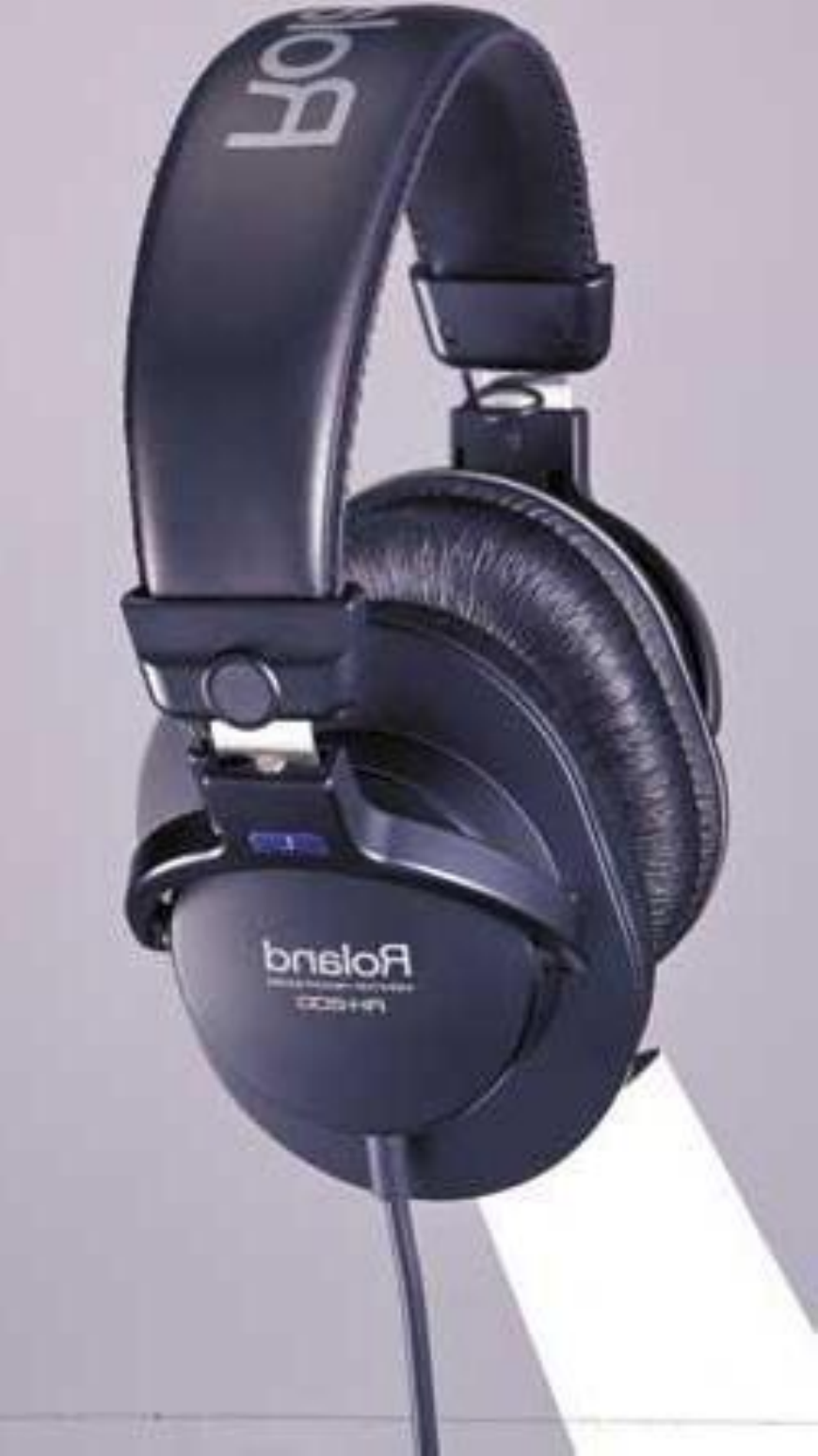}}
\includegraphics[width = 0.3cm,height = 0.85cm]{white}
\colorbox{green}{\includegraphics[width = 0.52cm,height = 0.85cm]{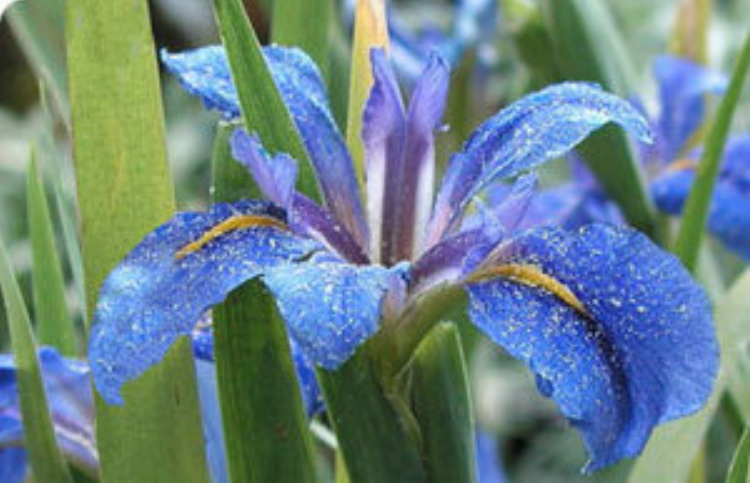}}
\colorbox{green}{\includegraphics[width = 0.52cm,height = 0.85cm]{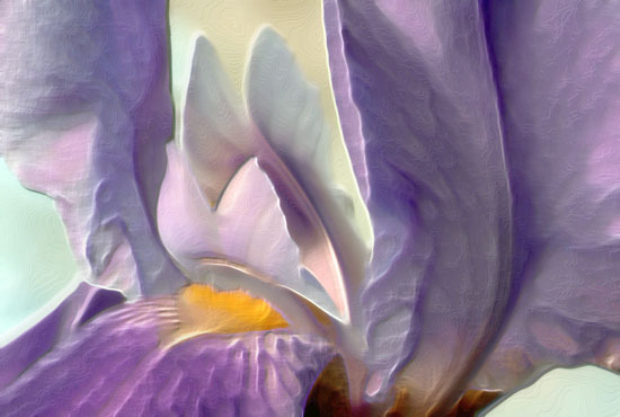}}
\colorbox{green}{\includegraphics[width = 0.52cm,height = 0.85cm]{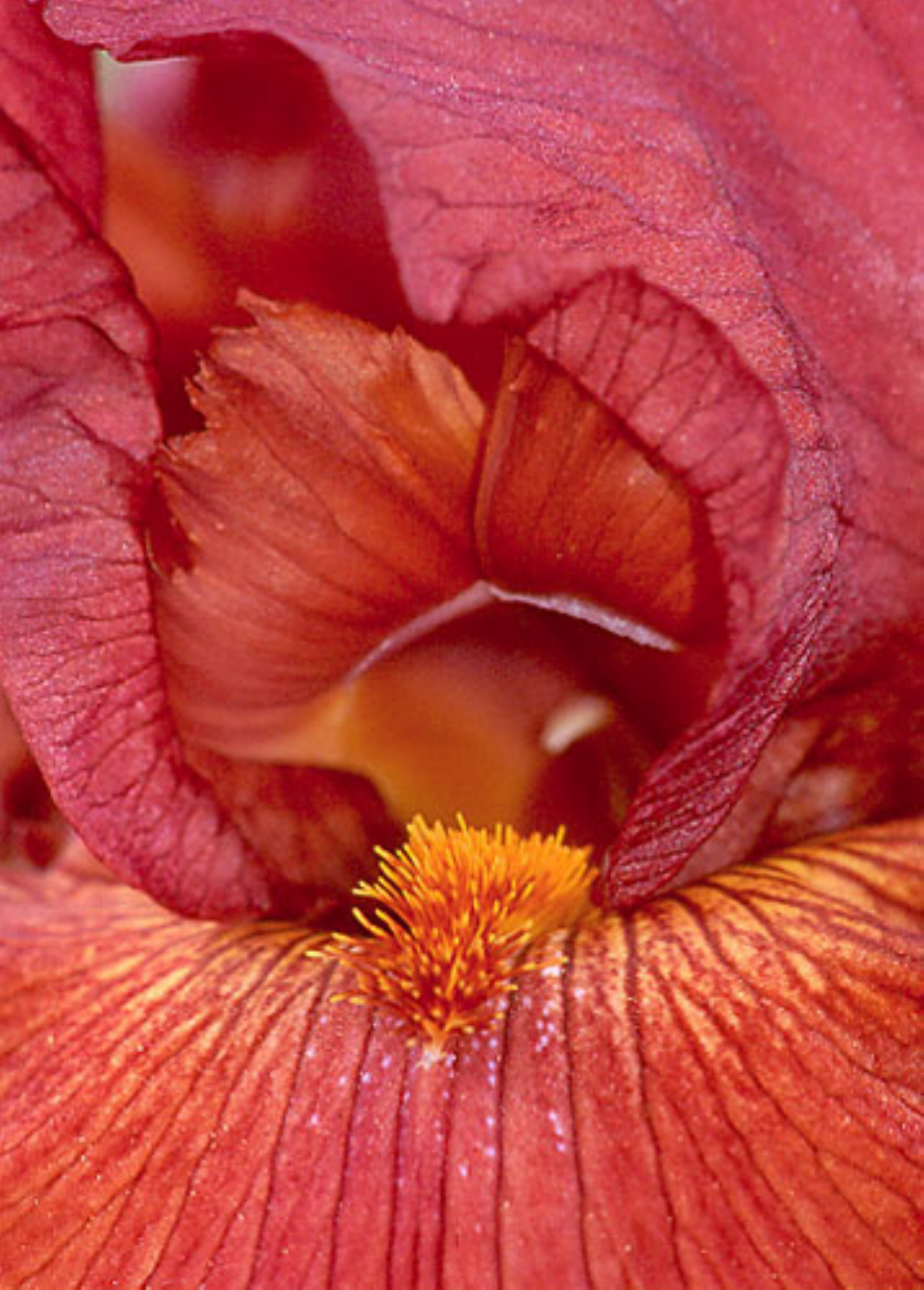}}
\colorbox{green}{\includegraphics[width = 0.52cm,height = 0.85cm]{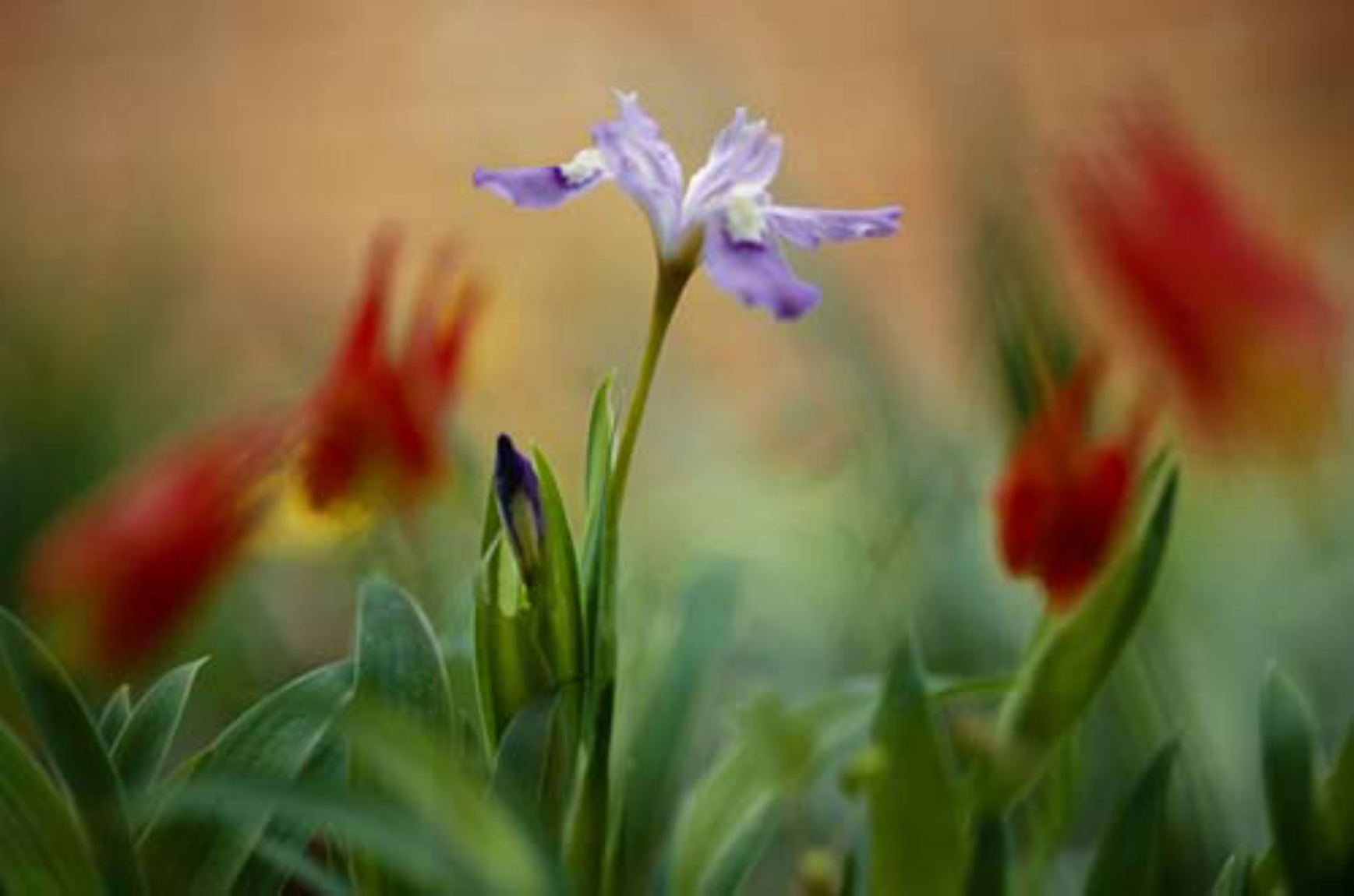}}
\includegraphics[width = 0.3cm,height = 0.85cm]{white}
\colorbox{green}{\includegraphics[width = 0.52cm,height = 0.85cm]{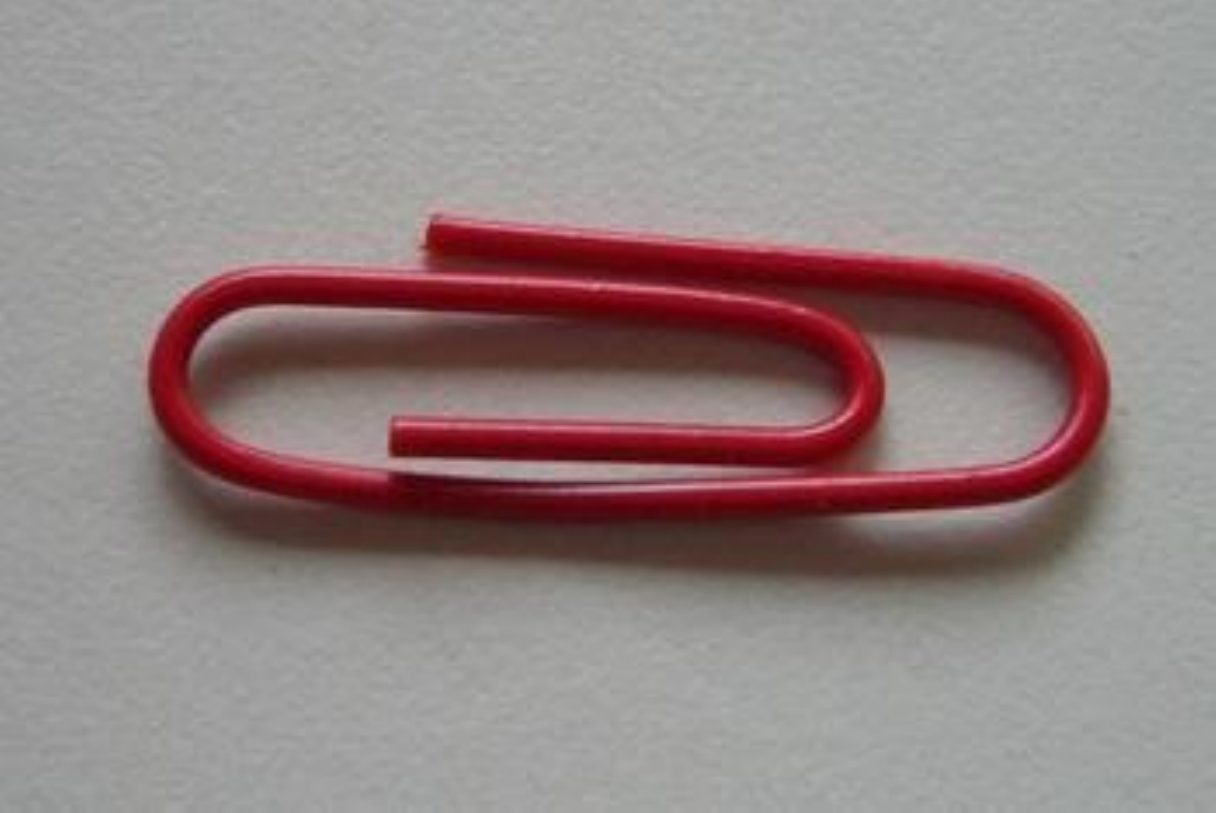}}
\colorbox{green}{\includegraphics[width = 0.52cm,height = 0.85cm]{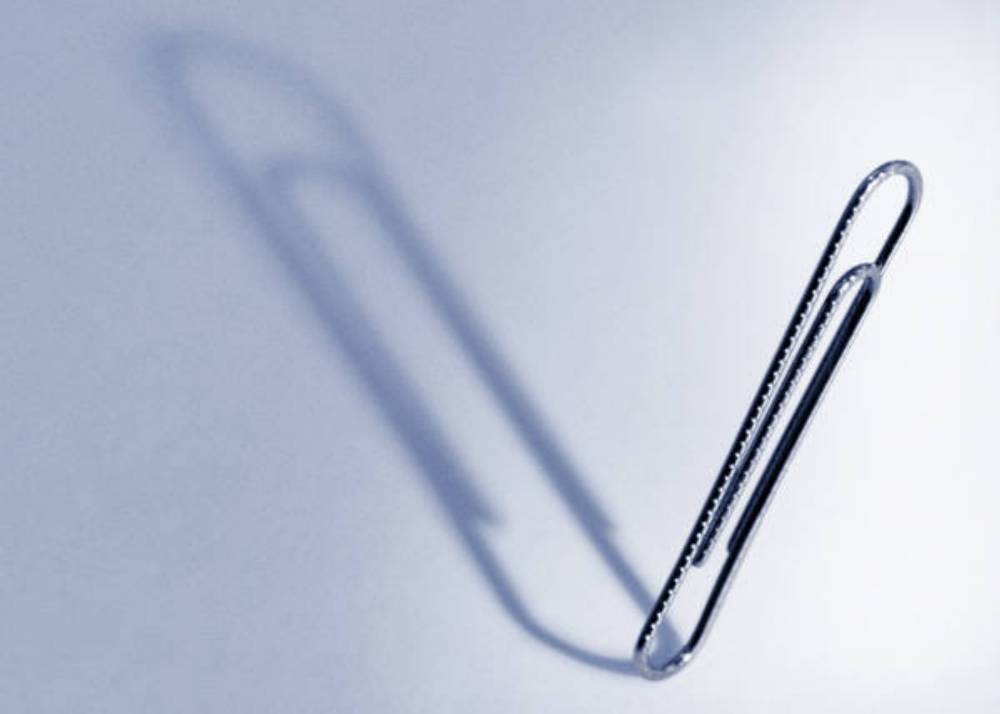}}
\includegraphics[width = 0.3cm,height = 0.85cm]{white}
\colorbox{green}{\includegraphics[width = 0.52cm,height = 0.85cm]{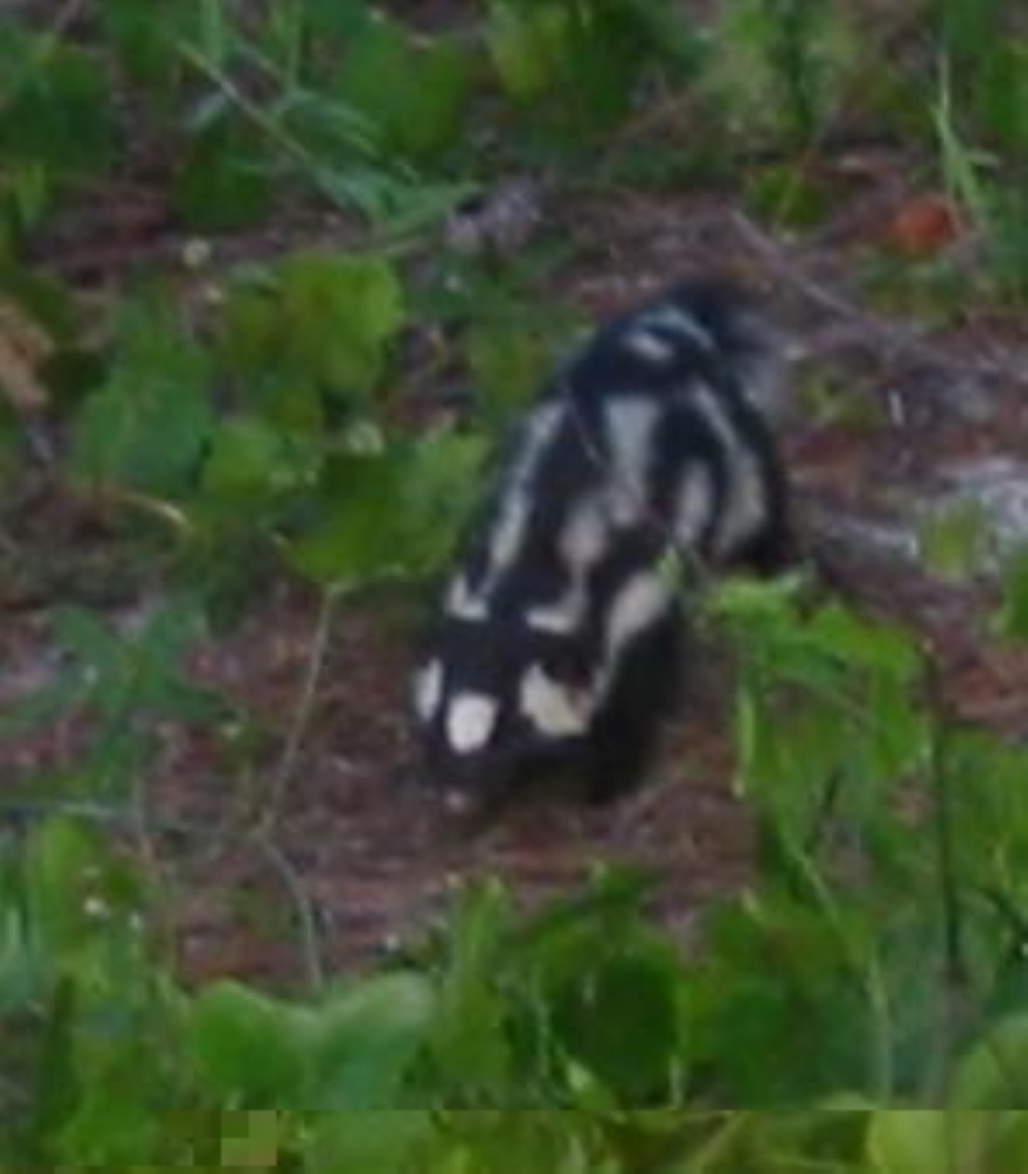}}
\colorbox{green}{\includegraphics[width = 0.52cm,height = 0.85cm]{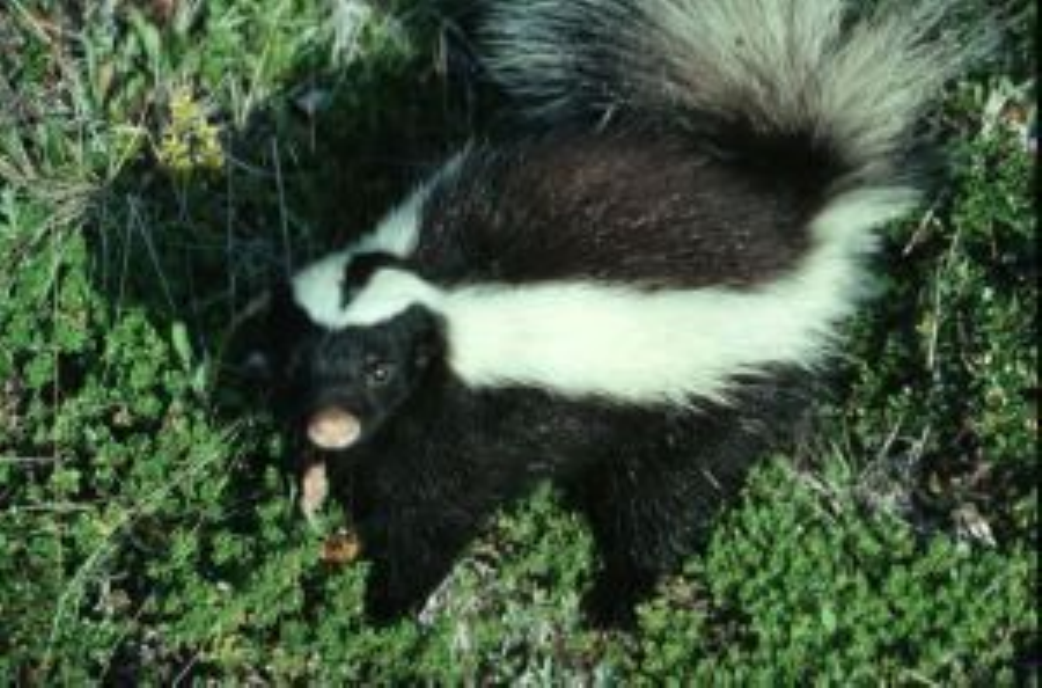}}
\colorbox{green}{\includegraphics[width = 0.52cm,height = 0.85cm]{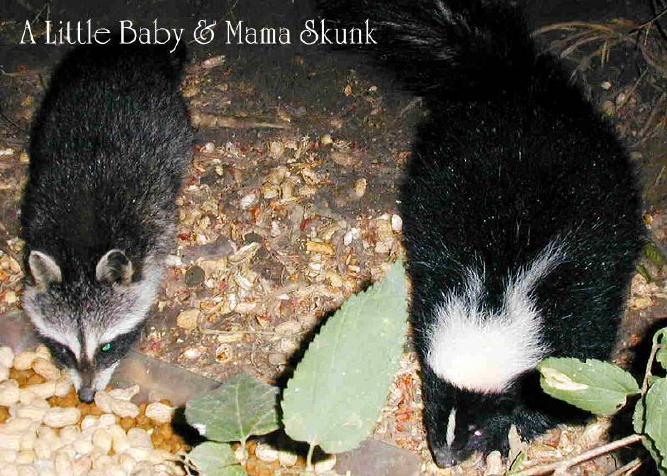}}
\colorbox{green}{\includegraphics[width = 0.52cm,height = 0.85cm]{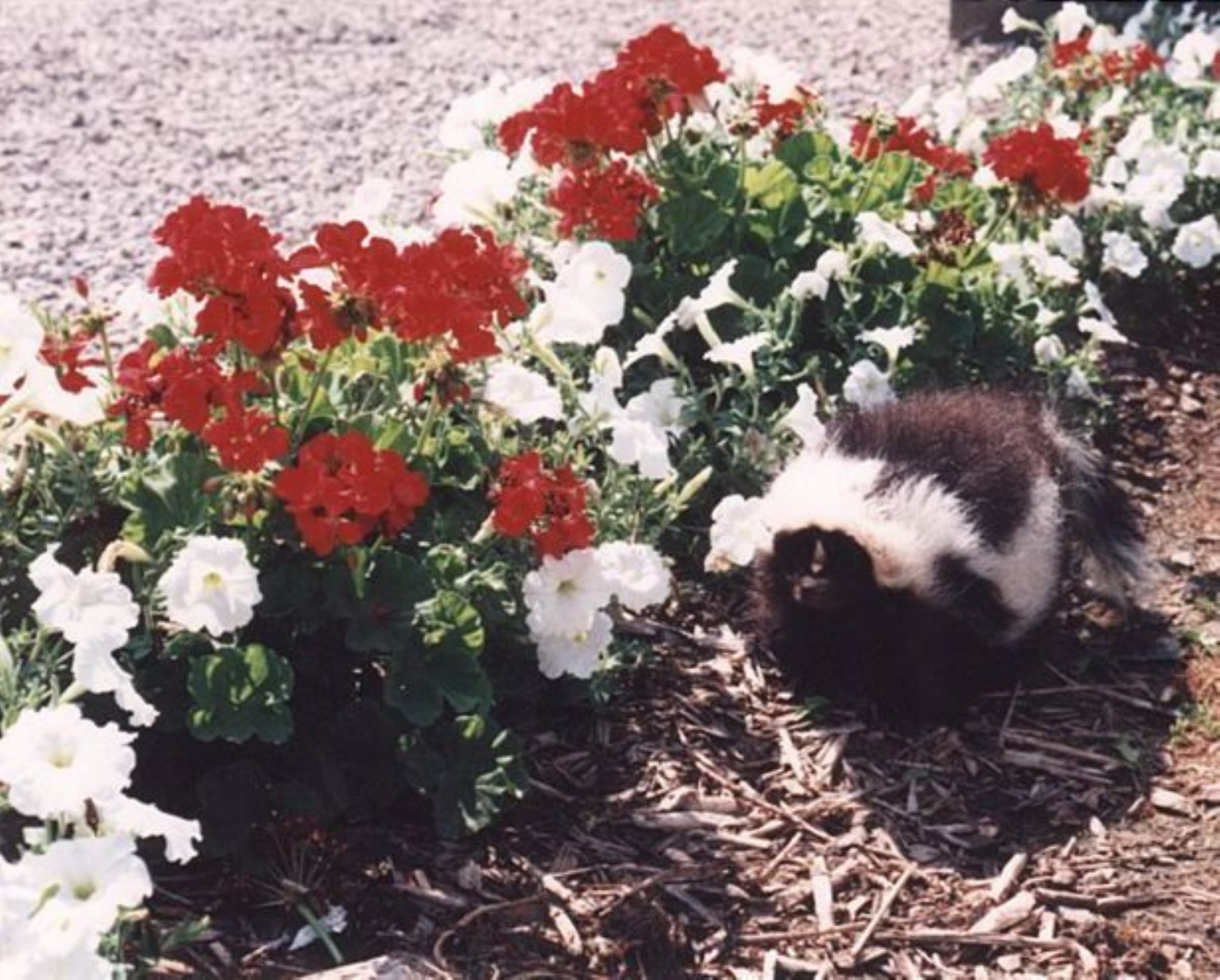}}
\\
\colorbox{red}{\includegraphics[width = 0.52cm,height = 0.85cm]{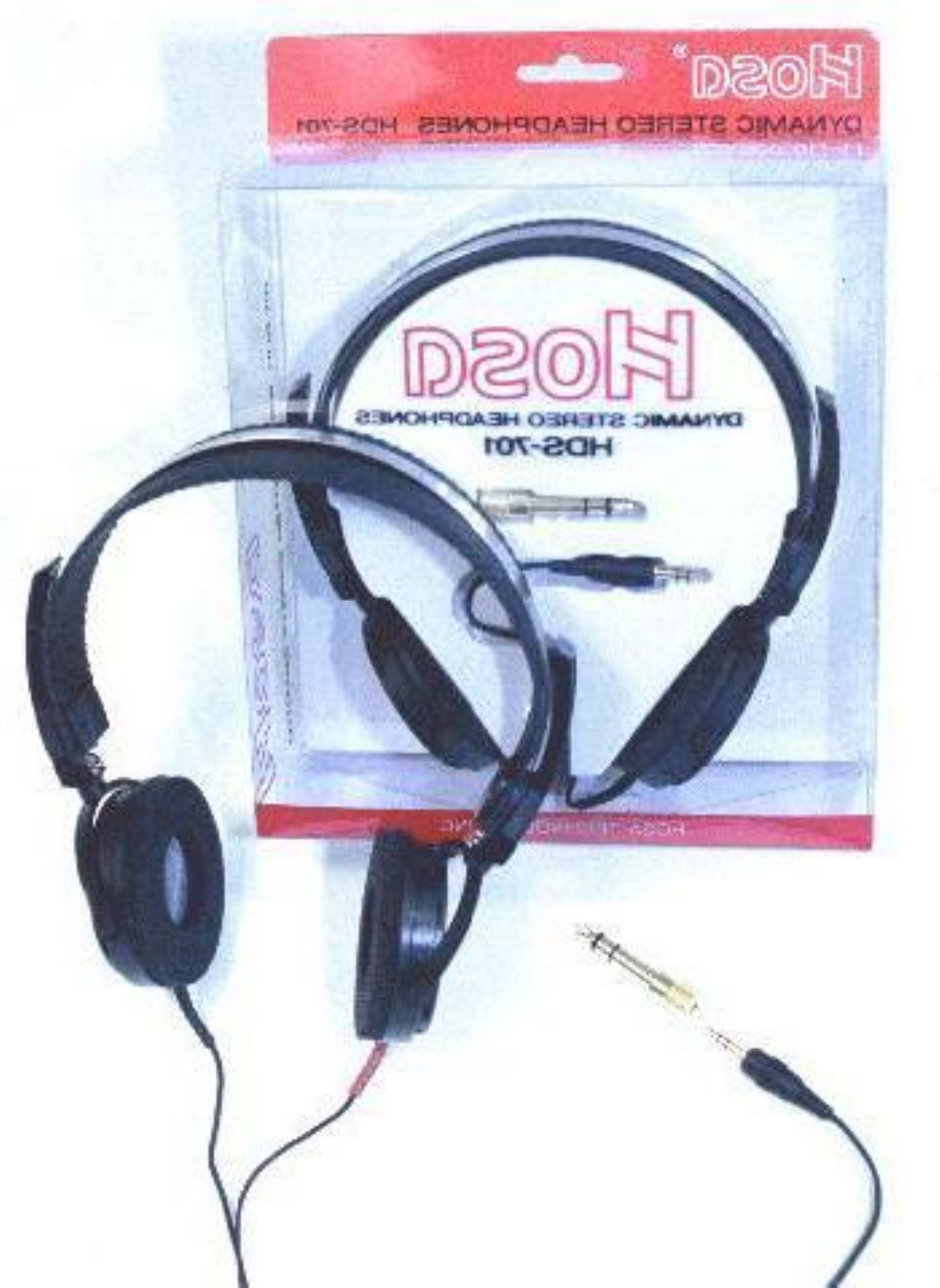}}
\colorbox{red}{\includegraphics[width = 0.52cm,height = 0.85cm]{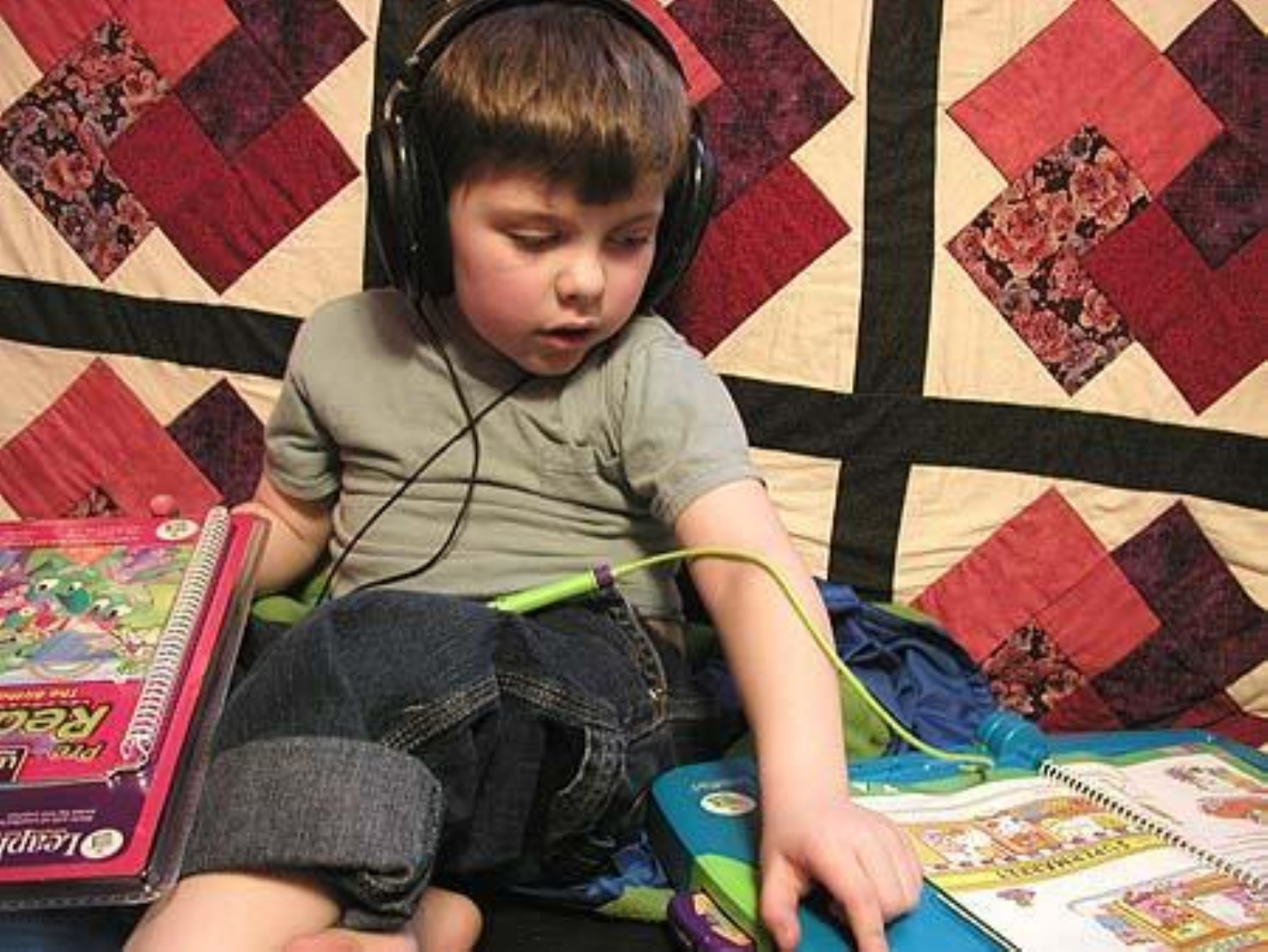}}
\colorbox{red}{\includegraphics[width = 0.52cm,height = 0.85cm]{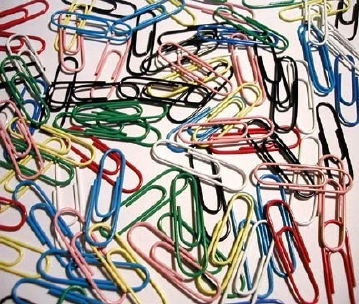}}
\colorbox{red}{\includegraphics[width = 0.52cm,height = 0.85cm]{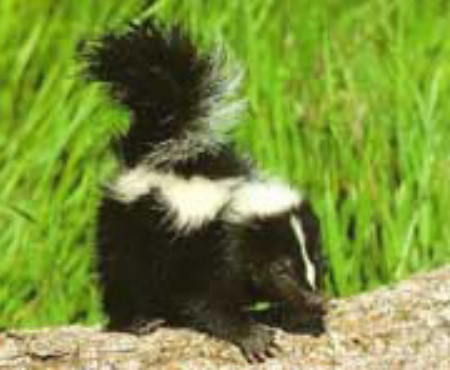}}
\includegraphics[width = 0.3cm,height = 0.85cm]{white}
\colorbox{red}{\includegraphics[width = 0.52cm,height = 0.85cm]{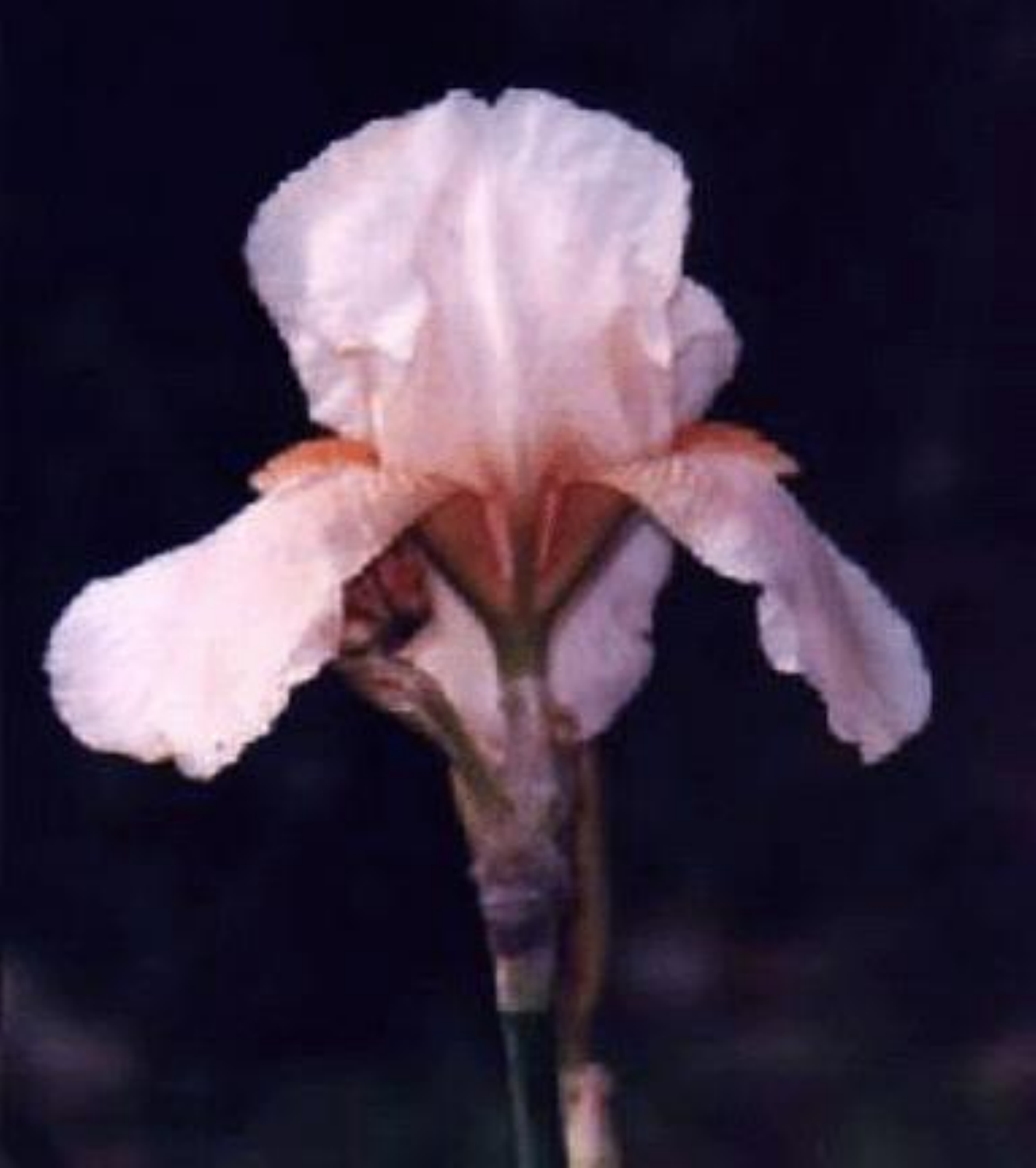}}
\colorbox{red}{\includegraphics[width = 0.52cm,height = 0.85cm]{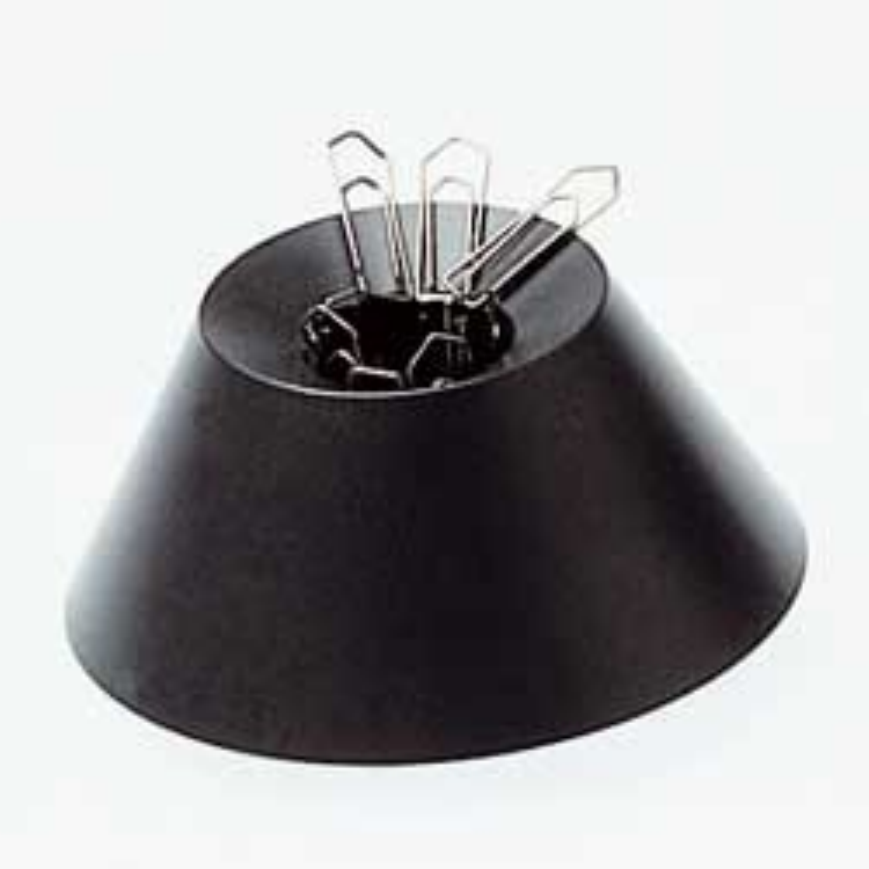}}
\colorbox{red}{\includegraphics[width = 0.52cm,height = 0.85cm]{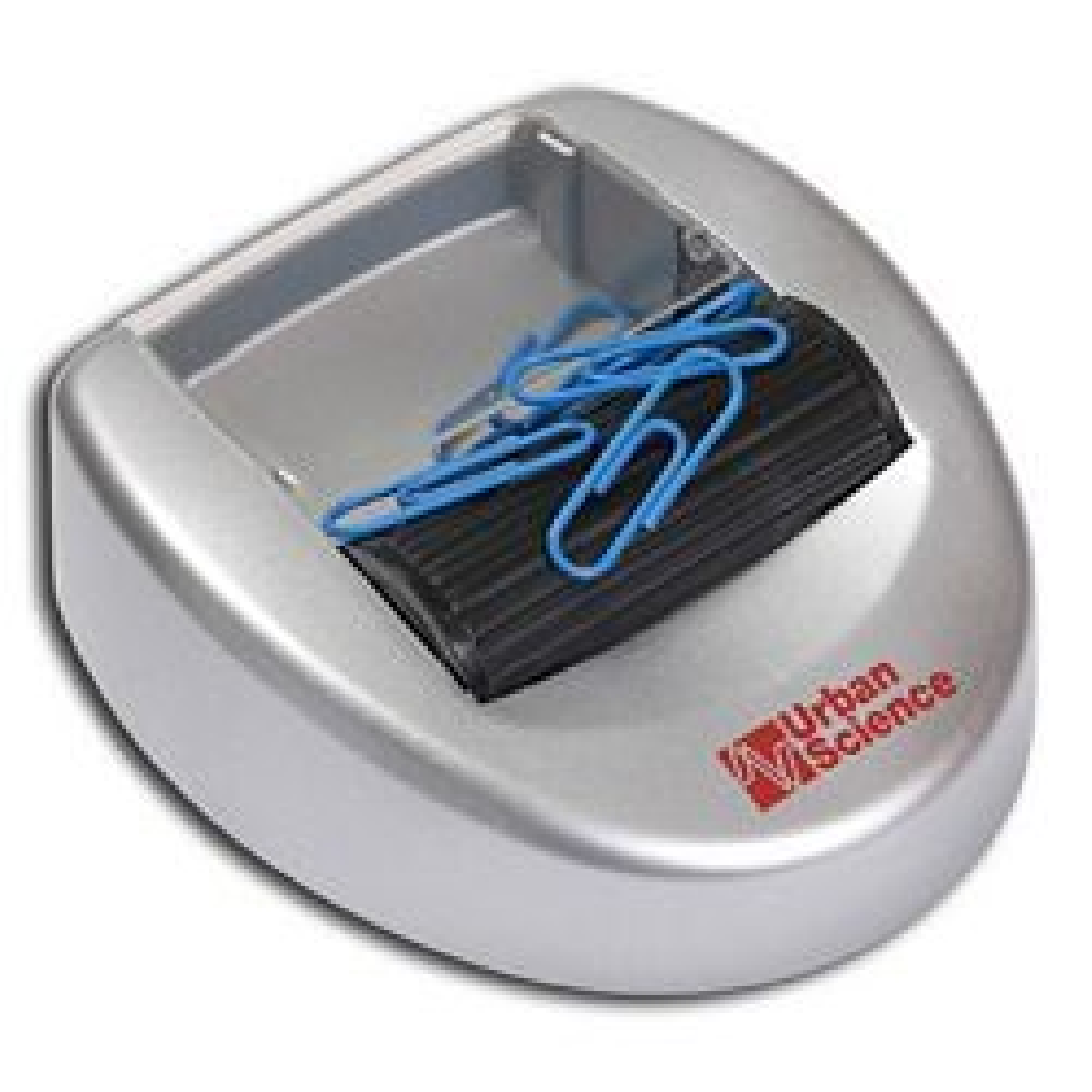}}
\colorbox{red}{\includegraphics[width = 0.52cm,height = 0.85cm]{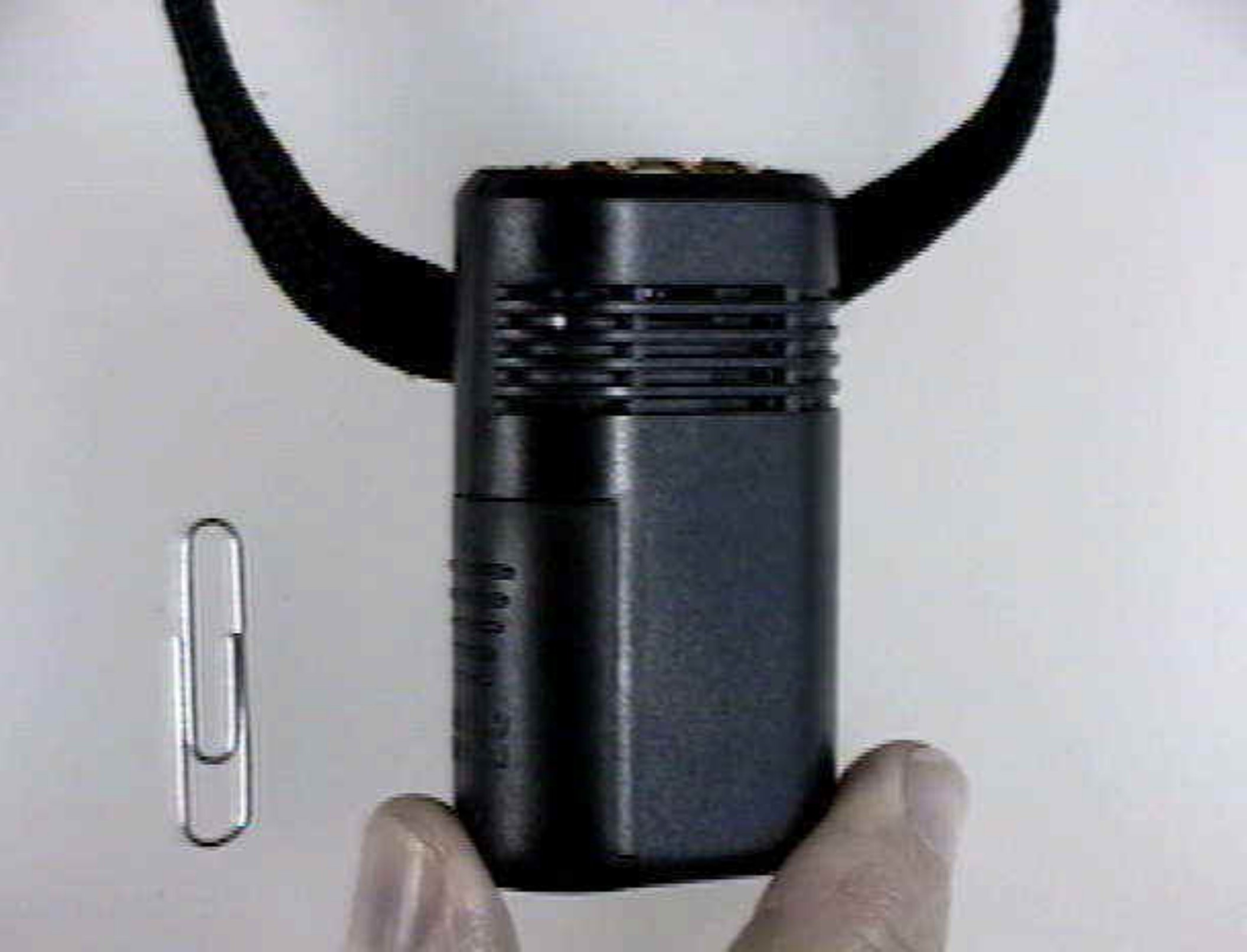}}
\colorbox{red}{\includegraphics[width = 0.52cm,height = 0.85cm]{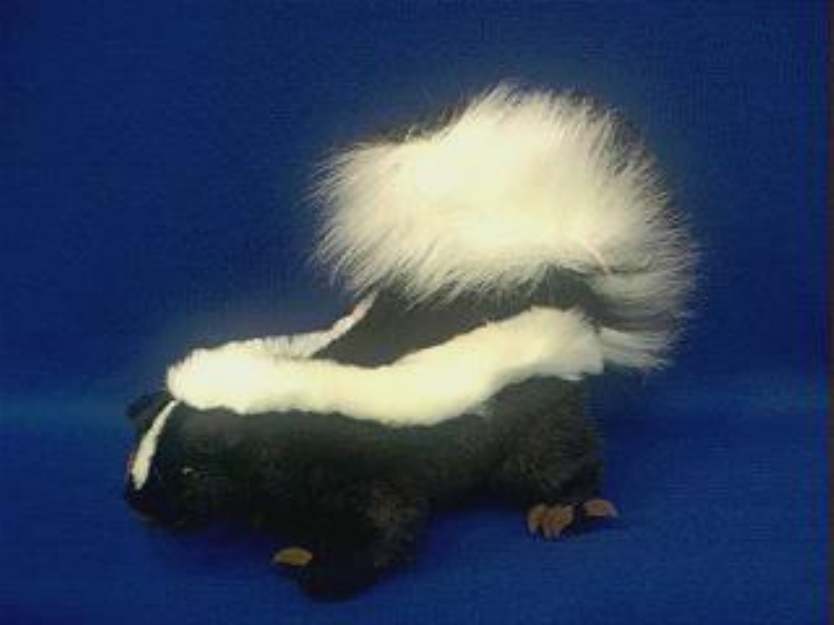}}
\includegraphics[width = 0.3cm,height = 0.85cm]{white}
\colorbox{red}{\includegraphics[width = 0.52cm,height = 0.85cm]{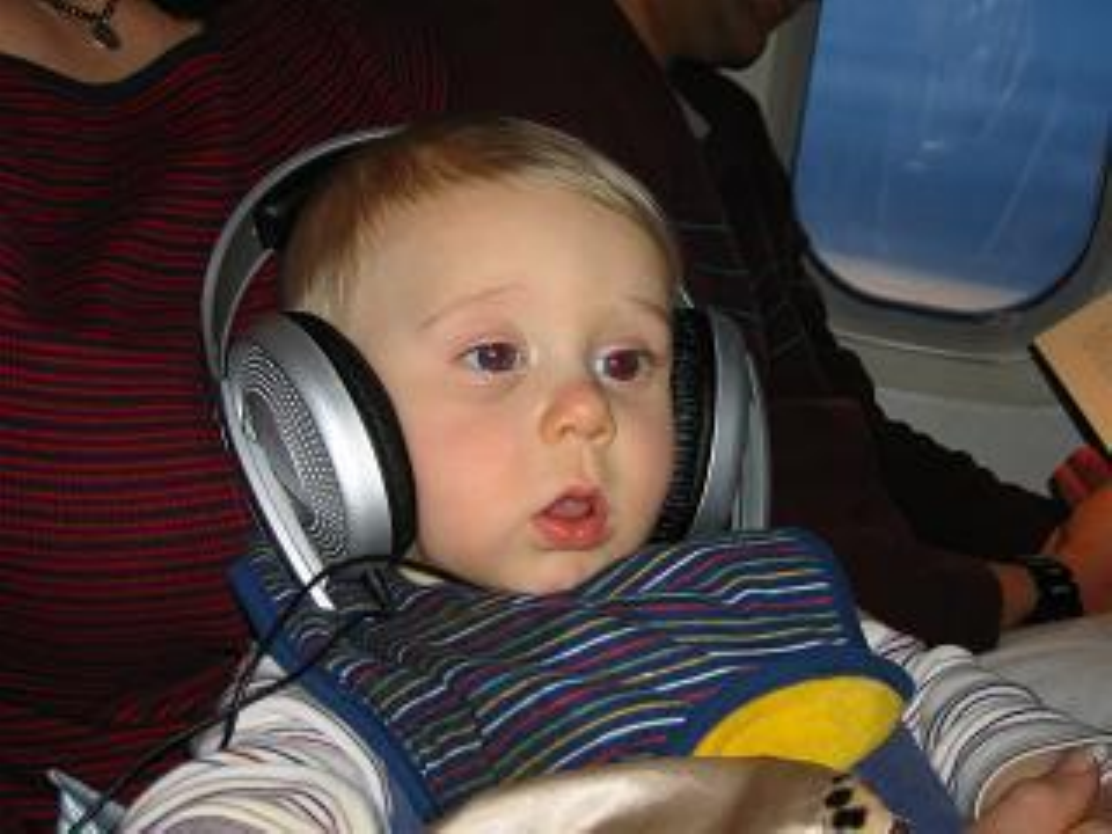}}
\colorbox{red}{\includegraphics[width = 0.52cm,height = 0.85cm]{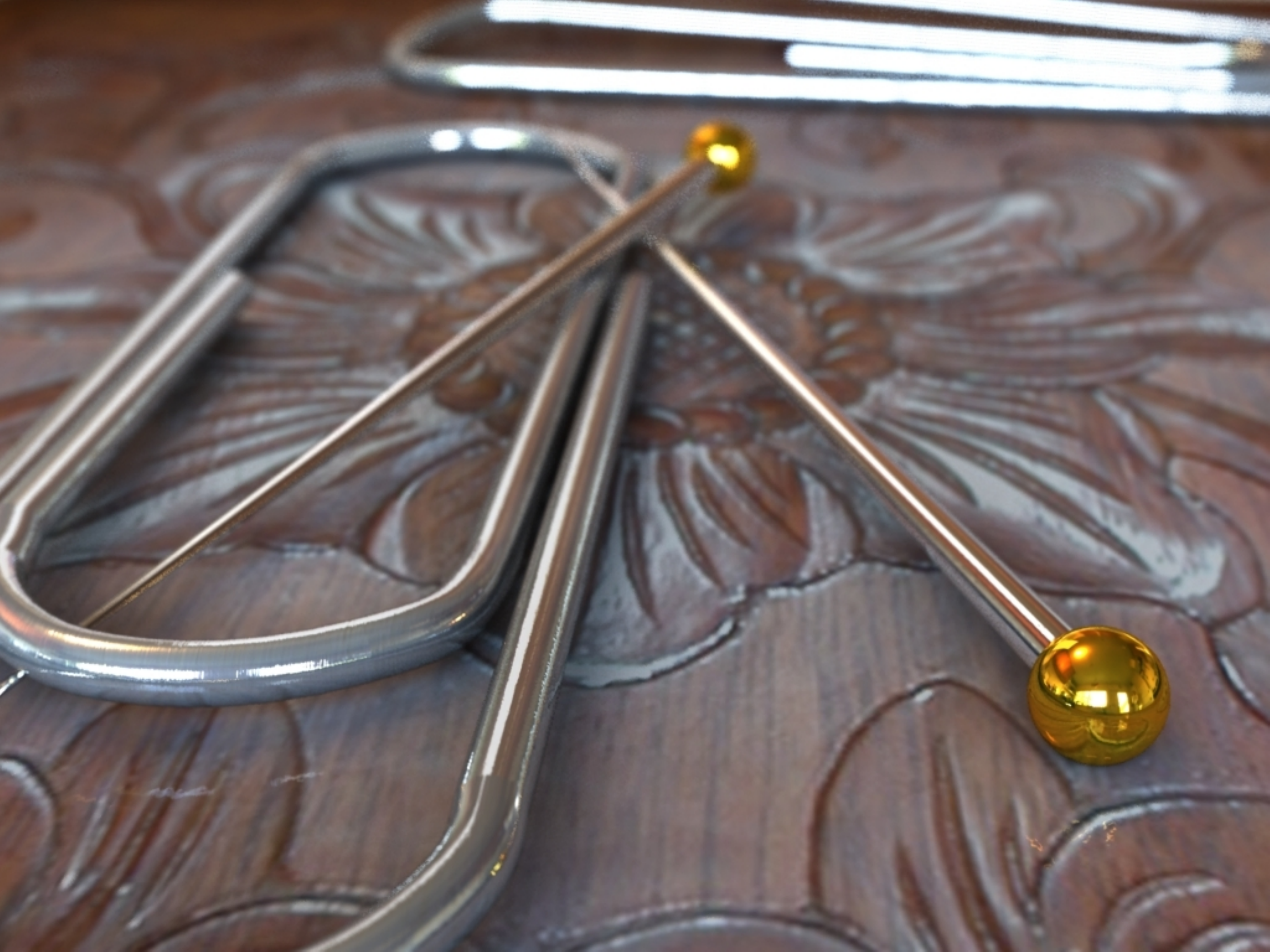}}
\colorbox{red}{\includegraphics[width = 0.52cm,height = 0.85cm]{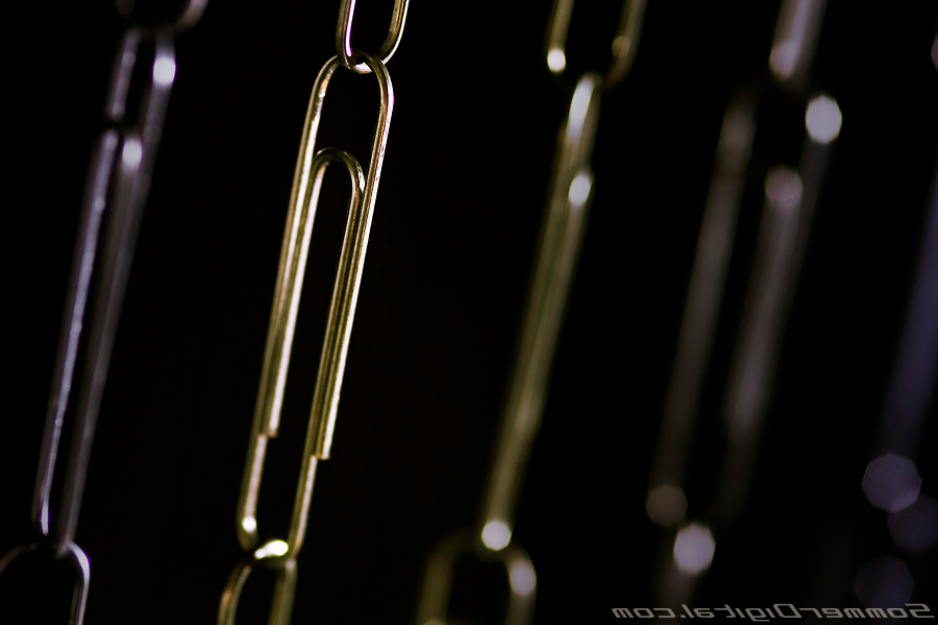}}
\colorbox{red}{\includegraphics[width = 0.52cm,height = 0.85cm]{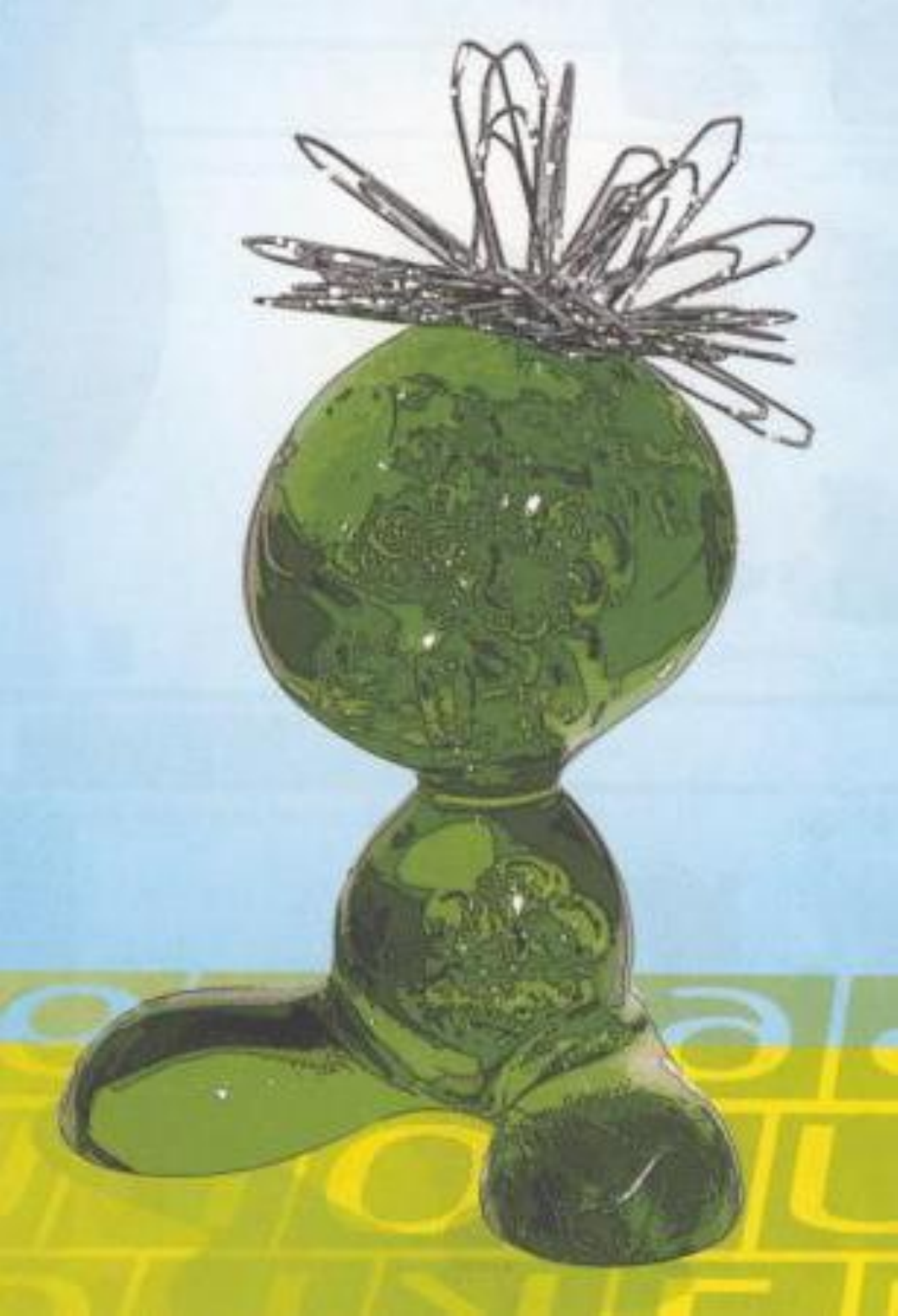}}
\includegraphics[width = 0.3cm,height = 0.85cm]{white}
\includegraphics[width = 0.52cm,height = 0.85cm]{white}
\includegraphics[width = 0.52cm,height = 0.85cm]{white}
\includegraphics[width = 0.5cm,height = 0.85cm]{white}
\colorbox{red}{\includegraphics[width = 0.52cm,height = 0.85cm]{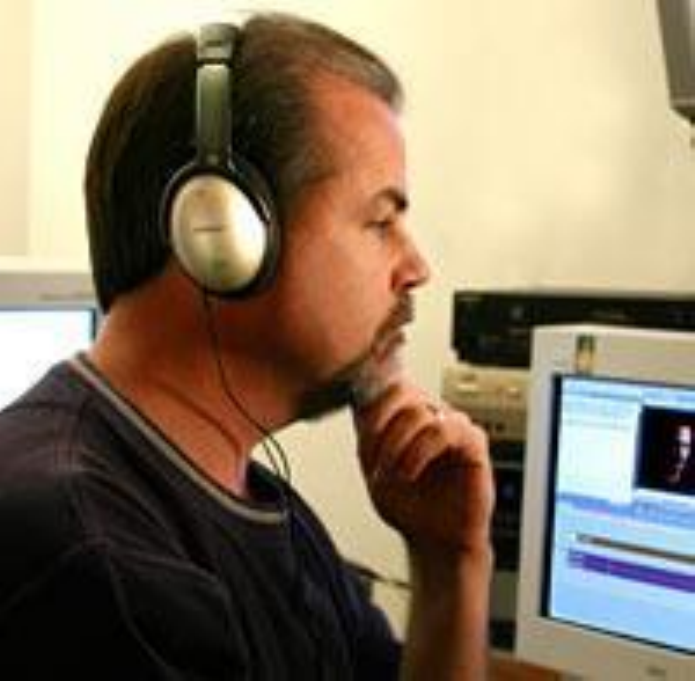}}
\colorbox{red}{\includegraphics[width = 0.52cm,height = 0.85cm]{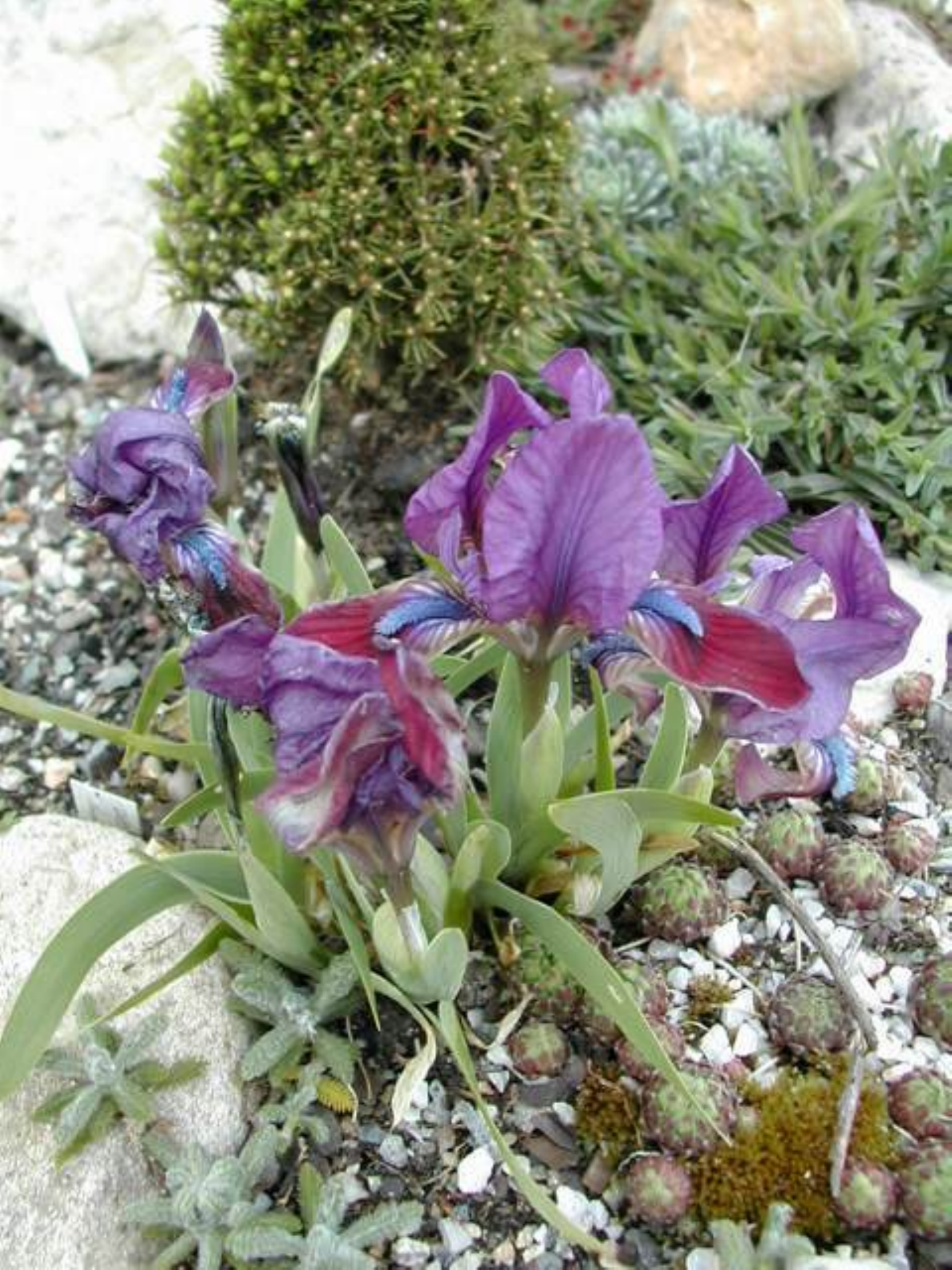}}
\colorbox{red}{\includegraphics[width = 0.52cm,height = 0.85cm]{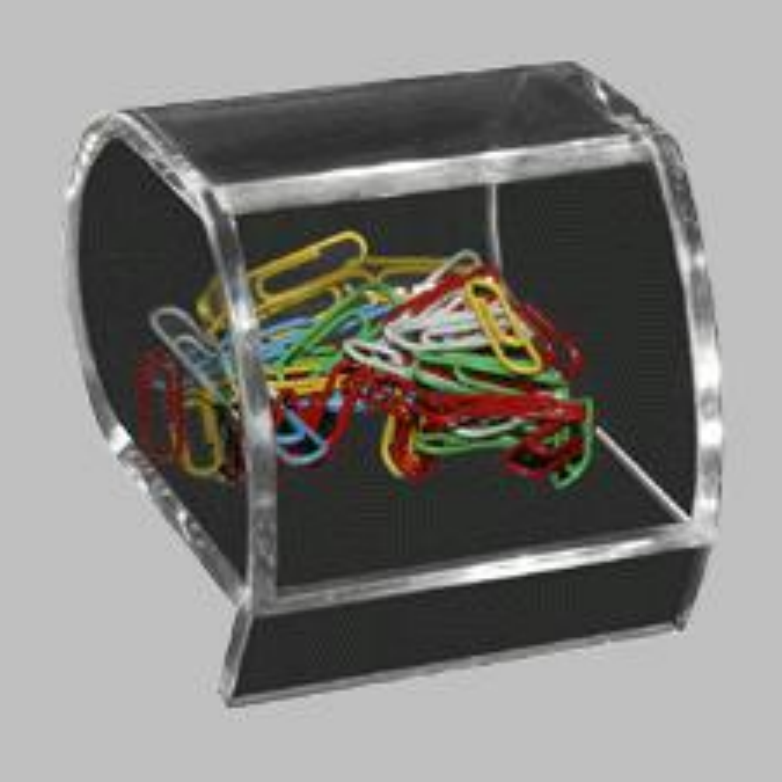}}
\colorbox{red}{\includegraphics[width = 0.52cm,height = 0.85cm]{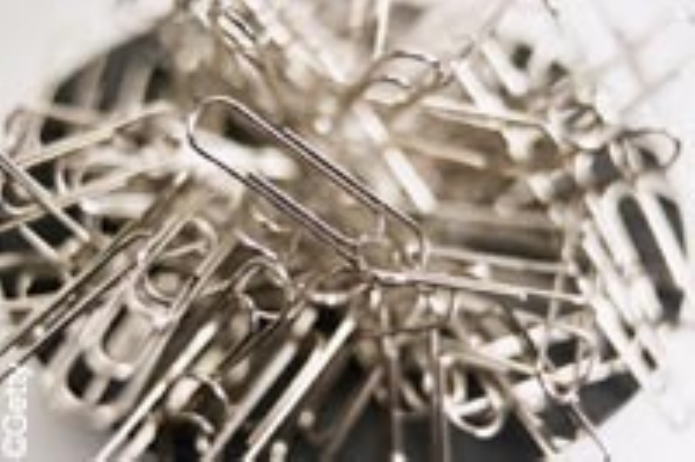}}
   \caption{
   Some examples of correctly classified (top two rows) and misclassified (bottom row)
   images by \mbh.
   The categories are ``{cartman}'', ``{headphones}'', ``{iris}'',
   ``{paperclip}'' and ``{skunk}''.
   The accuracy of this test is $71.2\%$.
   No image is falsely classified into the category of ``{paperclip}''.
   }
\label{figCVPR11:cal-samples}
\end{figure*}

    Clearly, all the results of three algorithms on feature selection make sense.
    Most discriminative features are tagged with circles or crosses.
    Some blocks that contain significant information on luminance are tagged with thick marks,
    such as the $22$-th and $43$-th features in digit ``6'',
    and the $22$-th and $11$-th in ``9''.
    If taking a close look at the figure, we can find \mbh is slightly better than AdaBoost.ECC.
    For example, on the $43$-th feature the green cross should be marked on digit ``9''
    instead of ``1''.
    Also in ``1'', the $21$-th feature should be tagged with a relatively thicker circle.
    However, \mbe's results are not as meaningful as \mbh.

\paragraph{Object recognition on a subset of Caltech-$256$}
    Finally, we test our algorithms on the data set of Caltech-$256$,
    which is one of the most popular multi-class benchmarks.
    We randomly select $5$ categories of images.
    $75\%$ of them are randomly selected for training and the other $25\%$ for test.
    A descriptor of $1000$ dimensions is used, which combines quantized color and texture
    local invariant features (also called \emph{visterms} \citep{quelhas2006natural}).
    The maximum number of  iterations is still set to $500$.
    The averaged test accuracies of $10$ runs are reported in Figure \ref{figCVPR11:caltech}.
    Again, we use the simplest decision stumps as weak classifiers.
    We can see that all the four boosting algorithms perform similarly,
    except that \mbe performs worse than the other three. It may be
    due to the fact that we have not fine tuned the cross validation parameter.
     We show some images that are correctly classified and falsely classified by \mbh
     in Figure~\ref{figCVPR11:cal-samples}.

\subsection{\multistruct}

    Next we evaluate our mixed-norm regularized boosting algorithms.
    We mainly use the $ \ell_{1,2} $
    regularization since $ \ell_{1,\infinity} $ delivers similar
    performance.
    In order to ensure a fair comparison we evaluate the performance
    of the proposed algorithms against other
    multi-class boosting algorithms evaluated previously, along with
    AdaBoost-SIP
    \citep{Zhang2009Finding}, JointBoost \citep{Torralba2007Sharing},
    GradBoost ($\ell_1/\ell_2$-regularized) \citep{Duchi2009Boosting}.
    Note that the last three also try to share features across classes.

\paragraph{Artificial data}
    We consider the problem of discriminating $6$ object
    classes on a $2$D plane.  Each sample consists of $2$ measurements:
    orientation and radius.
    For all classes, the orientation is drawn uniformly between $0$ and $2\pi$.
    The radius of the first group is drawn uniformly between $0$ and
    $1$, the radius of the second group between
    $1$ and $2$, and so on.
    We generate $50$ samples in the first group, $100$ samples in the second group,
    $150$ samples in the third group, and so on.
    The number of training sets is the same as the number of test sets.
    In this example feature vectors are the
    vertical and horizontal coordinates of the samples.  We train $5$ different
    classifiers based on the proposed \multistruct\ (hinge loss),
    AdaBoost.MH \citep{Schapire1999Improved}, AdaBoost.ECC
    \citep{Guruswami1999Multiclass} and JointBoost
    \citep{Torralba2007Sharing}.
    The multi-class classifier is composed of
    a set of binary decision stumps.
    For our algorithm, we choose the regularization parameter $ \nu $  from $\{ 10^{-5},
    10^{-4}, 10^{-3}, 10^{-2}, 10^{-1}\}$.  For JointBoost, we set the
    outermost class (maximal radius) as background.  We evaluate $5$
    boosting algorithms on this toy data and plot the decision
    boundary in Figure~\ref{figCVPR12:toy}.
    Table \ref{tab:Toy} reports some training and test error rates.
    Our
    algorithm performs best amongst five evaluated classifiers.
    We conjecture that the poor
    performance of JointBoost is due to the small number of background
    samples in the
    training data.  JointBoost was designed for the task of
    multi-class object detection where the objective is to detect
    several classes of objects from background samples.  The algorithm
    might not work well on general multi-class problems.
    We then repeat our
    experiment by increasing the number of iterations to $500$, and
    JointBoost, Adaboost.MH and AdaBoost.ECC
    still perform poorly on this toy data set compared to our
    approach.

    \begin{table}
      \centering
      \scalebox{1}{
      \begin{tabular}{r|ccccc}
      \hline
       $\#$ feat. &  Ada.ECC   &  Ada.MH  & JointBoost & \multiboost & \multistruct \\
      \hline
      \hline
      $20$  & $0.62/0.68$ & $0.48/0.53$ & $0.71/0.71$ & $0.10/$\textbf{0.14} & $0.10/$\textbf{0.14} \\
      $100$ & $0.23/0.33$ & $0.17/0.24$ & $0.44/0.50$ & $0.05/0.13$ & $0.03/$\textbf{0.10} \\
      $500$ & $0.08/0.20$ & $0.09/0.18$ & $0.24/0.38$ & $0.03/0.10$ & $0.02/$\textbf{0.09} \\
      \hline
      \end{tabular}
      }
      \caption{Training$/$test errors of a few multi-class boosting
      methods on the $2$D toy data set. The proposed \multistruct\ with
      hinge loss performs slightly better than others. See Figure
      \ref{figCVPR12:toy} for an illustration.
      }
      \label{tab:Toy}
    \end{table}

    \begin{figure*}[t]
        \centering
        \begin{minipage}{1\textwidth}
            \begin{tabular}{ p{0.19\textwidth}  p{0.19\textwidth}
                p{0.16\textwidth}  p{0.16\textwidth}  p{0.16\textwidth}}
               \scriptsize ~ ~  AdaBoost.ECC  &
               \scriptsize ~  AdaBoost.MH  &
               \scriptsize JointBoost  &
               \scriptsize \multiboost &
               \scriptsize \multistruct
            \end{tabular}
            \vspace{-.425cm}
        \end{minipage}
           \includegraphics[width=0.19\textwidth,clip]{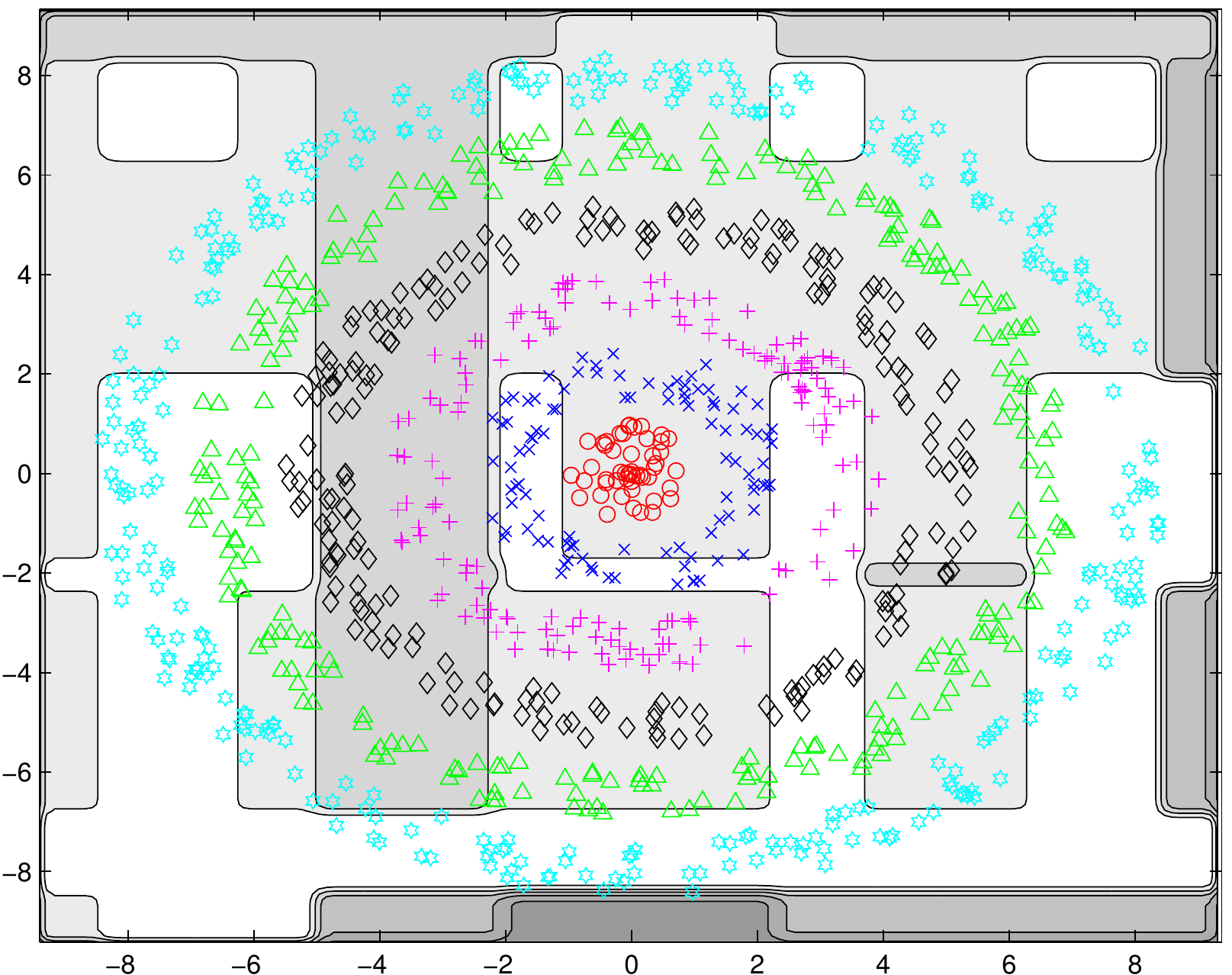}
           \includegraphics[width=0.19\textwidth,clip]{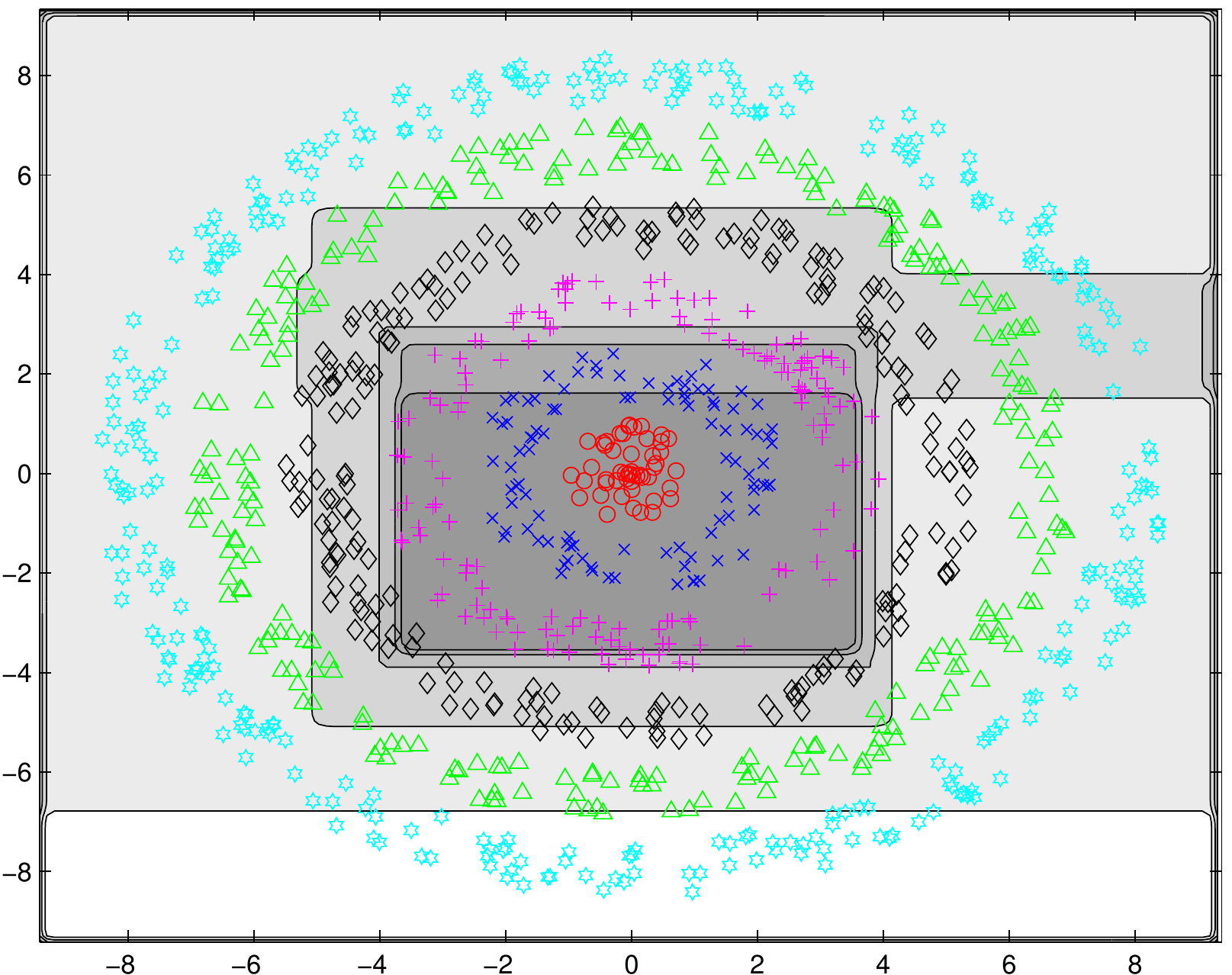}
           \includegraphics[width=0.19\textwidth,clip]{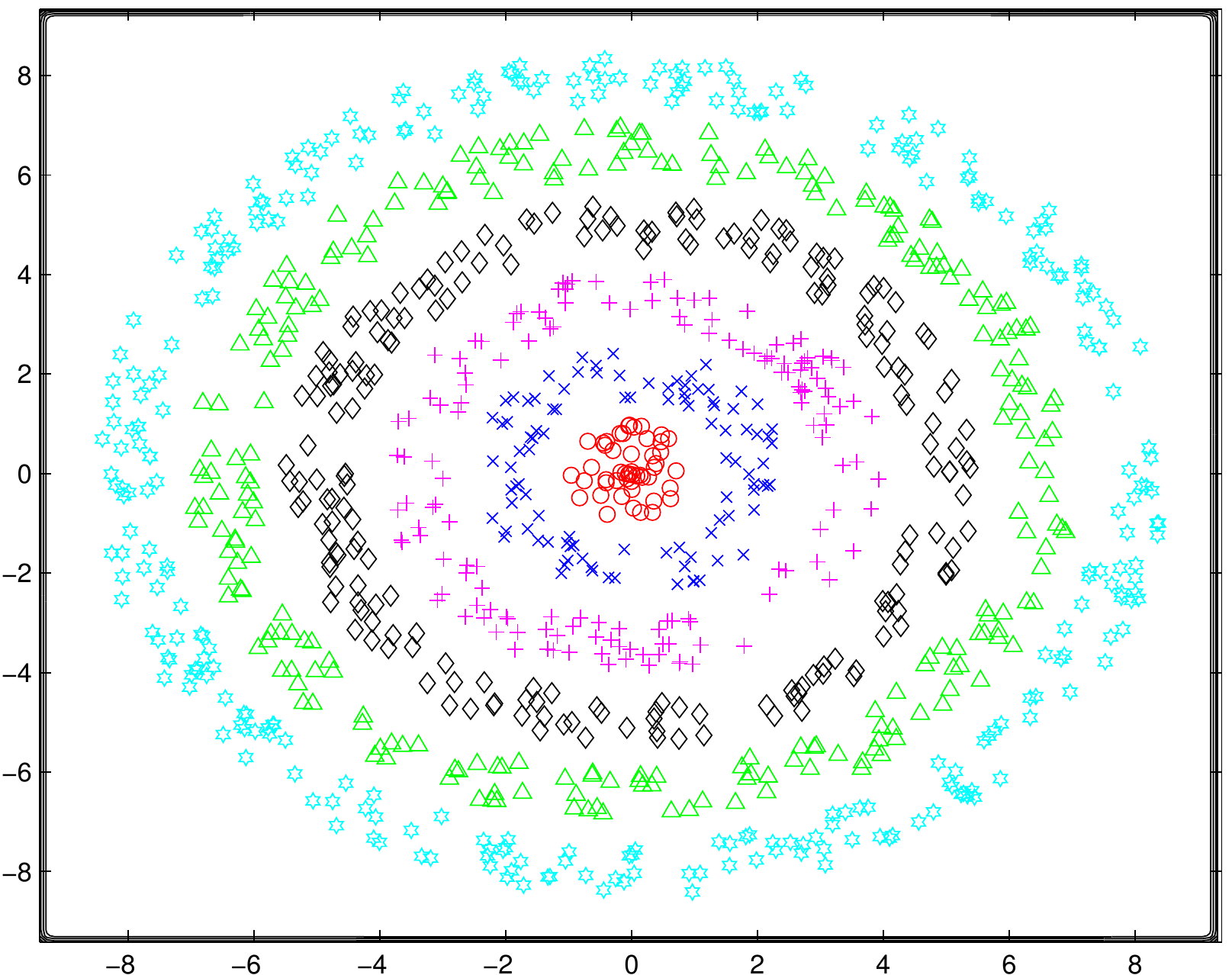}
           \includegraphics[width=0.19\textwidth,clip]{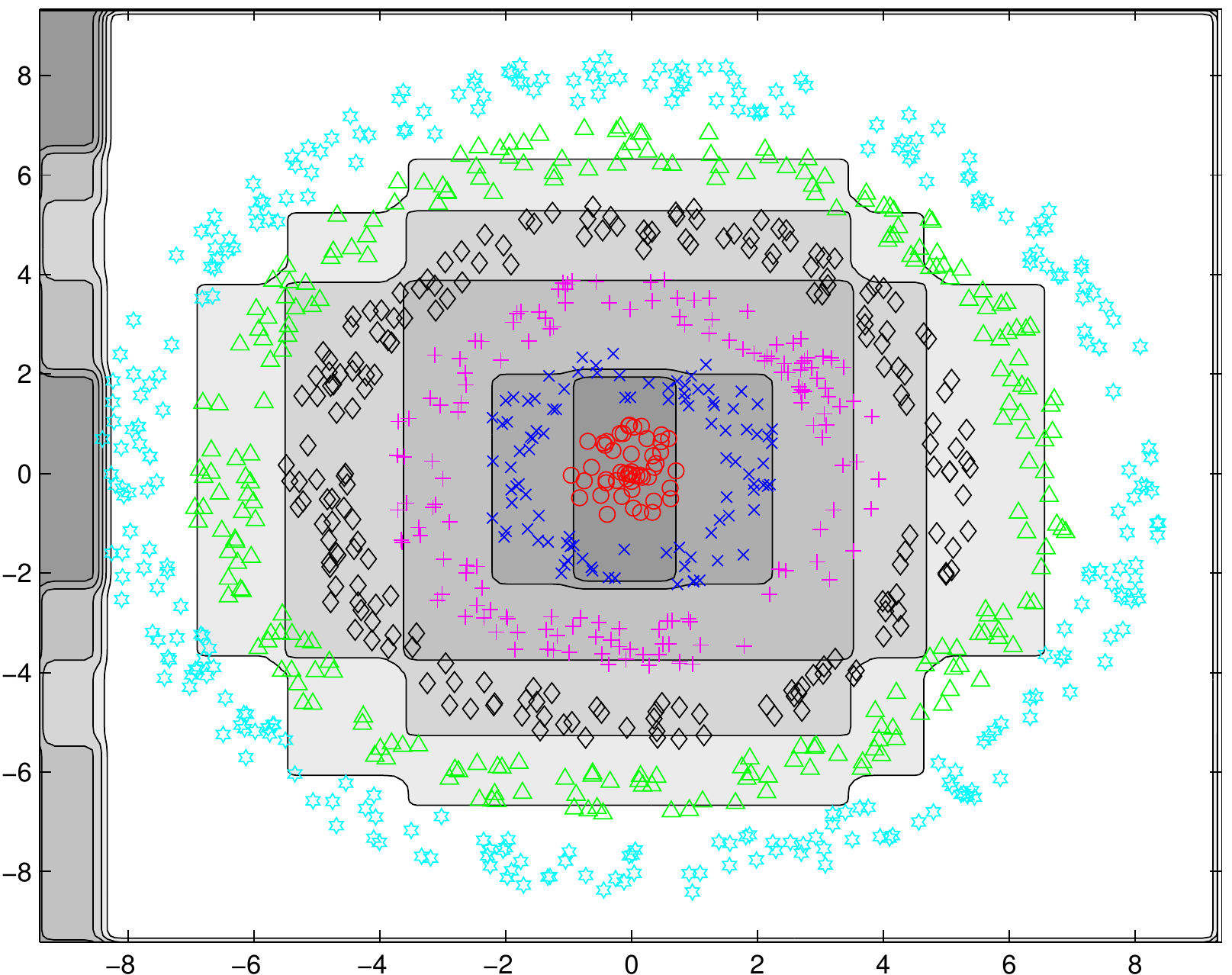}
           \includegraphics[width=0.19\textwidth,clip]{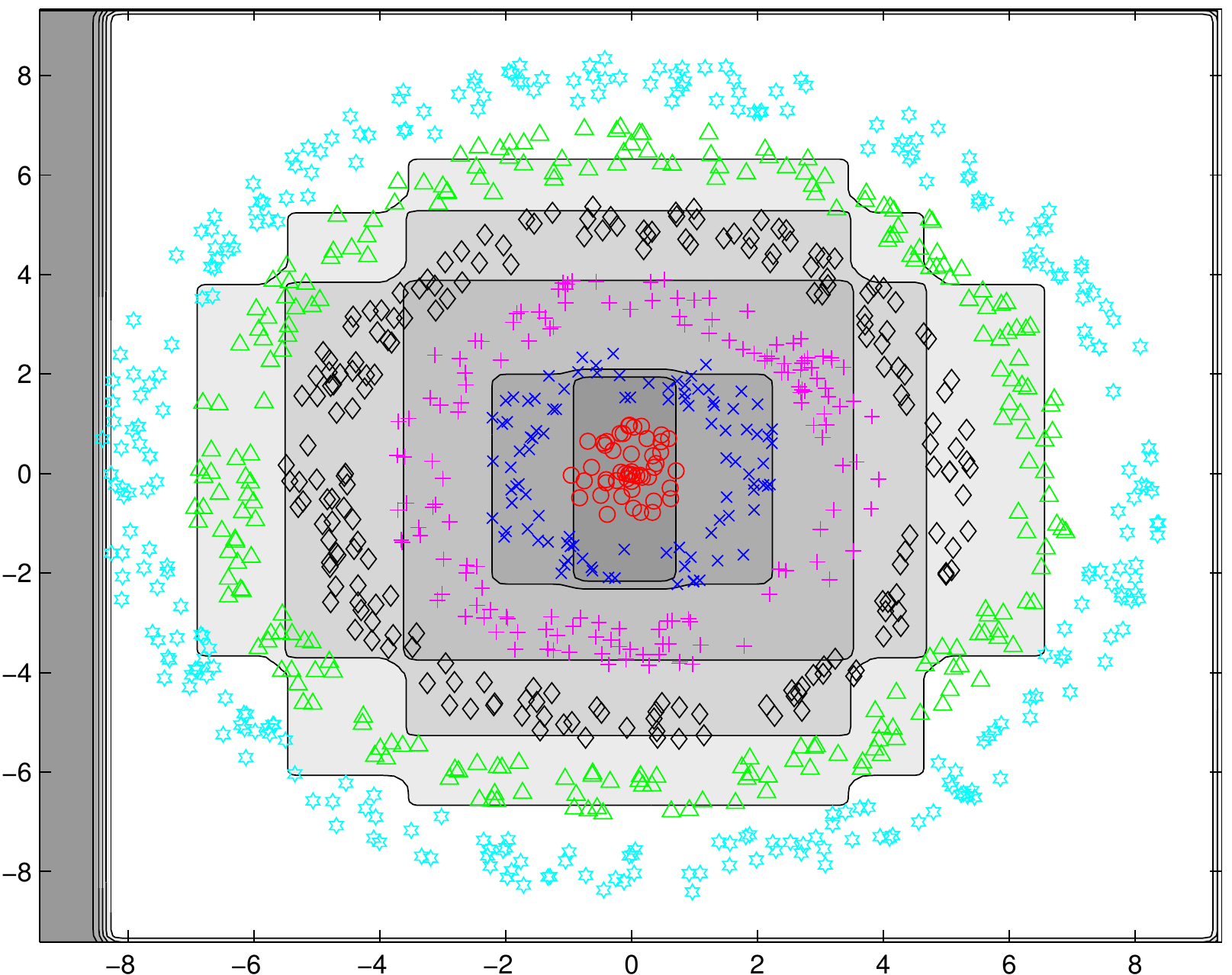}
        \includegraphics[width=0.19\textwidth,clip]{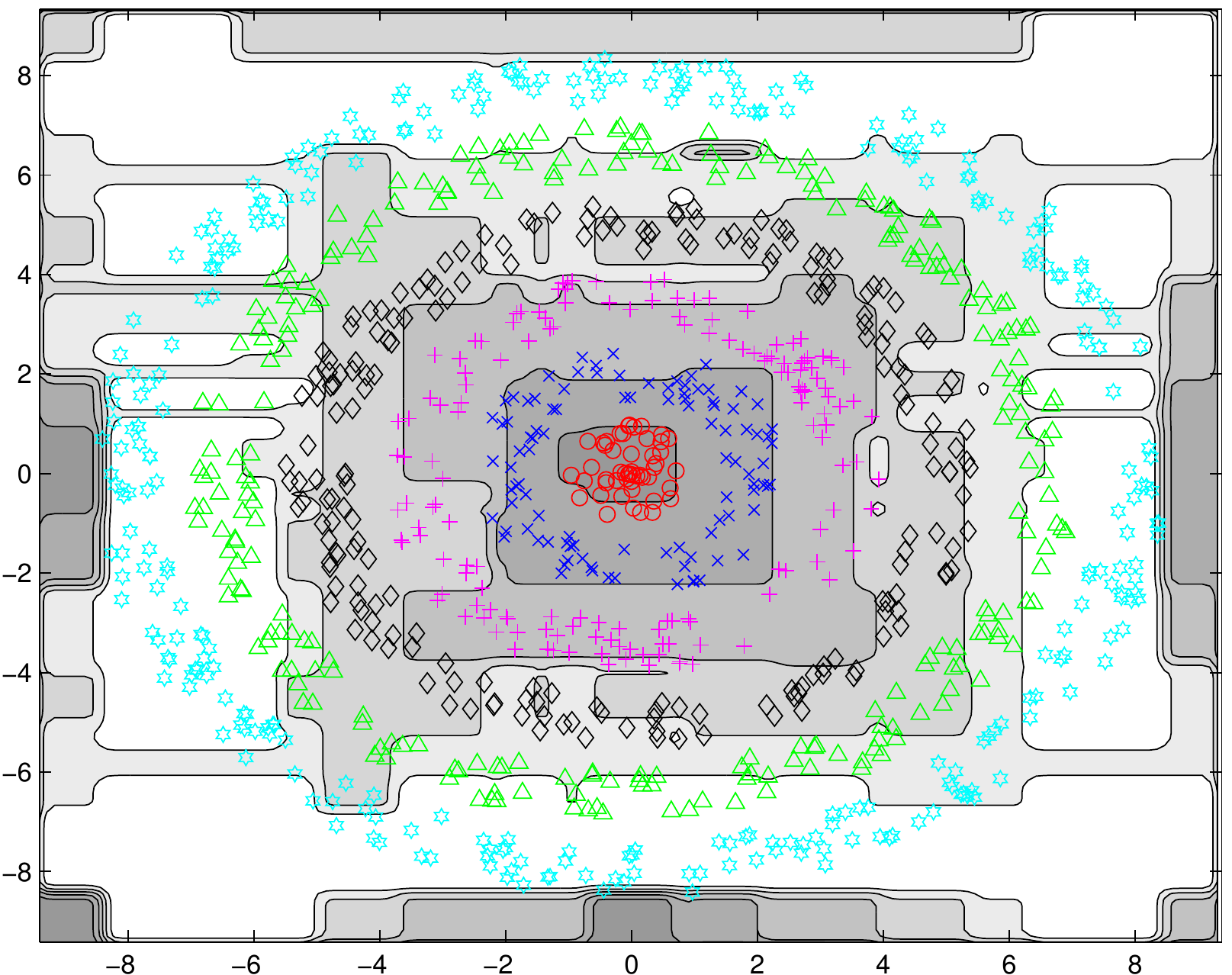}
        \includegraphics[width=0.19\textwidth,clip]{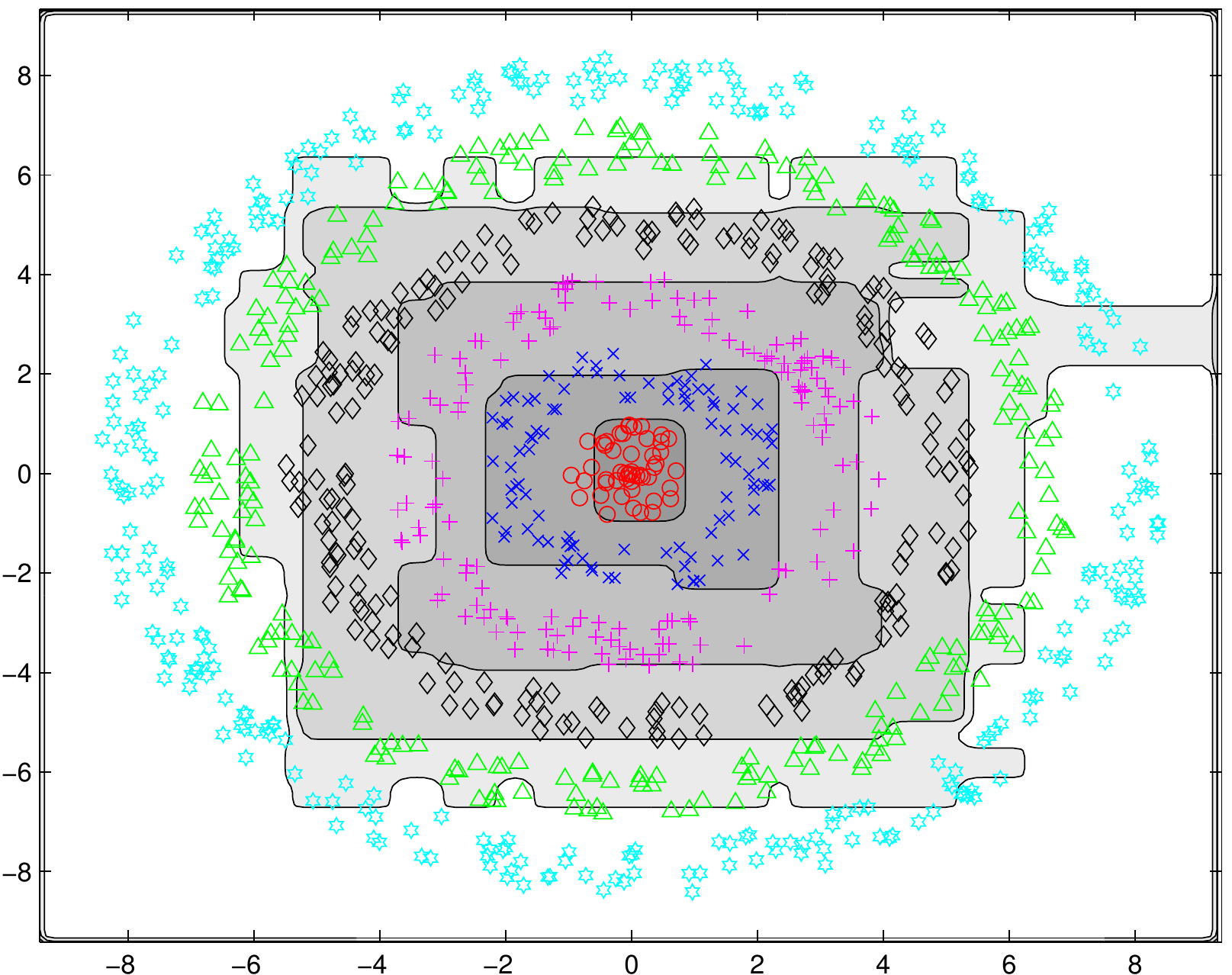}
        \includegraphics[width=0.19\textwidth,clip]{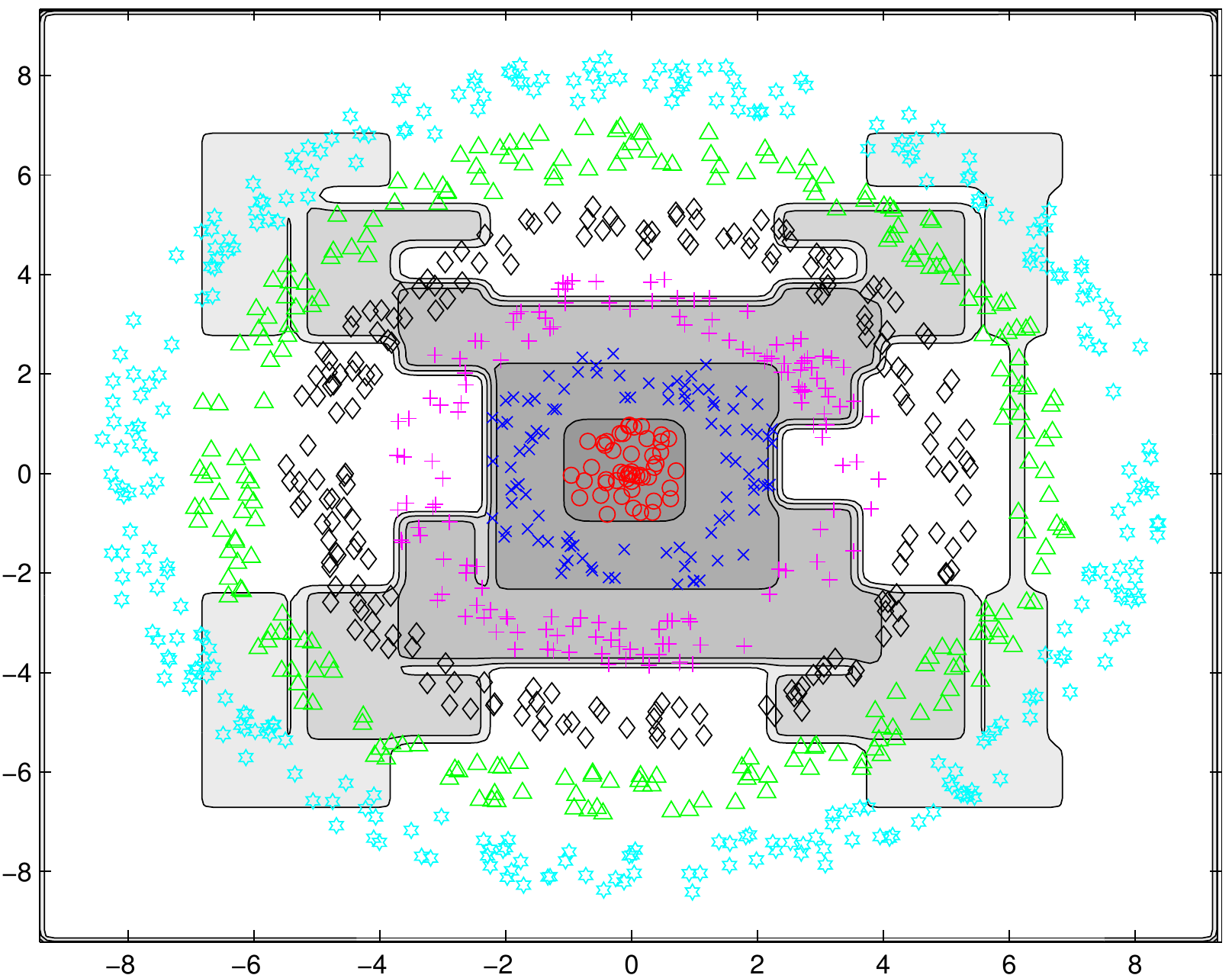}
        \includegraphics[width=0.19\textwidth,clip]{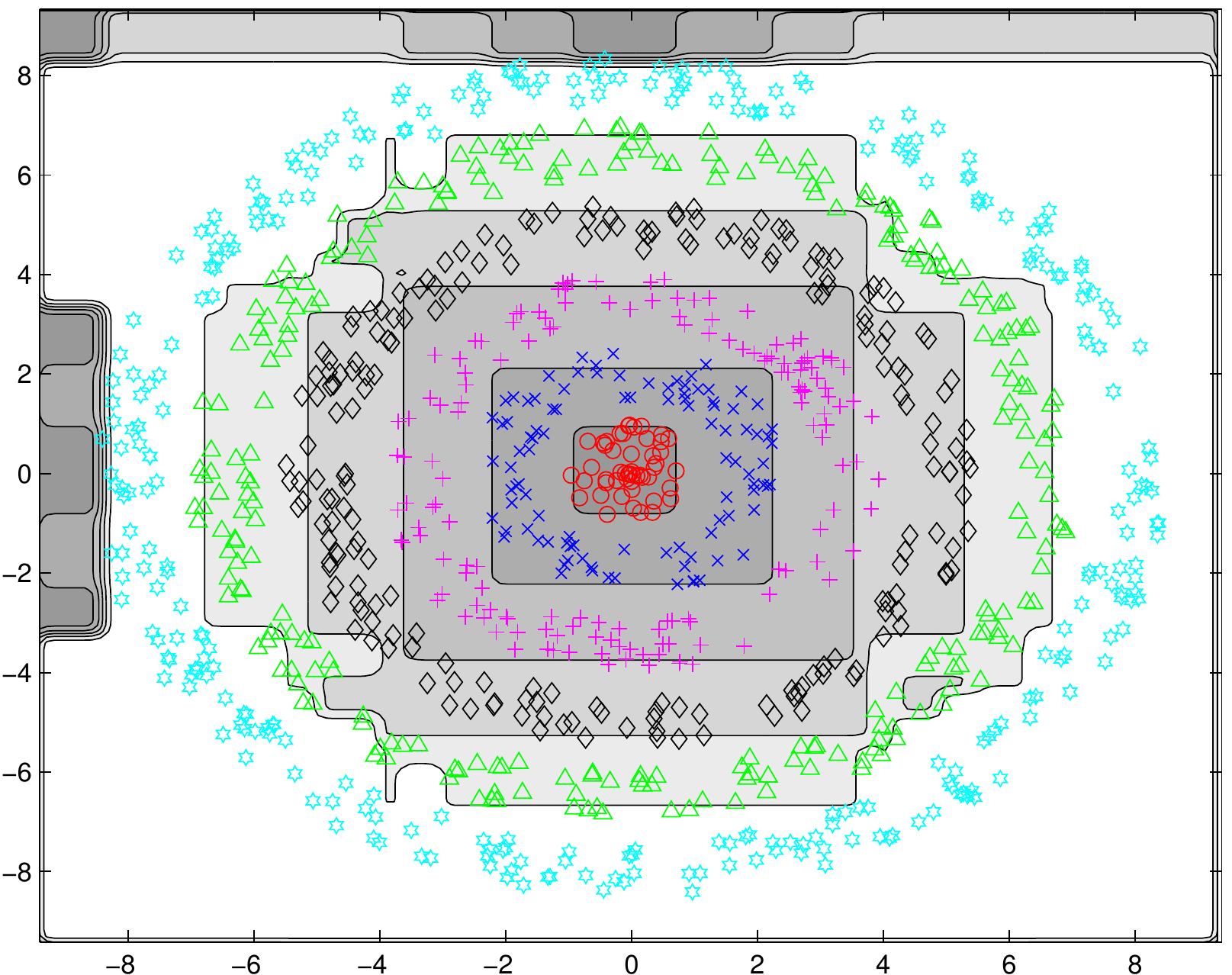}
        \includegraphics[width=0.19\textwidth,clip]{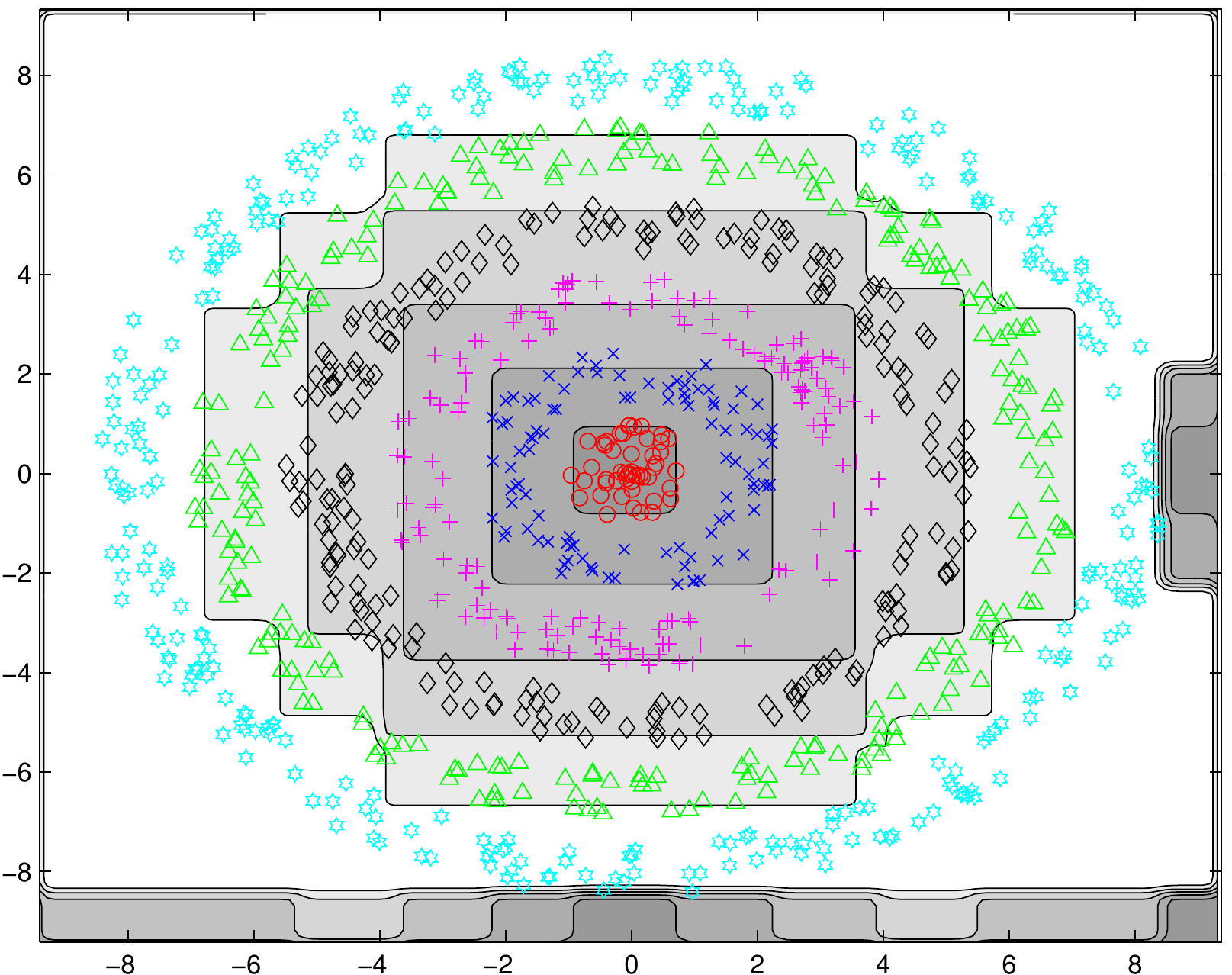}
        \includegraphics[width=0.19\textwidth,clip]{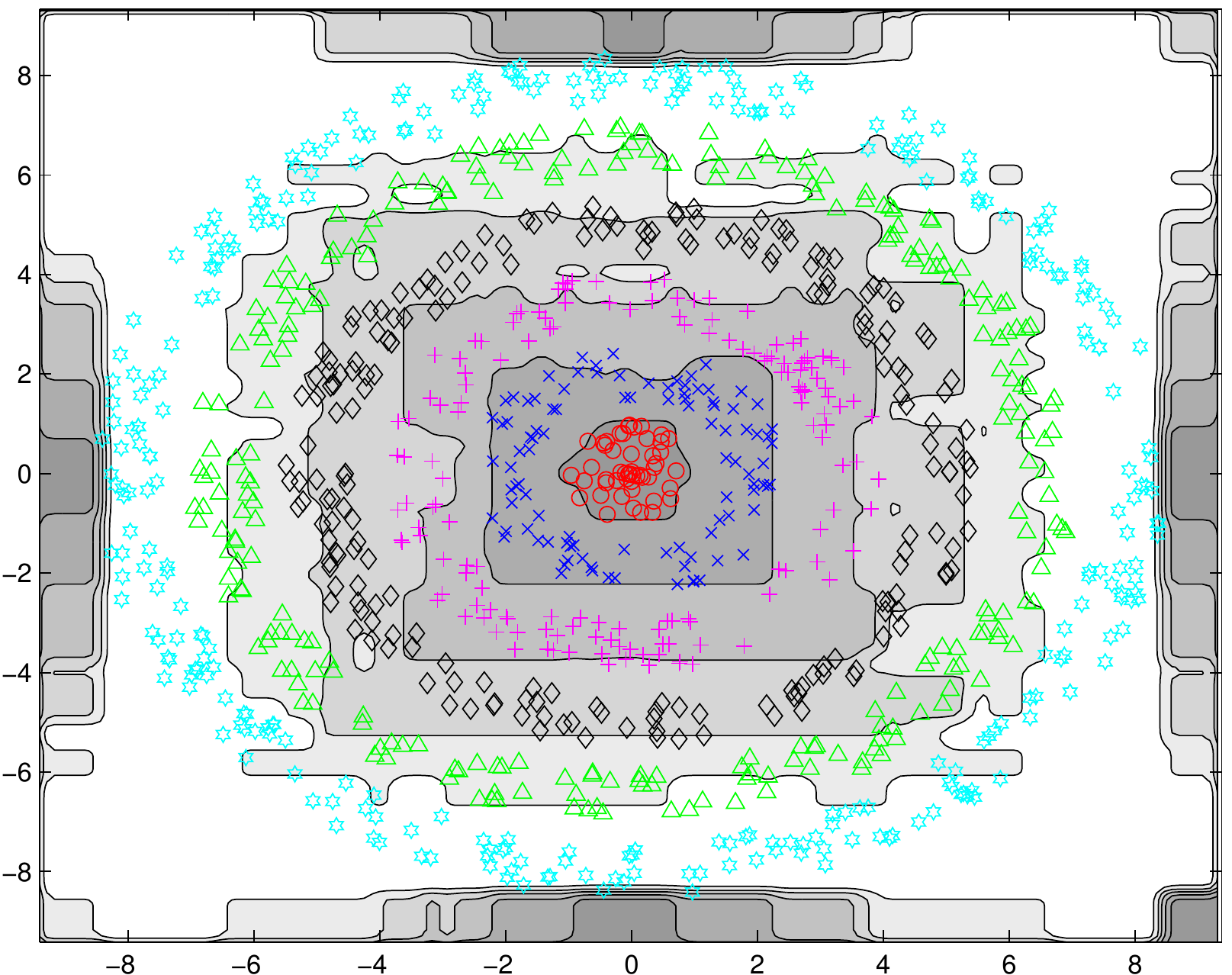}
        \includegraphics[width=0.19\textwidth,clip]{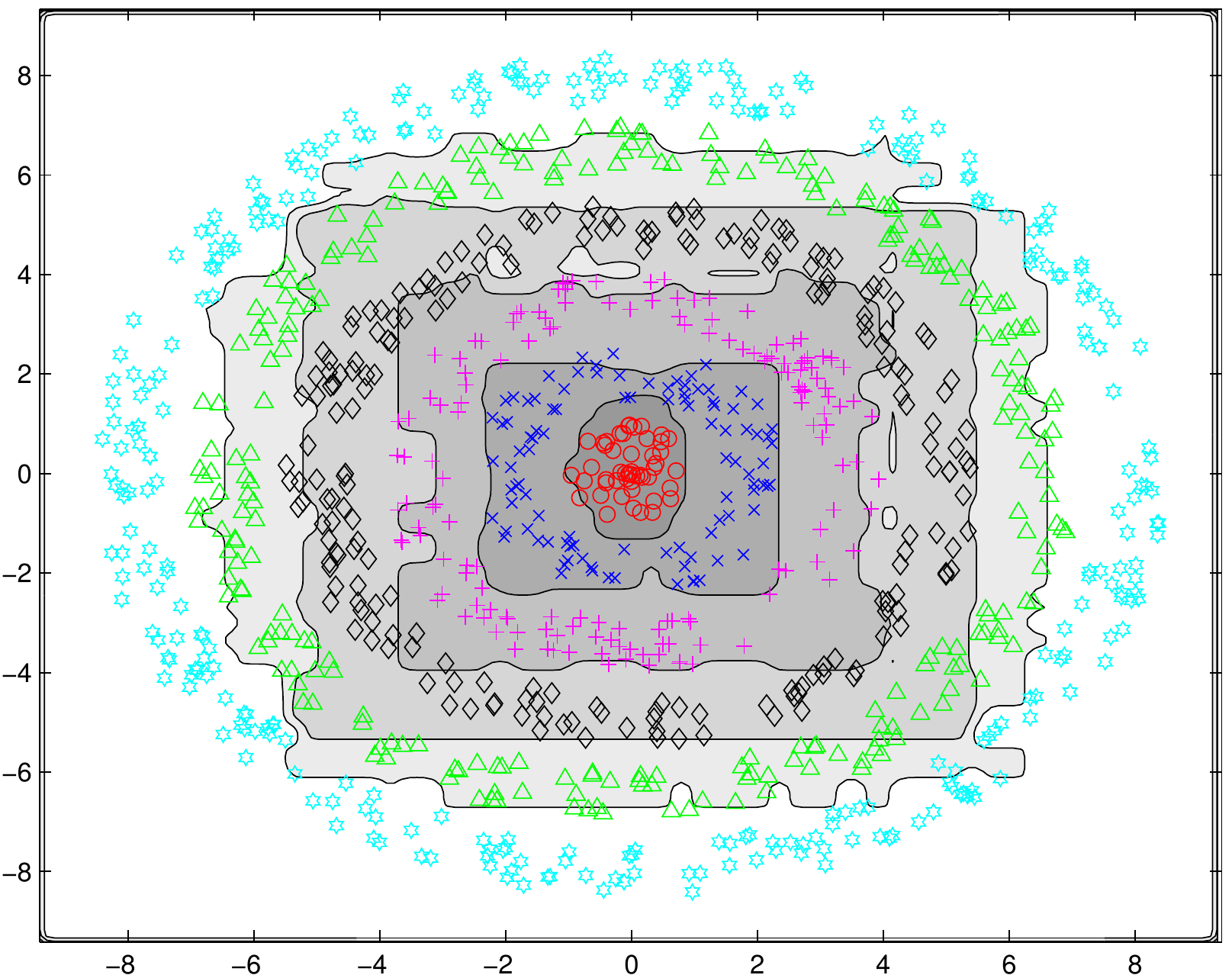}
        \includegraphics[width=0.19\textwidth,clip]{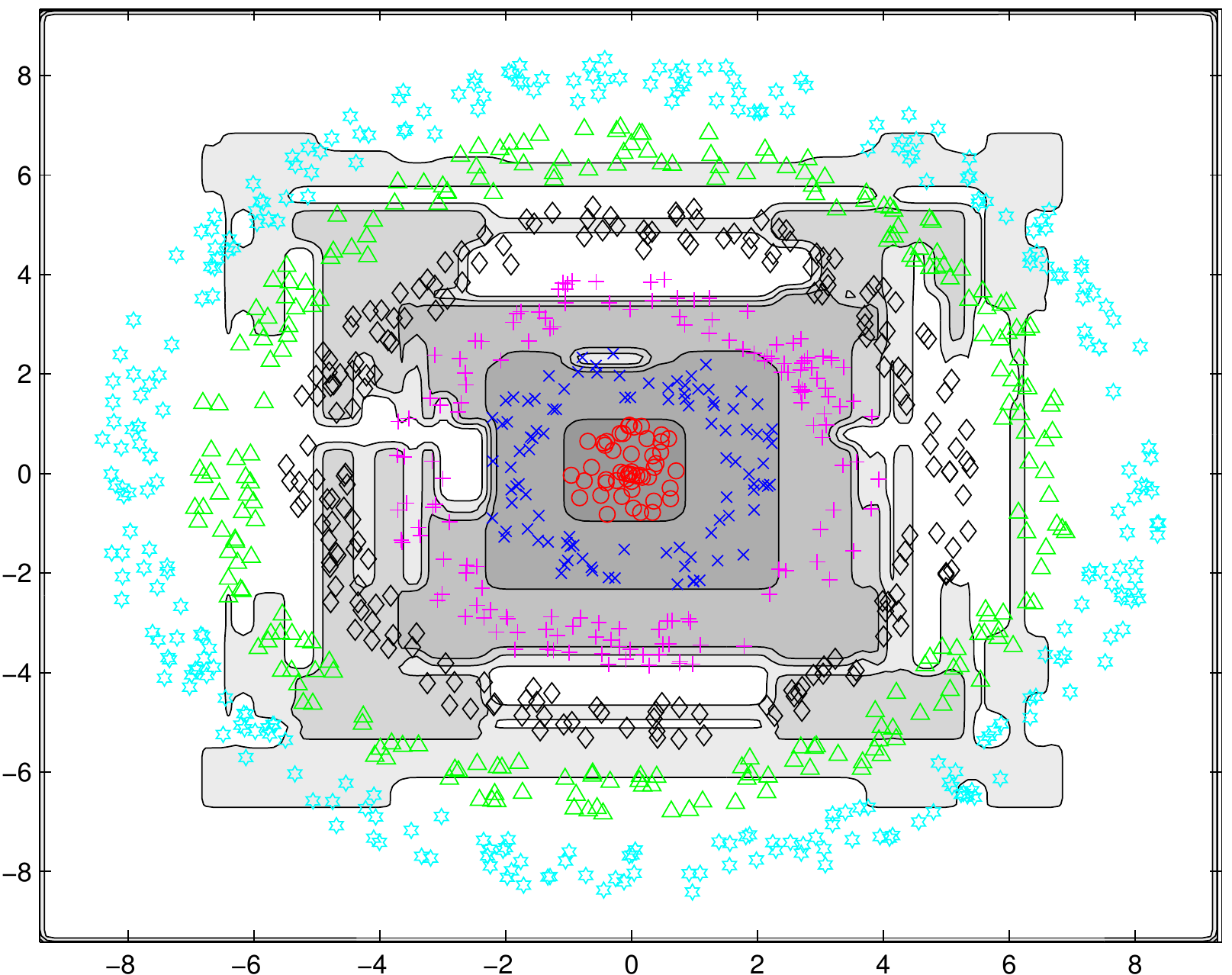}
        \includegraphics[width=0.19\textwidth,clip]{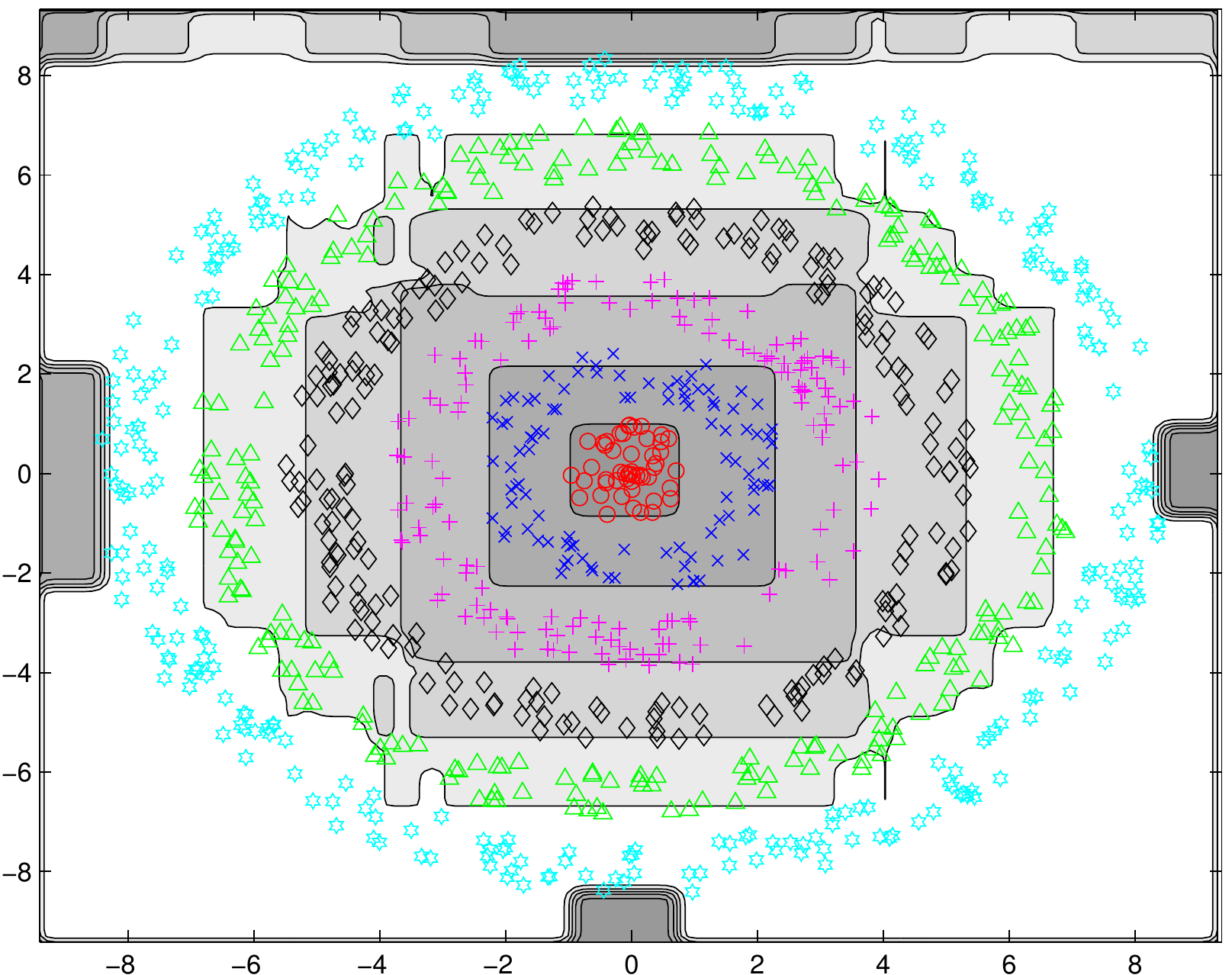}
        \includegraphics[width=0.19\textwidth,clip]{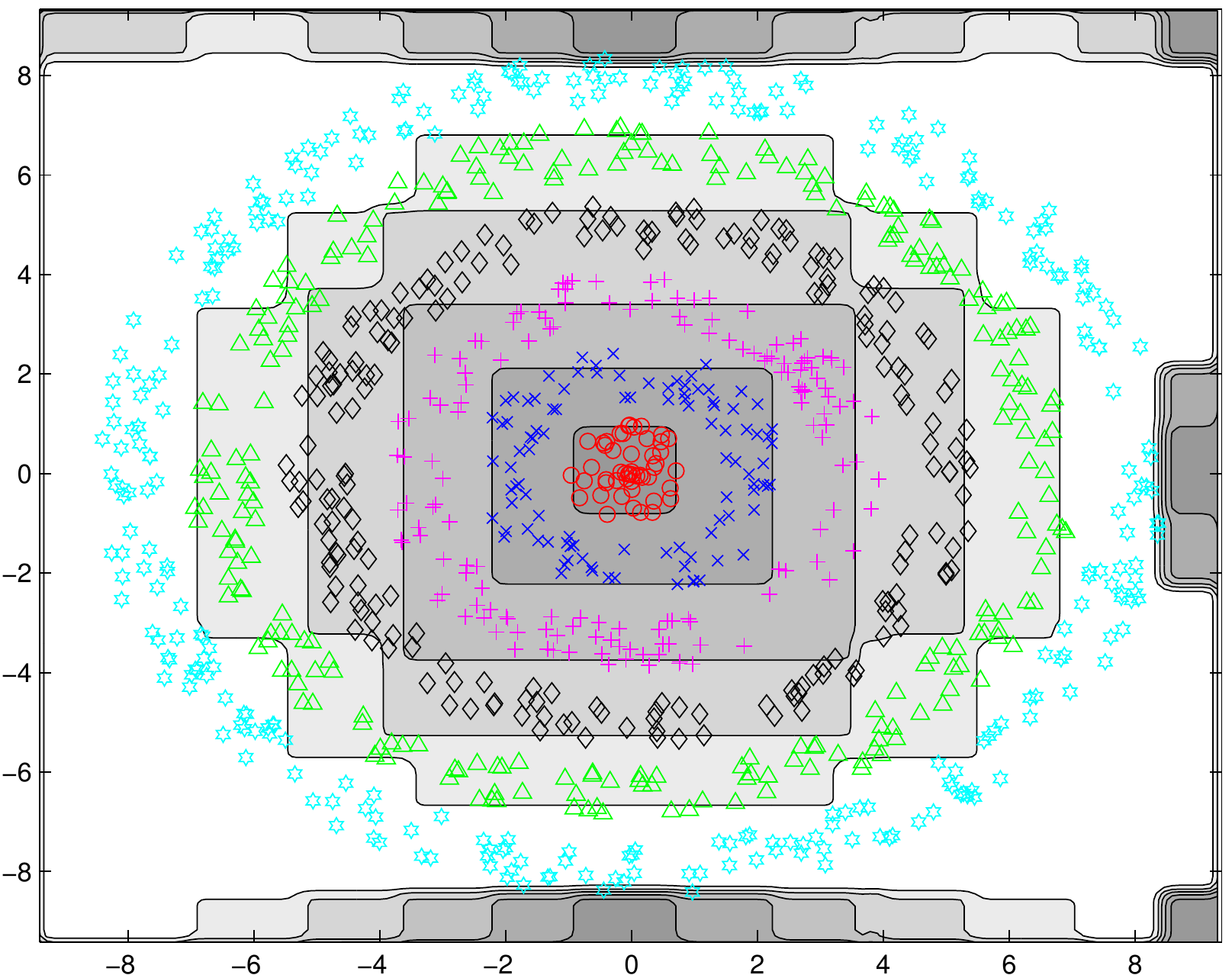}
    \caption{
        Decision boundaries on a toy data sets, with \textbf{Top row:} $20$
        weak classifiers \textbf{Middle row:} $100$ weak classifiers and
        \textbf{Bottom row:} $500$ weak classifiers.
    Note that some multi-class algorithms end up with very complicated and
    multi-modal decision boundaries.
    }
    \label{figCVPR12:toy}
    \end{figure*}

\paragraph{UCI data sets}
    The second experiment is carried out on some UCI machine learning  data sets.
    Since we are more interested in the performance of multi-class algorithms when the number of
    classes is large, we evaluate our algorithm on `segment' ($7$
    classes), `USPS' ($10$ classes),
    `pendigits' ($10$ classes), `vowel' ($11$ classes) and `isolet' ($26$ classes).
    All data instances
    from `segment' and `vowel' are used in our experiment.  For
    USPS,
    pendigits and isolet we randomly select $100$ samples from each class.  We use the original
    attributes for USPS ($256$ attributes) and isolet ($617$ attributes).
    For the rest, we increase
    the number of attributes by multiplying pairs of attributes.  Each data set is then randomly
    split into two groups: $75\%$ samples for training and $25\%$ for evaluation.
    In this experiment, we compare {\multistruct} (logistic loss) to AdaBoost.MH
    \citep{Schapire1999Improved}, AdaBoost.ECC \citep{Guruswami1999Multiclass} and GradBoost ($\ell_1/\ell_2$-regularized)
    \citep{Duchi2009Boosting}.  The regularization
    parameter is first determined by $5$-fold cross validation.

    For GradBoost, we choose the regularization parameter from $\{10^{-4}, 5
    \cdot 10^{-4}, 10^{-3}, 5 \cdot
    10^{-3}, 10^{-2}, 5 \cdot 10^{-2}, 10^{-1}, 5 \cdot 10^{-1}\}$.
    For our algorithm, we choose the regularization parameter from
    $\{ 10^{-7}, 5 \cdot 10^{-7}, 10^{-6}, 5 \cdot
    10^{-6}, 10^{-5}, 5 \cdot 10^{-5}, 10^{-4},
    5 \cdot 10^{-4}, 10^{-3} \}$.
    All experiments are
    repeated $10$ times using the same regularization parameter.
    The maximum number of
    boosting iterations is set to $500$.
    We observe that almost all the algorithms converge earlier than
    $500$ in this experiment.
    We plot the mean of test errors versus proportion of features used
    in Figure \ref{figCVPR12:figUCI}.
    These results show that our proposed approach consistently
    outperforms its competitors.  On the `segment' and `vowel' data sets
    we observe that both \multiboost and \multistruct perform similarly.
    We suspect that this is because the number of attributes
    in both data sets is quite small, and thus that there is little
    advantage to be gained through feature sharing on these data sets.
    Our approach often has the fastest convergence rate (note,
    however, that GradBoost converges faster on the USPS data sets
    but ends up with a larger test error).

\paragraph{Comparison between GradBoost and our algorithm}
    GradBoost with mixed-norm regularization \citep{Duchi2009Boosting} is
    similar to the method presented here.  The distinction, however, is
    that our method minimizes the original convex loss function rather
    than quadratic bounds on this function.
    The result is that our method is not only more effective, but also more general,
    as it can be applied not only to the logistic loss function but also to any convex loss function.
    In addition, our approach shares a similar formulation to standard boosting algorithms, \ie,
    the way we generate weak learners or update sample weights (dual variables in our
    algorithm).  The algorithm of \cite{Duchi2009Boosting} is rather heuristic and it is not
    known when the algorithm will converge.  Furthermore, GradBoost is more similar to
    FloatBoost \citep{Li2004Float} where the authors introduce a backward pruning step to remove
    less discriminative weak classifiers.  The drawback of pruning
    is 1) being heuristic and 2) a prolonged  training process.

    \begin{figure*}[t]
    \begin{center}
        \includegraphics[width=0.3\textwidth,clip]{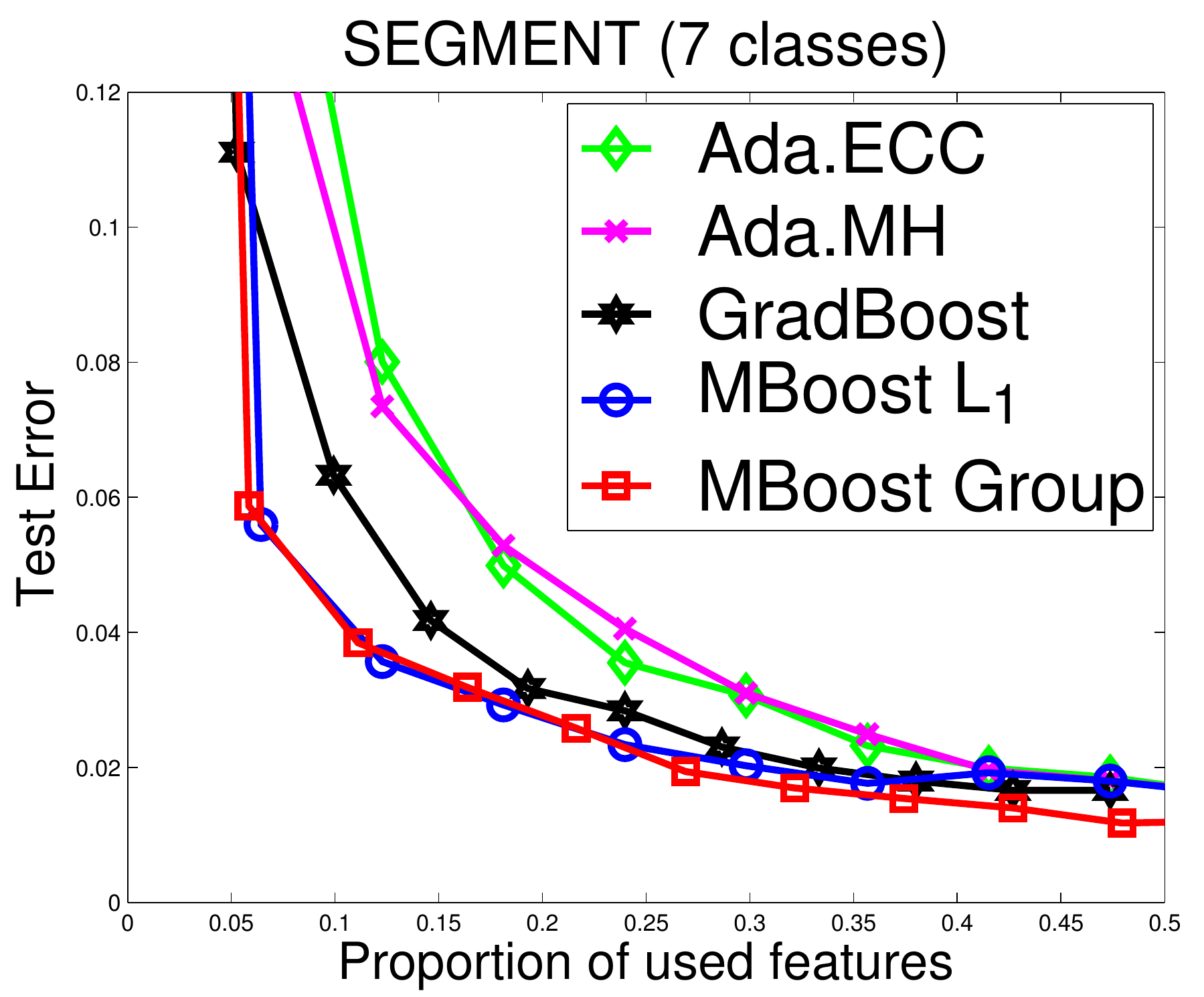}
        \includegraphics[width=0.3\textwidth,clip]{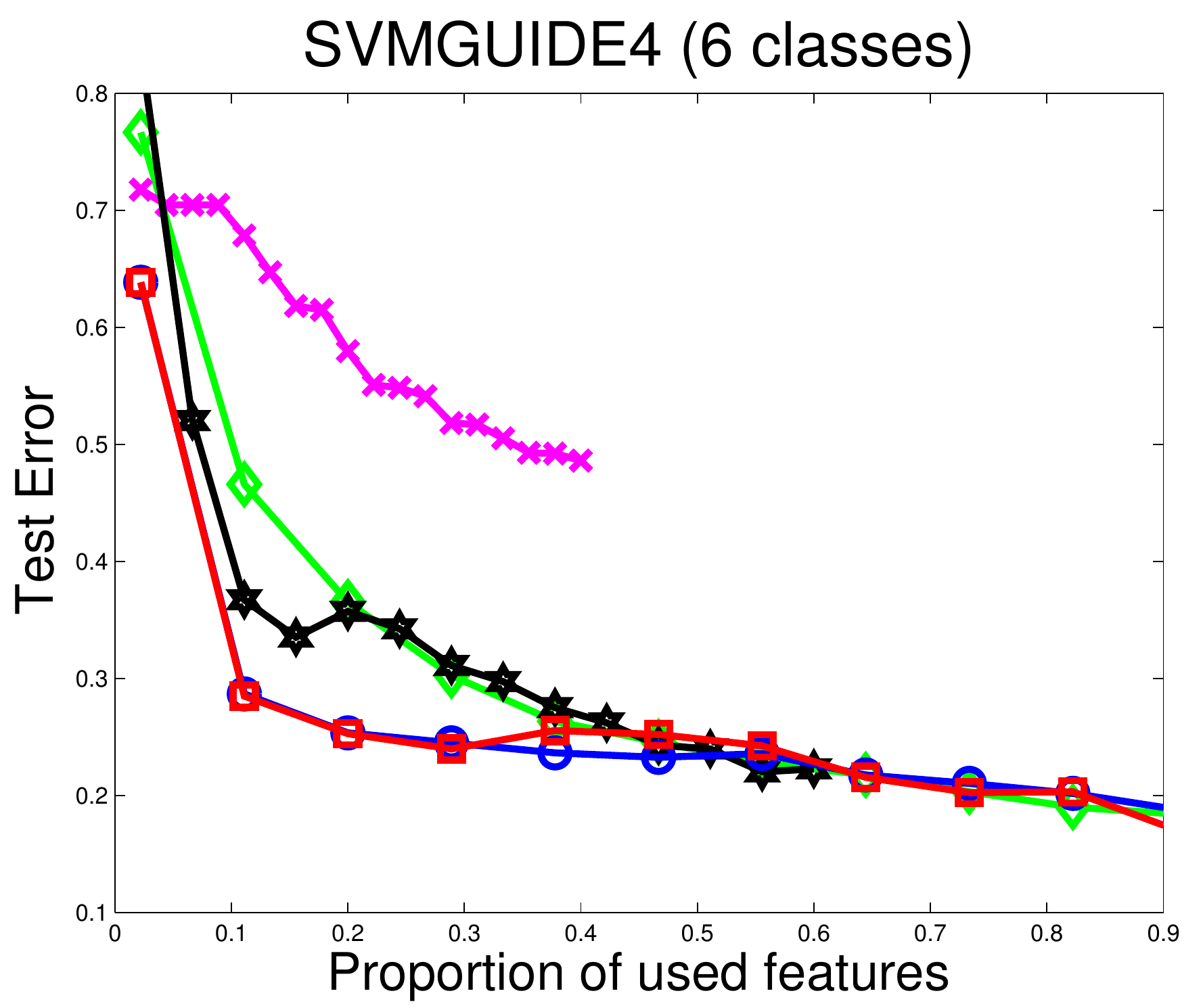}
        \includegraphics[width=0.3\textwidth,clip]{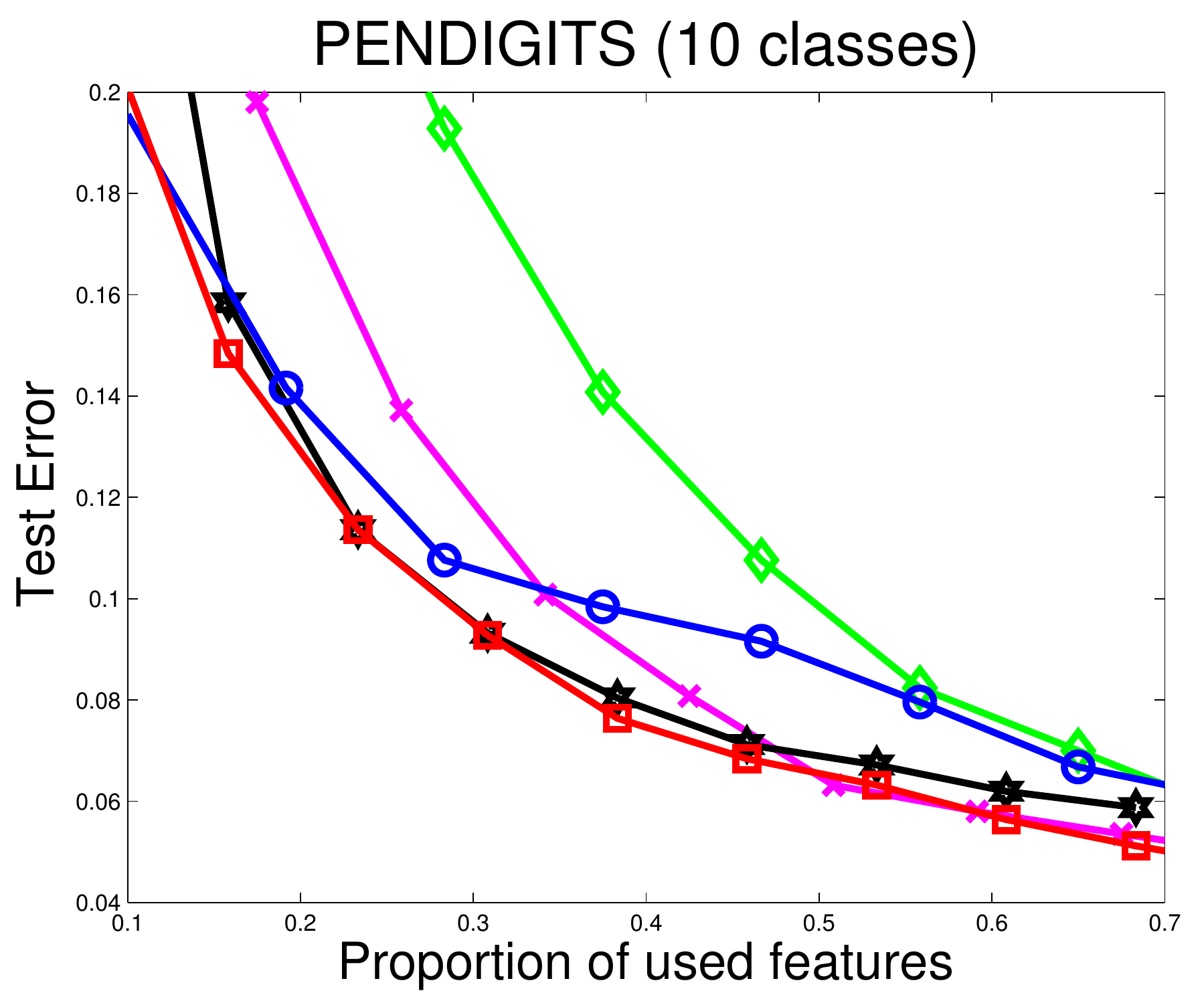}
        \includegraphics[width=0.3\textwidth,clip]{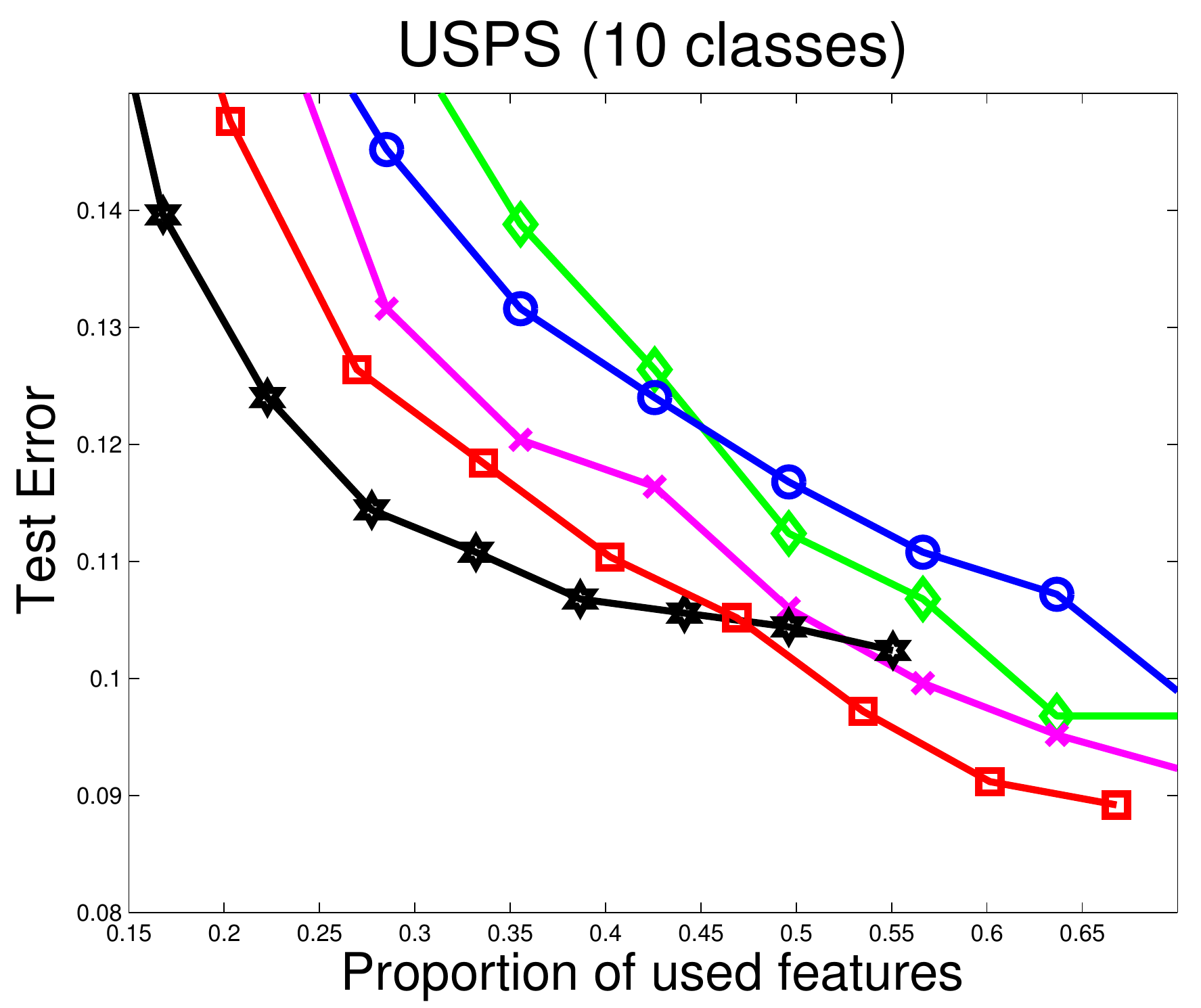}
        \includegraphics[width=0.3\textwidth,clip]{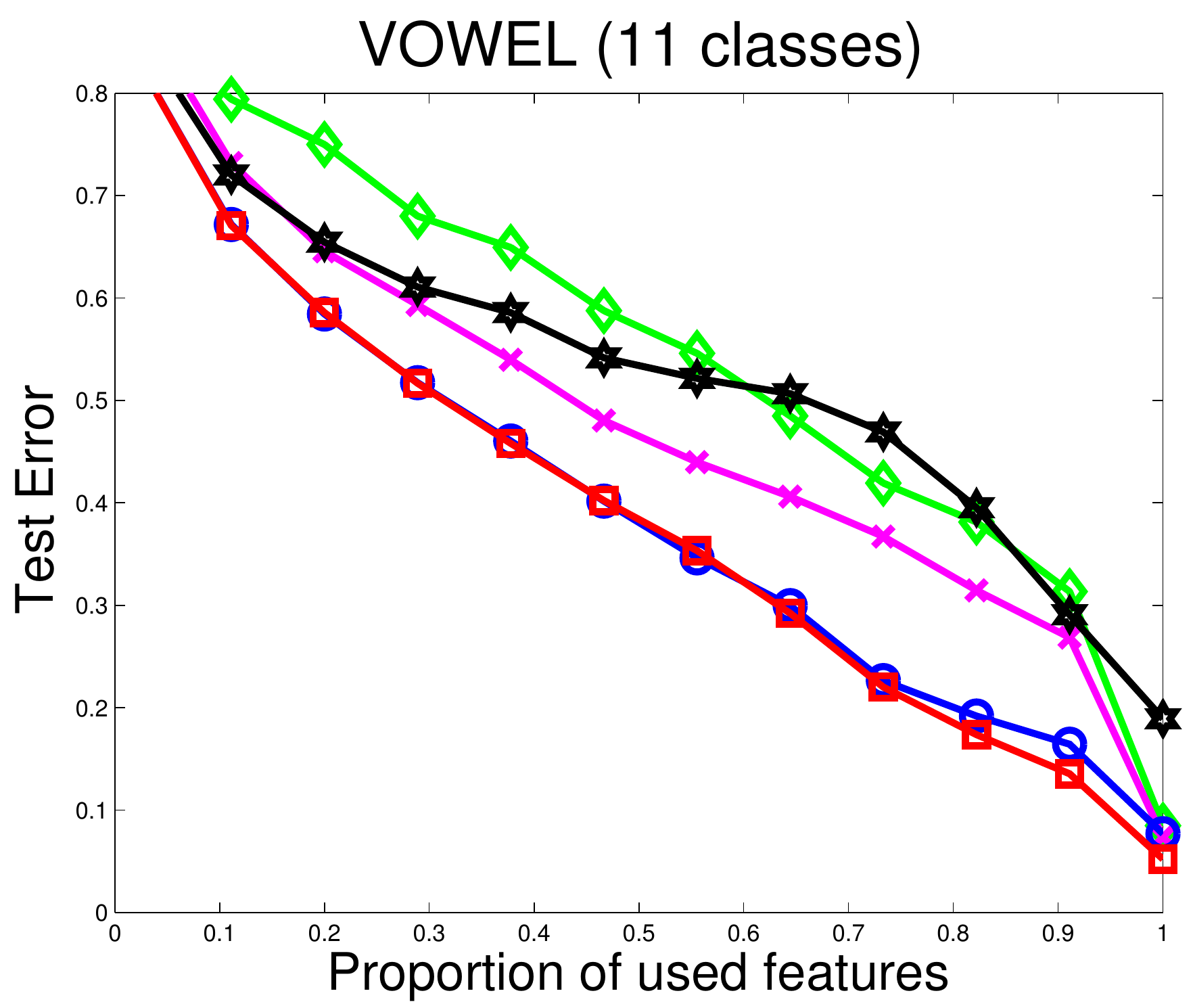}
        \includegraphics[width=0.3\textwidth,clip]{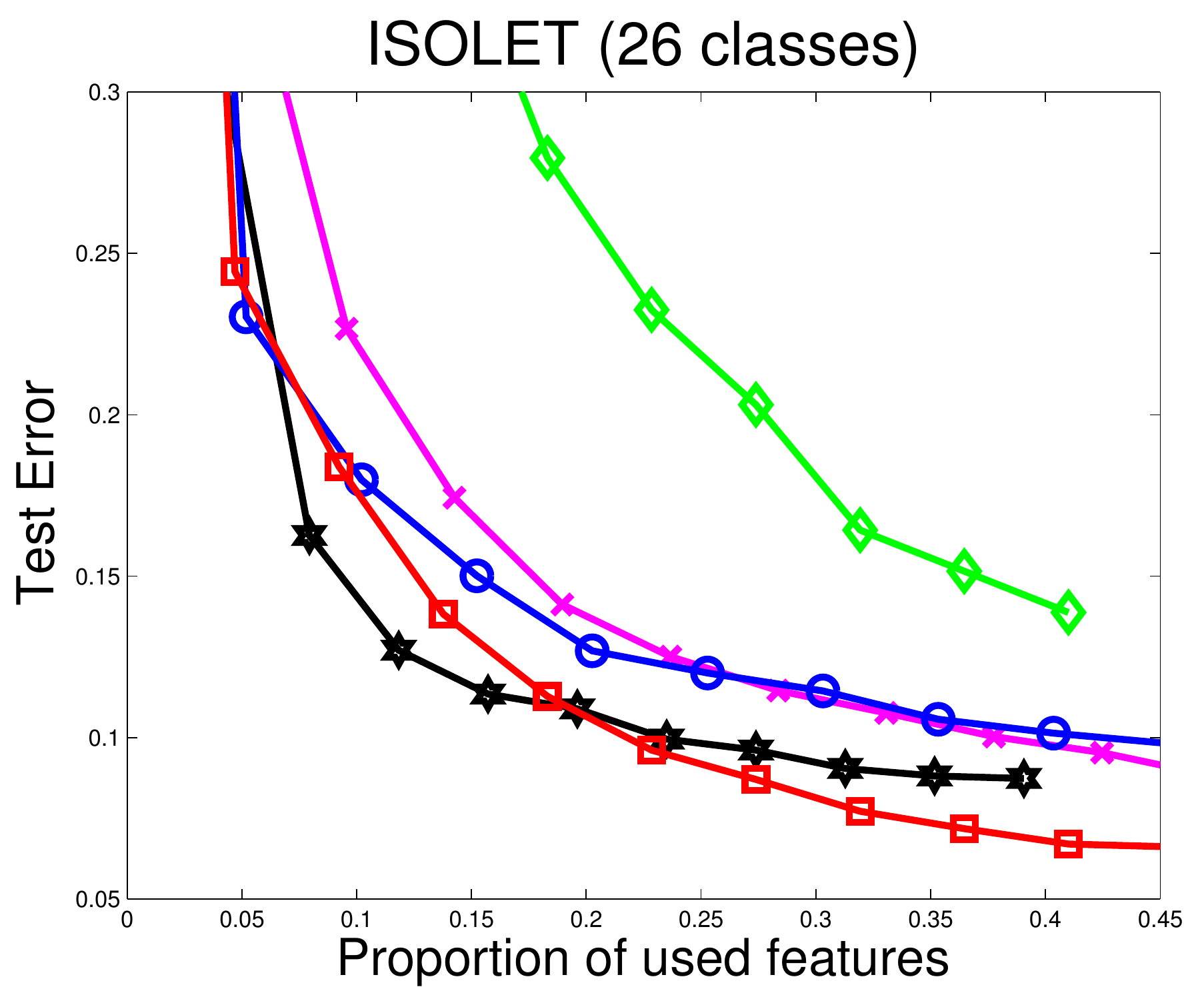}
    \end{center}
    \caption{
    The performance of our algorithm ({\multistruct}) compared with various
    boosting algorithms on several machine learning data sets.
    The horizontal axis is the fraction of used features and the vertical axis is the test error rate.
    We observe that group sparsity-based approaches (ours and GradBoost) generally converge faster than other algorithms.
    }
    \label{figCVPR12:figUCI}
    \end{figure*}

\paragraph{ABCDETC and MNIST handwritten data}
    The NEC Lab ABCDETC sets
    consist of $72$ classes (digits, letters and symbols).  For this experiment, we only use
    digits and letters ($10$ digits, $26$ lower cases and $26$ upper cases).
    We first resize the original images to a resolution of $28
    \times 28$ pixels and apply a de-skew pre-processing.  We then
    apply a spatial pyramid and extract $3$ levels of HOG features
    with $50\%$ block overlap.  The block size in each level is $4
    \times 4$, $7 \times 7$ and $14 \times 14$ pixels,
    respectively.
    Extracted HOG features from all levels are concatenated.
    In total, there are $2,172$ HOG features.
    For ABCDETC, we randomly select $5$ samples from each class as training
    sets and $120$ samples from each class as test sets.  For
    MNIST,
    we randomly select $100$ samples from each class as training
    sets and used the original test sets of $10,000$ samples.
    In this experiment, we also compare the performance of
    \multistruct\ with a fast training variant,
    {\multistructonevsall}.  All experiments are run $10$ times with $500$ boosting
    iterations and the results are briefly summarized in Table~\ref{tab:ABCDETC_error}.  From
    the table, both {\multistruct} and {\multistructonevsall} perform best compared to
    other evaluated algorithms, especially on ABCDETC test sets where the number of classes is
    large.  We observe the \ova approach to perform slightly better than {\multistruct}.
    In our work, the advantage of the \ova
    approach compared to {\multistruct} is that the training time can be further reduced by
    exploiting parallelism in \ADMM, as previously mentioned.
    Table~\ref{tab:ABCDETC_shared} illustrates the feature
    sharing property of our algorithms.
    Clearly we can see that the group sparsity regularization
    indeed encourages sharing features.

    \begin{table}[t]
      \begin{center}
          \footnotesize
      \begin{tabular}{r|cc}
      \hline
             &  MNIST  &  ABCDETC \\
      \hline
      \hline
      Ada.MH \citep{Schapire1999Improved}& \textbf{3.0} ($0.2$)& $63.4$ ($1.8$)  \\
      Ada.ECC \citep{Guruswami1999Multiclass}&  $3.1$ ($0.2$)& $70.5$ ($1.1$)  \\
      Ada.SIP \citep{Zhang2009Finding}& $4.4$ ($1.3$) & $62.7$ ($1.2$)  \\
      GradBoost \citep{Duchi2009Boosting} & $5.3$ ($0.3$) & $73.9$ ($1.3$)  \\
      \multiboost & $3.7$ ($0.2$)  & $73.2$ ($0.7$)  \\
      \multistruct &  $3.1$ ($0.2$)  & $59.1$ ($1.1$)  \\
      \multistructonevsall & \textbf{3.0} ($0.3$)  & \textbf{58.2} ($0.9$) \\
      \hline
      \end{tabular}
      \end{center}
      \vspace{-.31cm}
      \caption{Test errors (\%) of
      a few multi-class boosting methods
      on the MNIST and ABCDETC handwritten data sets.
      All experiments are run $10$ times with $500$ boosting iterations.
      The average error mean and standard deviation (in percentage) are reported.
      }
      \label{tab:ABCDETC_error}
    \end{table}

    \begin{table}
    \centering
          {
      \begin{tabular}{ r | l l l l}
      \hline
      MNIST &  `$0-3$'  &  `$4-5$' & `$6-7$' & `$8-10$'  \\
      \hline
      \hline
      \multiboost  &  $99.8\%$ & $0.2\%$ & $0\%$ & $0\%$  \\
      \multistruct &  $4.5\%$ & $48.8\%$ & $40.9\%$ & $5.8\%$  \\
      \multistructonevsall   &  $10.1\%$ & $69.9\%$ & $19.7\%$ & $0.3\%$  \\
      \hline
      ABCDETC &  `$0-15$'  &  `$16-30$' & `$31-45$' & `$46-62$'  \\
      \hline
      \hline
      \multiboost  &  $99.8\%$ & $0.2\%$ & $0\%$ & $0\%$  \\
      \multistruct &  $0\%$ & $81.3\%$ & $18.7\%$ & $0\%$  \\
      \multistructonevsall   &  $0\%$ & $65.7\%$ & $33.5\%$ & $0.7\%$  \\
      \hline
      \end{tabular}
      }
      \caption{The distribution of shared weak classifiers.
      For example,
      `$8-10$' indicates that the weak classifier is being shared among $8$ to $10$ classes.
      The table illustrates the feature sharing property of our
      algorithms, \ie, one weak classifier is being shared among multiple
      classes.
      }
      \label{tab:ABCDETC_shared}
    \end{table}

\paragraph{Scene recognition}
    In the next experiment, we compare our approach on the
    $15$-scene data set used in
    \cite{Lazebnik2006Beyond}.  The set consists of $9$ outdoor scenes and 6 indoor
    scenes.  There are $4,485$ images in total. For each run, the available data are
    randomly split into a training set and a test set based on published protocols.
    This is repeated $5$ times and the average accuracy is reported.  In each train/test
    split, a visual codebook is generated using only training images.  Both training and
    test images are then transformed into histograms of code words.
    We use CENTRIST of \cite{Wu2011CENTRIST} as our feature descriptors.  $200$ visual code words
    are built using the histogram intersection kernel (HIK), which has
    been
    shown to outperform $k$-means and
    $k$-median \citep{Wu2011CENTRIST}. We represent each image in a spatial hierarchy
    manner \citep{Bosch2008Scene}.  Each image consists of $31$ sub-windows.  An image is
    represented by the concatenation of histograms of code words from all $31$ sub-windows.
    Hence, in total there are $6,200$ dimensional histogram.

    Figure~\ref{figCVPR12:scene} shows the average classification errors.
    We observe that both {\multistruct} and \multiboost converge quickly in
    the beginning.
    However, \multistruct has a better overall convergence rate.
    We also observe that
    both ({\multistruct} and  {\multistructonevsall}), have the lowest test error
    compared to other algorithms evaluated.  We also apply a multi-class SVM to the above data
    set using the LIBSVM package \citep{Chang2011LIBSVM} and report the recognition results in
    Table~\ref{tab:scene}.  SVM with $6,200$ features achieves an average
    accuracy of $76.30\%$ (linear) and $81.47\%$ (non-linear).  Our results indicate that both
    proposed approaches achieve a comparable accuracy to non-linear SVM while requiring less
    number of features ($77.8\%$ accuracy for {\multistruct} with
    $1000$ features and $79.2\%$
    accuracy for {\multistructonevsall}).

    \begin{table}
      \centering
      \scalebox{1}
      {
      \begin{tabular}{r|c|c}
      \hline
       methods      &  $\#$ features used  &  accuracy ($\%$) \\
      \hline
      \hline
      SAMME$^\dag$ \citep{Zhu2009Multi} & $1000$ & $70.9$ ($0.40$) \\
      JointBoost$^\dag$ \citep{Torralba2007Sharing} & $1000$ & $72.2$ ($0.70$) \\
      \multiboost & $1000$ & $76.0$ ($0.48$) \\
      AdaBoost.SIP \citep{Zhang2009Finding}& $1000$ & $75.7$ ($0.10$)  \\
      AdaBoost.ECC \citep{Guruswami1999Multiclass}& $1000$ & $76.5$ ($0.67$) \\
      AdaBoost.MH \citep{Schapire1999Improved}& $1000$ & $77.6$ ($0.59$) \\
      {\multistruct}  &  $1000$ & $77.8$ ($0.77$)\\
      \multistructonevsall & \textbf{1000} & \textbf{79.2} ($0.82$)\\
      Linear SVM & $6200$ & $76.3$ ($0.88$) \\
      Nonlinear SVM (HIK) & \textbf{6200} &  \textbf{81.4} ($0.60$)\\
      \hline
      \end{tabular}
      }
      \caption{Recognition rate of various algorithms on Scene$15$ data sets.
      All experiments are run $5$ times.
      The average accuracy mean and standard deviation (in percentage) are reported.
      Results marked by $\dag$
      were reported in \cite{Zhang2009Finding}.
      }
      \label{tab:scene}
    \end{table}

    \begin{figure}[t]
    \centering
        \includegraphics[width=0.45\textwidth,clip]{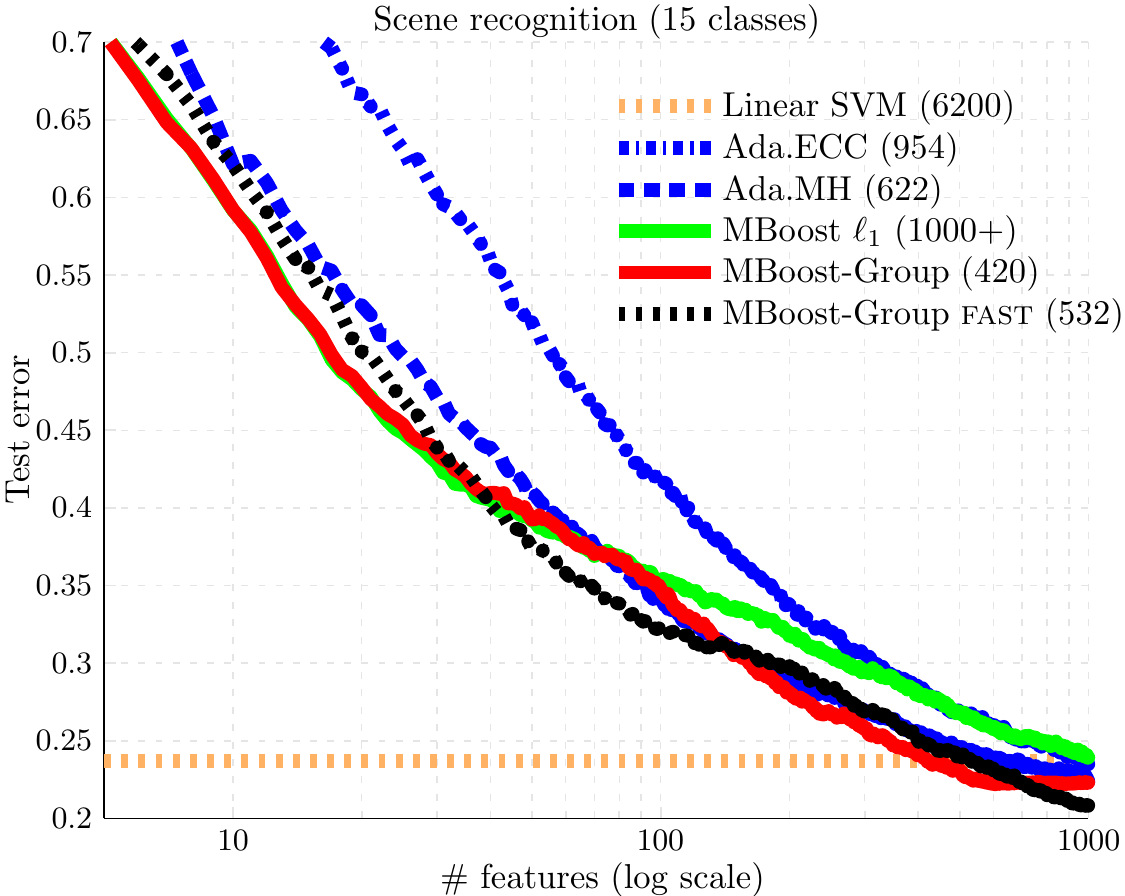}
    \caption{
    Performance of different classifiers on the scene recognition data set.
    We also report the number of features required to achieve similar results
    to linear multi-class SVM.
    Both of our methods ({\multistruct} and
    \multistructonevsall) outperform other evaluated boosting algorithms.
    }
    \label{figCVPR12:scene}
    \end{figure}

\paragraph{Traffic sign recognition}
    We evaluate our approach on the recent German traffic sign recognition
    benchmark\footnote{\url{http://benchmark.ini.rub.de/}}.  Data sets consist of $43$
    classes with more than $50,000$ images in total.  We randomly select $100$ samples from
    each class to train our classifier.  We use the provided test set to evaluate the
    performance of our classifiers ($12,569$ images).  All training images are scaled to $40
    \times 40$ pixels using bilinear interpolation.  Three different types of pre-computed
    HOG features are provided ($6,052$ features).  We
    combine all three types together.  We also make use of histogram of hue values ($256$
    bins).
    Hence, there is a total of $6,308$ features.  The results of different classifiers are
    shown in Figure~\ref{figCVPR12:traffic}.  Our proposed classifier outperforms other evaluated
    classifiers.  As a baseline, we train a multi-class SVM using LIBSVM
    \citep{Chang2011LIBSVM}.  SVM achieves $93.05\%$ (using $6,308$ features) while our
    classifier achieves $95.62\%$ for {\multistruct} and $95.42\%$ for {\multistructonevsall} with a
    much smaller set of features ($500$ features).  Note that
    an overfitting behavior is
    observed for \multiboost.

    \begin{figure}[t]
    \centering
        \includegraphics[width=0.45\textwidth,clip]{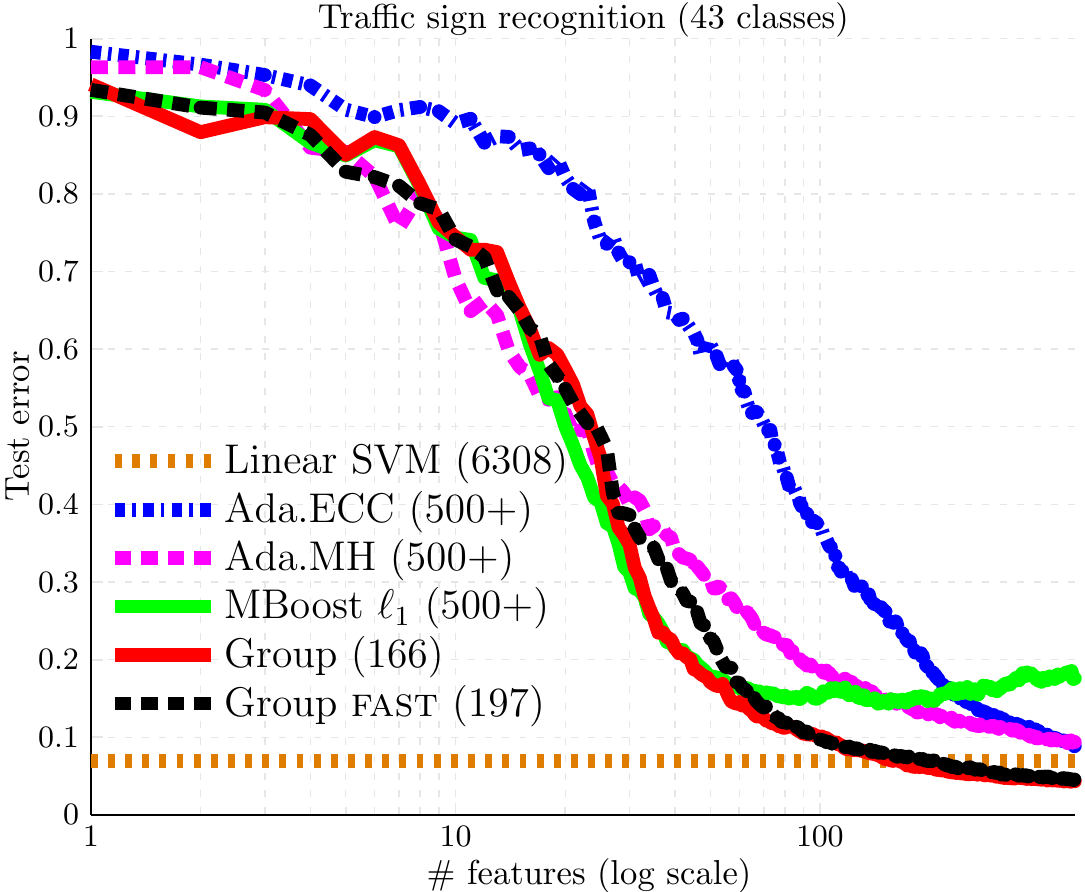}
    \caption{
    Performance of different classifiers on traffic sign recognition data sets.
    We also report the number of features needed to achieve a similar
    accuracy to the linear SVM. Both of our methods outperform other
    multi-class methods in terms of the test error.
    }
    \label{figCVPR12:traffic}
    \end{figure}

\section{Conclusion}

\label{sec:conc}

    In this work, we have presented a direct formulation for
    multi-class boosting.
    We derive the Lagrange dual of the formulated
        primal optimization problem.
    Based on the dual problem, we are able to
    design fully-corrective boosting
    using the column generation technique.
    At each iteration, all weak classifiers' weights are updated.
    We then generalize our approach and propose a new feature-sharing
    multi-class boosting method.
    The proposed boosting is based on the primal-dual view of the group
    sparsity regularized optimization.
    Various experiments on a few different data sets demonstrate
    that our direct
    multi-class boosting achieves competitive  test
    accuracy compared with other existing multi-class boosting.

    Future research topics include how to efficiently
    solve the convex optimization problems of the proposed
    multi-class boosting. Conventional multi-class boosting
    do not need to solve convex optimization at each step and
    thus much faster.
    We also want to explore the possibility of structural
    learning with boosting by extending the proposed
    multi-class boosting framework.


\end{document}